\newcommand*\cM{\mathcal{M}}
\newcommand*\cU{\mathcal{U}}
\newcommand*\cB{\mathcal{B}}
\newcommand*\cT{\mathcal{T}}
\newcommand*\C{\mathcal{C}}
\newcommand*\gr{\nabla{F_i(w_t)}}
\newcommand*\pr{p_i(w_t)}
\newcommand*\er{e_i(t)}
\newcommand*\err{e_i(t+1)}
\pgfplotsset{width=5.3cm,compat=1.9}
\DeclareSymbolFont{extraup}{U}{zavm}{m}{n}
\DeclareMathSymbol{\vardiamond}{\mathalpha}{extraup}{87}
\newtheorem{theorem}{Theorem}
\newtheorem{assumption}{Assumption}
\newtheorem{proposition}{Proposition}
\newtheorem{lemma}{Lemma}
\newtheorem{corollary}{Corollary}
\newtheorem{definition}{Definition}
\newtheorem{remark}{Remark}
\long\def\comment#1{}
\newcommand{\sign}{\mathsf{sign}}
\newcommand{\E}{\ensuremath{{\mathbb{E}}}}
\newcommand{\Prob}{\ensuremath{{\mathbb{P}}}}
\DeclareMathOperator*{\argmin}{argmin}
\newcommand{\real}{\ensuremath{\mathbb{R}}}
\begin{document}

\title{Communication-Efficient and Byzantine-Robust Distributed Learning with Error Feedback}

\author{Avishek  Ghosh,
        Raj Kumar  Maity,
        Swanand Kadhe,
        Arya Mazumdar, 
        and Kannan Ramchandran
        % <-this % stops a space
\thanks{Avishek Ghosh is with Halicioglu Data Science Institute, UC San Diego (work done while a PhD student at UC Berkeley). }% <-this % stops a space
\thanks{Raj Kumar Maity is with CS department, UMass Amherst}% <-this % stops a space
\thanks{Arya Mazumdar is with Halicioglu Data Science Institute, UC San Diego.}
\thanks{Swanand Kadhe and Kannan Ramachandran are with the EECS department, UC Berkeley.}
%\thanks{ This full paper is available at \href{https://tinyurl.com/nr93w6mh} .  Contact a2ghosh$@$ucsd.edu for further questions. }

}

%\vspace*{.15in}

%{\large{
%\begin{tabular}{ccc}
%Avishek Ghosh$^{\star}$, & Raj Kumar Maity $^{\dagger}$,& Swanand Kadhe $^{\star}$ \\
%& \hspace{-35mm} Arya Mazumdar$^{\dagger}$ & \hspace{-28mm} and   Kannan Ramchandran$^\star$
%\end{tabular}
%}}
%\vspace*{.15in}

%\begin{tabular}{c}
%Department of Electrical Engineering and Computer Sciences, UC Berkeley$^\star$ \\
%College of Information and Computer Sciences, UMASS Amherst$^\dagger$
%\end{tabular}

%\vspace*{.2in}

%\today

%\vspace*{.11in}

%\title{Communication-Efficient and Byzantine-Robust Distributed Learning}
%\date{}
%\author[]{Avishek Ghosh}
%\author[]{\hspace{2mm}Raj Kumar Maity}
%\author[]{\hspace{2mm}Swanand Kadhe}
%\author[]{\hspace{2mm}Arya Mazumdar}
%\author[]{Kannan Ramchandran}
%\affil[]{\{avishek\_ghosh,~swanand.kadhe,~kannanr\}@berkeley.edu, \{rajkmaity,~arya\}@cs.umass.edu}
%\affil[]{Department of Electrical Engineering and Computer Sciences, UC Berkeley \\
%Department of Computer Science, UMASS}

\markboth{IEEE Journal on Selected Areas in Information Theory }%
{Shell \MakeLowercase{\textit{et al.}}: Bare Demo of IEEEtran.cls for IEEE Journals}

\maketitle

\begin{abstract}
We develop a communication-efficient distributed learning algorithm that is robust against Byzantine worker machines. We propose and analyze a distributed gradient-descent algorithm that performs a simple thresholding based on gradient norms to mitigate Byzantine failures. We show the (statistical) error-rate of our algorithm matches that of Yin et al.~\cite{dong}, which uses more complicated schemes (coordinate-wise median, trimmed mean). Furthermore, for communication efficiency, we consider a generic class of $\delta$-approximate compressors from Karimireddi et al.~\cite{errorfeed} that encompasses sign-based compressors and top-$k$ sparsification. Our algorithm uses compressed gradients and gradient norms for aggregation and Byzantine removal respectively. We establish the statistical error rate for  non-convex smooth loss functions. We show that, in certain range of the compression factor $\delta$, the (order-wise) rate of convergence is not affected by the compression operation.
Moreover, we analyze the compressed gradient descent algorithm with error feedback (proposed in \cite{errorfeed}) in a distributed setting and in the presence of Byzantine worker machines. We show that exploiting error feedback  improves the statistical error rate. Finally, we experimentally validate our results and show good performance in convergence for convex (least-square regression) and non-convex (neural network training) problems. 
\end{abstract}

\begin{IEEEkeywords}
Distributed optimization, communication efficiency,  Byzantine resilience, error feedback.
\end{IEEEkeywords}

\IEEEpeerreviewmaketitle

\section{Introduction}
\label{sec:intro}

In many real-world applications, the size of training datasets has grown significantly over the years to the point that it is becoming crucial to implement learning algorithms in a distributed fashion. A commonly used distributed learning framework is data parallelism, in which large-scale datasets are distributed over multiple {\it worker machines} for parallel processing in order to speed up computation. In other applications such as {\it Federated Learning} (~\cite{konevcny2016federated}), the data sources are inherently distributed since the data are stored locally in users' devices.  

In a standard distributed gradient descent framework, a set of worker machines store the data, perform local computations, and communicate gradients to the central machine (e.g., a parameter server). The central machine processes the results from workers to update the model parameters. Such distributed frameworks need to address the following two fundamental challenges.
First, the gains due to parallelization are often bottlenecked in practice by heavy communication overheads between workers and the central machine. This is especially the case for large clusters of worker machines or for modern deep learning applications using models with millions of parameters. Moreover, in Federated Learning, communication from a user device to the central server is directly tied to the user's upload bandwidth costs. 
%Therefore, it is of paramount importance to reduce communication overhead in distributed learning algorithms. 
%reducing communication overhead in distributed learning algorithms has recently received significant research attention. In particular, various quantization or sparsification techniques have been developed to reduce communication overhead~\cite{alistarh2018convergence, stich2018sparsified, ivkin2019communication, dme,atomo,terngrad,qsgd}.
Second, messages from workers are susceptible to errors due to hardware faults or software bugs, stalled computations, data crashes, and unpredictable communication channels. In scenarios such as Federated Learning, users may as well be malicious and act adversarially. The inherent unpredictable (and potentially adversarial) nature of compute units is typically modeled as \emph{Byzantine failures}. Even if a single worker is Byzantine, it can be fatal to most learning algorithms (\cite{lamport}).
%Hence, it is crucial to develop distributed algorithms that are robust to Byzantine failures.
%developing Byzantine-robust learning algorithms have . 

Both these challenges, communication efficiency and Byzantine-robustness, have recently attracted significant research attention, albeit mostly separately. In particular, several recent works have proposed various quantization or sparsification techniques to reduce the communication overhead (\cite{alistarh2018convergence, stich2018sparsified, ivkin2019communication, dme,atomo,terngrad,qsgd,vqsgd}). The goal of these quantization schemes is to compute an unbiased estimate of the gradient with bounded second moment in order to achieve good convergence guarantees. The problem of developing Byzantine-robust distributed algorithms has been considered in~\cite{alistarh2017communication,su,feng,chen,dong, dong1,blanchard2017byzantine,ghosh2019robust}.

A notable exception to considering communication overhead separately from Byzantine robustness is the recent work
 of~\cite{anima}. In this work, a sign-based compression algorithm {\it signSGD} of \cite{signsgd} is shown to be Byzantine fault-tolerant. The main idea of {\it signSGD} is to communicate the coordinate-wise signs of the gradient vector to reduce communication and employ a majority vote during the aggregation to mitigate the effect of Byzantine units. However, {\it signSGD} suffers from two major drawbacks. First, sign-based algorithms do not converge in general (\cite{errorfeed}). In particular, \cite[Section 3]{errorfeed} presents several convex counter examples where {\it signSGD} fails to converge even though \cite[Theorem 2]{anima} shows convergence guarantee for non-convex objective under certain assumptions. Second, {\it signSGD} can handle only a limited class of adversaries, namely {\it blind multiplicative adversaries} (\cite{anima}). Such an adversary manipulates the gradients of the worker machines by multiplying it (element-wise) with a vector that can scale and randomize the sign of each coordinate of the gradient. However, the vector must be chosen before observing the gradient (hence `blind'). 
 In a very recent work  \cite{zhu2021broadcast}, authors address the problem of  stochastic and compression noise in the presence of Byzantine machines  and propose BROADCAST, a variance reduction method with  gradient difference compression scheme.

 In this work, we develop communication-efficient and robust learning algorithms that overcome both these drawbacks\footnote{We compare our algorithm with {\it signSGD} in Section~\ref{sec:experiment}.}. Specifically, we consider the following distributed learning setup. There are $m$ worker machines, each storing $n$ data points. The data points are generated from some unknown distribution $\mathcal{D}$. The objective is to learn a parametric model that minimizes a population loss function $F:\mathcal{W}\rightarrow \mathbb{R}$, where $F$ is defined as an expectation over $\mathcal{D}$, and $\mathcal{W}\subseteq\mathbb{R}^d$ denotes the parameter space. We choose the loss function $F$ to be non-convex. With the rapid rise of the neural networks, the study of \emph{local minima} in non-convex optimization framework  has become  imperative \cite{soudry2016bad,ge_etal}.  
For gradient compression at workers, we consider the notion of a $\delta$-approximate compressor from~\cite{errorfeed} that encompasses sign-based compressors like QSGD (\cite{qsgd}), $\ell_1$-QSGD (\cite{errorfeed}) and top-$k$ sparsification (\cite{stich2018sparsified}). 
We assume that $0\leq \alpha < 1/2$ fraction of the worker machines are Byzantine.  In contrast to blind multiplicative adversaries, we consider unrestricted adversaries. 

Our key idea is to use a simple threshold (on local gradient norms) based Byzantine resilience scheme instead of robust aggregation methods such as coordinate wise median or trimmed mean of \cite{dong}. We mention  that similar ideas are used in \emph{gradient clipping}, where gradients with norm more than a threshold is truncated. This is used in applications like training neural nets \cite{qian2021understanding} to handle the issue of exploding gradients, and in differentially private SGD \cite{chen2020understanding}, to limit the sensitivity of the gradients\footnote{ Note that although gradient clipping and norm based thresholding have some similarities, they are not identical. In gradient clipping, although we scale down (clip) the gradients, we retain them. On the other hand, in norm based thresholding, we aim to identify the Byzantine machines and remove them. Note that in our learning framework, we have $\alpha$ fraction of Byzantine workers, and an estimate of $\alpha$ is known to the learning algorithm. When $\alpha$ is very close to $0$, our learning algorithm does not trim worker machines, and the effect of all gradients are considered. If we employ gradient clipping in this regime, depending on the threshold used in the clipping operation, some gradients may be scaled back. As a result, the convergence rate will suffer. On the other hand, suppose $\alpha$ is large. In this regime, our algorithm tend to identify and remove the influence of the Byzantine workers, where gradient clipping would scale them down, but retain term in the learning process. This could potentially slow down the learning as the Byzantine machines may send \emph{any arbitrary} updates, which are different for the actual gradient norms and directions. Hence, in both the regimes, the knowledge of $\alpha$ helps our algorithm to handle the Byzantine workers graciously compared to the gradient clipping operation.}.

Our main result is to show that, for a wide range of compression factor $\delta$, the statistical error rate of our proposed threshold-based scheme is (order-wise) identical to the case of no compression considered in~\cite{dong}. In fact, our algorithm achieves order-wise optimal error-rate in parameters $(\alpha,n,m)$.
Furthermore, to alleviate convergence issues associated with sign-based compressors, we employ the technique of error-feedback from \cite{errorfeed}. In this setup, the worker machines store the difference between the actual and compressed gradient and add it back to the next step so that the \emph{correct} direction of the gradient is not forgotten.  We show that using error feedback with our threshold based Byzantine resilience scheme not only achieves better statistical error rate but also improves the rate of convergence. We outline our specific contributions in the following.

\vspace{2mm}
{\bf Our Contributions:} We propose a communication-efficient and robust distributed gradient descent (GD) algorithm. The algorithm takes as input the gradients compressed using a $\delta$-approximate compressor along with the norms\footnote{We can handle any convex norm.} (of either compressed or uncompressed gradients), and performs a simple thresholding operation  based on gradient norms to discard $\beta > \alpha$ fraction of workers with the largest norm values. We establish the statistical error rate of the algorithm for arbitrary smooth population loss functions as a function of the number of worker machines $m$, the number of data points on each machine $n$, dimension $d$, and the compression factor $\delta$. In particular, we show that our algorithm achieves the following statistical error rate\footnote{Throughout the paper $\mathcal{O}(\cdot)$ hides multiplicative constants, while $\tilde{\mathcal{O}}(\cdot)$ further hides logarithmic factors.} for the regime $\delta > 4\beta + 4\alpha -8\alpha^2 + 4\alpha^3$:
\begin{align}
\tilde{\mathcal{O}} \left( d^2 \left[ \frac{\alpha^2}{n} + \frac{1-\delta}{n} + \frac{1}{mn}\right ] \right).
\label{eqn:error-rate}
\end{align}
We first note that when $\delta =1$ (uncompressed), the error rate is $\tilde{\mathcal{O}}(d^2[\frac{\alpha^2}{n}+\frac{1}{mn}])$, which matches \cite{dong}. Notice that we use a simple threshold (on local gradient norms) based Byzantine resilience scheme in contrast with the coordinate wise median or trimmed mean of \cite{dong}. We note that for a fixed $d$ and the compression factor $\delta$ satisfying $\delta \geq 1 - \alpha^2$, the statistical error rate become $\tilde{\mathcal{O}}(\frac{\alpha^2}{n}+\frac{1}{mn})$, which is order-wise identical to the case of no compression~\cite{dong}. In other words, in this parameter regime, the compression term does not contribute (order-wise) to the statistical error. Moreover, it is shown in~\cite{dong} that, for strongly-convex loss functions and a fixed $d$, no algorithm can achieve an error lower than $\tilde{\Omega}(\frac{\alpha^2}{n}+\frac{1}{mn})$, implying that our algorithm is order-wise optimal in terms of the statistical error rate in the parameters $(\alpha,n,m)$.

Furthermore, we strengthen our distributed learning algorithm by using error feedback to correct the direction of the local gradient. We show (both theoretically and via experiments) that using error-feedback with a $\delta$-approximate compressor indeed speeds up the convergence rate and attains better (statistical) error rate.
Under the assumption that the gradient norm of the local loss function is upper-bounded by $\sigma$, we obtain the following (statistical) error rate:
\begin{align*}
\tilde{\mathcal{O}} \left( d^2 \left[ \frac{\alpha^2}{n} + \frac{(1-\delta) \sigma^2}{d^2 \,\delta} + \frac{1}{mn}\right ] \right)
\end{align*} 
provided a similar $(\delta,\alpha)$ trade-off\footnote{See Theorem~\ref{thm:non_convex_err} for details.}. We note that in the no-compression setting $(\delta=1)$, we recover the $\tilde{\mathcal{O}}(\frac{\alpha^2}{n}+\frac{1}{mn})$ rate. In experiments (Section~\ref{sec:experiment}), we  see that adding error feedback indeed improves the performance of our algorithm.

We experimentally evaluate our algorithm for convex and non-convex losses. For the convex case, we choose the linear regression problem, and for the non-convex case, we train a ReLU activated feed-forward fully connected neural net. We compare our algorithm with the non-Byzantine case and {\it signSGD} with majority vote, and observe that our algorithm converges faster using the standard MNIST dataset.

A major technical challenge of this paper is to handle compression and the Byzantine behavior of the worker machines simultaneously. We build up on the techniques of \cite{dong} to control the Byzantine machines. In particular, using certain distributional assumption on the partial derivative of the loss function and exploiting uniform bounds via careful covering arguments, we show that the local gradient on a non-Byzantine worker machine is close to the gradient of the population loss function.

Note that in some settings, our results may not have an optimal dependence on dimension $d$. This is due to the norm-based Byzantine removal schemes.
%Also, for a generic $\delta$-approximate compressors, we assume a restricted class of adversaries. 
Obtaining optimal dependence on $d$ is an interesting future direction. 

%Finally we experimentally demonstrate our method in different setup with convex and non-convex loss. For the convex loss we choose linear regression problem and for the byzantine affect we add Gaussian noise to the gradient of the worker node. We compare our algorithm with the vanilla SGD and signSGD with majority vote and show better convergence. Also we experimentally establish the error feedback scheme along with the norm based byzantine removal to be a good method  in byzantine setup despite the lack of theoretical guarantee. For non-convex problem, we train ReLU activated neural net and show good performance in minimizing training loss.    

\vspace{2mm}
{\bf Organization:} We describe the problem formulation in Section ~\ref{sec:setup}, and give a brief overview of $\delta$-compressors in Section ~\ref{sec:compression}. Then, we present our proposed algorithm in Section ~\ref{sec:algo}. We analyze the algorithm, first, for a {\em restricted} (as described next) adversarial model in Section ~\ref{sec:restricted}, and in the subsequent section, remove this restriction.
 In Section~\ref{sec:restricted}, we restrict our attention to an adversarial model in which Byzantine workers can provide arbitrary values as an input to the compression algorithm, but they correctly implement the same compression scheme as mandated. 
%A Byzantine worker can act maliciously by supplying an arbitrary vector as an input to its compressor, however the compression happens correctly.
%Though this adversarial model is restricted, we argue that it is well-suited for applications wherein compression happens outside of worker machines. For example, Apache MXNet, a deep learning framework designed to be distributed on cloud infrastructures, uses  NVIDIA Collective Communication Library (NCCL) that employs gradient compression (see \cite{MXNet}). 
%Also, in a Federated Learning setup the compression can be part of the communication protocol.  Furthermore, this can happen when worker machines are divided into groups, and each group is associated with a {\it compression unit}. As an example, cores in a multi-core processor (\cite{kangwook_core}) acting as a group of worker machines with the compression carried out by a separate processor, or servers co-located on a rack (\cite{costa2015r}) acting as a group with the compression carried out by the top-of-the-rack switch. 
In Section ~\ref{sec:arbitrary}, we remove this restriction on the Byzantine machines. 
%Here, we change the learning algorithm slightly to accommodate arbitrary Byzantine nodes. 
As a consequence, we observe (in Theorem~\ref{thm:non_convex1}) that   the modified algorithm works under a stricter assumption, and performs slightly worse than the one in restricted adversary setting.  
%To alleviate the convergence issues of sign based optimization algorithms (e.g., {\it signSGD}), \cite{errorfeed} proposes a class of optimization algorithms that use the error in compression as feedback. In this setup, the worker machines store the difference between the actual and compressed gradient and add it back to the next step so that the \emph{correct} direction of the gradient is not forgotten. 
In Section~\ref{sec:error_feedback}, we strengthen our algorithm by including error-feedback at worker machines, and provide statistical guarantees for non-convex smooth loss functions. We show that error-feedback indeed improves the performance of our optimization algorithm in the presence of arbitrary adversaries.

\subsection{Related Work}

\paragraph{Gradient Compression:} The foundation of gradient quantization was laid in  \cite{strom2015scalable, seide20141}. In the work of \cite{qsgd,terngrad,atomo} each co-ordinate of the gradient vector is represented with a small number of bits. Using this, an unbiased estimate of the gradient is computed. In these works, the communication cost is $\Omega(\sqrt{d})$ bits. In \cite{dme}, a quantization scheme was proposed for distributed  mean estimation. The tradeoff between communication and accuracy is studied in \cite{mjordan}. Variance reduction in communication efficient stochastic distributed  learning has been studied in \cite{diana19}. Sparsification techniques are also used instead of quantization to  reduce communication cost. Gradient sparsification has beed studied in \cite{stich2018sparsified, alistarh2018convergence, ivkin2019communication} with provable guarantees. The main idea is to communicate top components of the $d$-dimensional local gradient to get good estimate of the true global gradient.

\paragraph{Byzantine Robust Optimization:} In the distributed learning context, a generic framework of one shot median based robust learning has been proposed in \cite{feng}. In \cite{chen} the issue of Byzantine failure is tackled by grouping the servers in batches and computing the median of batched servers. Later in \cite{dong, dong1}, co-ordinate wise median, trimmed mean and iterative filtering based algorithm have been proposed and optimal statistical error rate is obtained. Also, \cite{mhamdi2018hidden, Damaskinos} considers adversaries may steer convergence to bad local minimizers. In this work, we do not assume such adversaries.

Gradient compression and Byzantine robust optimization have simultaneously been addressed in a recent paper \cite{anima}. Here, the authors use {\it signSGD} as compressor and majority voting as robust aggregator. As explained in \cite{errorfeed}, {\it signSGD} can run into convergence issues. Also, \cite{anima} can handle a restricted class of adversaries that are \emph{multiplicative} (i.e., multiply each coordinate of gradient by arbitrary scalar) and \emph{blind} (i.e., determine how to corrupt the gradient before observing the true gradient). In this paper, for compression, we use a generic $\delta$ approximate compressor. Also, we can handle arbitrary Byzantine worker machines.

Very recently, \cite{errorfeed} uses error-feedback to remove some of the issues of sign based compression schemes. In this work, we extend the framework to a distributed setting and prove theoretical guarantees in the presence of Byzantine worker machines.

\paragraph{Notation:} Throughout the paper, we assume $ C,C_1,C_2,.., c,c_1,..$ as positive universal constants, the value of which may differ from instance to instance. $[r]$ denotes the set of natural numbers $\{1,2,..,r\}$. Also, $\|.\|$ denotes the $\ell_2$ norm of a vector and the operator norm of a matrix unless otherwise specified.

\section{Problem Formulation}
\label{sec:setup}

In this section, we formally set up the problem. We consider a standard statistical problem of risk minimization. In a distributed setting, suppose we have one central and $m$ worker nodes and the worker nodes 
%can only 
communicate to the central node. Each worker node contains $n$ data points. We assume that the $m n$ data points are sampled independently from some unknown distribution $\mathcal{D}$. Also, let $f(w,z)$ be the non-convex loss function of a parameter vector $w \in \mathcal{W} \subseteq \real^d$ corresponding to data point $z$, where $\mathcal{W}$ is the parameter space. Hence, the population loss function is $F(w) =  \E_{z \sim \mathcal{D}}[f(w,z)]$. Our goal is to obtain the following:
\begin{align*}
w^* = \mathrm{argmin}_{w \in \mathcal{W}} F(w),
\vspace{-3mm}
\end{align*}
where we assume $\mathcal{W}$ to be a convex and compact subset of $\real^d$ with diameter $D$. In other words, we have $\|w_1 -w_2 \| \leq D$ for all $w_1,w_2 \in \mathcal{W}$. 
%\textcolor{cyan}{Reviewer 2: Not happy with the problem formulation. All the functions are closed to one another.}
Each worker node is associated with a local loss defined as $F_i(w) = \frac{1}{n}\sum_{j=1}^n f(w,z^{i,j})$, where $z^{i,j}$ denotes the $j$-th data point in the $i$-th machine. This is precisely the empirical risk function of the $i$-th worker node.

 We assume a setup where worker $i$ compresses the local gradient and sends to the central machine. The central machine aggregates the compressed gradients, takes a gradient step to update the model and broadcasts the updated model to be used in the subsequent iteration. Furthermore, we assume that $\alpha$ fraction of the total workers nodes are Byzantine, for some $\alpha <1/2$. Byzantine workers can send any arbitrary values to the central machine. In addition, Byzantine workers may completely know the learning algorithm and are allowed to collude with each other.

Next, we define a few (standard) quantities that will be required in our analysis.

\begin{definition}
(Sub-exponential random variable) A zero mean random variable $Y$ is called $v$-sub-exponential if $\E [e^{\lambda Y}] \leq e^{\frac{1}{2}\lambda^2 v^2}$, for all $|\lambda| \leq \frac{1}{v}$.
\end{definition}

\begin{definition}
(Smoothness) A function $h(.)$ is $L_F$-smooth if $ h(w) \leq h(w') + \langle \nabla h(w'), w -w' \rangle +\frac{L_F}{2}\|w - w' \|^2$ $\forall \,  w$, $w'$.
\end{definition}

\begin{definition}
(Lipschitz) A function $h(.)$ is $L$-Lipschitz if $\|h(w) -h(w')\| \leq L\|w -w'\|$  $\forall \, w$, $w'$.
\end{definition}

\section{Compression At Worker Machines}
\label{sec:compression}
In this section, we consider a generic class of compressors from~\cite{stich2018sparsified} and \cite{errorfeed} as described in the following.

%We first describe a class of compressors known as $\delta$ approximate compressor (\cite{stich2018sparsified,errorfeed}).
 
\begin{definition}[$\delta$-Approximate Compressor]
\label{asm:compress}
An operator $\mathcal{Q}(.): \real^d \rightarrow \real^d$ is defined as $\delta$-approximate compressor on a set $\mathcal{S} \subseteq \real^d$ if,  $\forall \, x \in \mathcal{S}$,
\begin{align*}
\|\mathcal{Q}(x) -x \|^2 \leq (1-\delta) \|x\|^2,
\end{align*}
where $\delta \in (0,1]$ is the compression factor.
\end{definition} 
\noindent Furthermore, a randomized operator $\mathcal{Q}(.)$ is $\delta$-approximate compressor on a set $\mathcal{S} \subseteq \real^d$ if, 
\begin{align*}
\mathbb{E}\left( \|\mathcal{Q}(x) -x \|^2 \right) \leq (1-\delta)\|x\|^2
\end{align*}
holds for all $x \in \mathcal{S}$, where the expectation is taken with respect to the randomness of $\mathcal{Q}(.)$. In this paper, for the clarity of exposition, we consider the deterministic form of the compressor (as in Definition~\ref{asm:compress}). However, the results can be easily extended for randomized $\mathcal{Q}(.)$.

Notice that $\delta=1$ implies $\mathcal{Q}(x)=x$ (no compression). We list a few examples of $\delta$-approximate compressors (including a few from \cite{errorfeed}) here:

\begin{enumerate}
\item top$_k$ operator, which selects $k$ coordinates with largest absolute value; for $1\leq k \leq d$, $(\mathcal{Q}(x))_i = (x)_{\pi(i)}$ if $i \leq k$, and $0$ otherwise, where $ \pi$ is a permutation of $[d]$ with $(|x|)_{\pi(i)} \geq (|x|)_{\pi(i+1)}$ for $i \in [d-1]$. This is a $k/d$-approximate compressor.

\item $k$-PCA that uses top $k$ eigenvectors to approximate a matrix $X$ (\cite{atomo}).

\item Quantized SGD (QSGD) \cite{qsgd}, where $\mathcal{Q}(x_i) = \| x\|\cdot \sign(x_i)\cdot \xi_i(x)$, where $\sign(x_i)$ is the coordinate-wise sign vector, and $\xi_i(x)$ is defined as following: let $0 \leq l_i \leq s $, be an integer such that $|x_i|/\|x\| \in [l_i/s, (l_i+1)/s]$. Then, $\xi_i = l_i/s$ with probability $1- \frac{|x_i|}{c\|x\|\sqrt{d}}+ l_i$ and $(l+1)/s$ otherwise. \cite{qsgd} shows that it is a $1-\min(d/s^2, \sqrt{d}/s)$-approximate compressor.

\item Quantized SGD with $\ell_1$ norm \cite{errorfeed}, $\mathcal{Q}(x) = \frac{\|x\|_1}{d} \sign(x)$, which is $\frac{\|x\|_1^2}{d\|x\|^2}$-approximate compressor. In this paper, we call this compression scheme as $\ell_1$-QSGD.
\end{enumerate}

Apart from these examples, several randomized compressors are also discussed in~\cite{stich2018sparsified}.
Also,  the \emph{signSGD}  compressor, $\mathcal{Q}(x)=\sign(x)$, where $\sign(x)$ is the (coordinate-wise) sign operator, was proposed in \cite{anima_signsgd, signsgd}. Here the local machines send a $d$-dimensional vector containing coordinate-wise sign of the gradients.

\begin{algorithm}[t!]
  \caption{Robust Compressed Gradient Descent}
  \begin{algorithmic}[1]
 \STATE  \textbf{Input:} Step size $\gamma$, Compressor $\mathcal{Q}(.)$, $q >1$, $\beta < 1$. Also define,
 \begin{equation}
  \mathcal{C}(x) =
    \begin{cases}
      \{ \mathcal{Q}(x), \|x\|_q \} \quad \forall x \in \mathbb{R}^d  & \text{Option I}\\
        \mathcal{Q}(x)  \qquad \qquad \forall x \in \mathbb{R}^d  & \text{Option II}
    \end{cases}  \nonumber     
\end{equation}
 \STATE \textbf{Initialize:} Initial iterate $w_0 \in \mathcal{W}$ \\
  \FOR{$t=0,1, \ldots, T-1 $}
  \STATE \underline{Central machine:} broadcasts $w_t$  \\
  \textbf{ for $ i \in [m]$ do in parallel}\\
  \STATE \underline{$i$-th worker machine:} 
    \begin{itemize}
    \item Non-Byzantine:
    \begin{itemize}
     \item Computes $\nabla F_i (w_t)$; sends $\mathcal{C}(\nabla F_i(w_t))$ to the central machine,
    \end{itemize}
        
        \item Byzantine:
        \begin{itemize}
        \item Generates $\star$ (arbitrary), and sends $\mathcal{C}(\star)$ to the central machine:  Option I,
        \item Sends $\star$ to the central machine:  Option II,
        \end{itemize}         
        \end{itemize}
    \textbf{end for}
\STATE \underline{Central Machine:}
    \begin{itemize}
        \item Sort the worker machines in a non decreasing order according to
\begin{itemize}
\item Local gradient norm:  Option I,
\item Compressed local gradient norm:  Option II,
\end{itemize}         
        \item Return the indices of the first $1-\beta$, fraction of elements as $\mathcal{U}_t$,
        \item Update model parameter: $w_{t+1} =  w_t - \frac{\gamma}{|\mathcal{U}_t|}\sum_{i\in \mathcal{U}_t} \mathcal{Q}(\nabla F_i (w_t)) $.
   \end{itemize}
  \ENDFOR
  \end{algorithmic}\label{alg:main_algo}
\end{algorithm}

\section{Robust Compressed Gradient Descent}
\label{sec:algo}
In this section, we describe a communication-efficient and robust distributed gradient descent algorithm for $\delta$-approximate compressors. The optimization algorithm we use is formally given in Algorithm~\ref{alg:main_algo}. Note that the algorithm uses a compression scheme $\mathcal{Q}(.)$ to reduce communication cost and a norm based thresholding to remove Byzantine worker nodes. The idea behing norm based thresholding is quite intuitive. Note that, if the Byzantine worker machines try to diverge the learning algorithm by increasing the norm of the local gradients; Algorithm~\ref{alg:main_algo} can identify them as outliers. Furthermore, when the Byzantine machines behave like inliers, they can not diverge the learning algorithm since they are only a few ($\alpha < 1/2$) in number. It turns out that this simple approach indeed works.

As seen in Algorithm~\ref{alg:main_algo}, robust compressed gradient descent operates under two different setting, namely {\em Option I} and {\em Option II}. Option I and II are analyzed in Sections~\ref{sec:restricted} and \ref{sec:arbitrary} respectively. For Option I, we use a $\delta$-approximate compressor along with the norm information. In particular, the worker machines send the pair denoted by $\mathcal{C}(x) = \{ \mathcal{Q}(x), \|x\|_q, \}$ where \footnote{Throughout the paper, we use $q=2$. However, any norm, i.e., $q \geq 1$ can be handled.} we have $q \geq 1$, to the center machine. $\mathcal{C}(x)$ is comprised of a scalar (norm of $x$) and a compressed vector $\mathcal{Q}(x)$. For compressors such as QSGD (\cite{qsgd}) and $\ell_1$-QSGD (\cite{errorfeed}), the quantity $\mathcal{Q}(.)$ has the norm information and hence sending the norm separately is not required.

As seen in Option I of Algorithm~\ref{alg:main_algo}, worker node $i$  compresses the local gradient $\nabla F_i(.)$ sends $\mathcal{C}(\nabla F_i(.))$ to the central machine. Adversary nodes can send arbitrary $\mathcal{C}(\star)$ to the central machine. The central machine aggregates the gradients, takes a gradient step and broadcasts the updated model for next iteration.

For Option I, we restrict to the setting where the Byzantine worker machines can send arbitrary values to the input of the compression algorithm, but they adhere to the compression algorithm. In particular, Byzantine workers can provide arbitrary values, $\star$ to the input of the compression algorithm, $\mathcal{Q}(.)$ but they correctly implement the same compression algorithm, i.e., computes $\mathcal{Q}(\star)$.

We now explain how Algorithm~\ref{alg:main_algo} tackles the Byzantine worker machines. The central machine receives the compressed gradients comprising a scalar ( $||x||_q ,q\ge1$) and a quantized vector ($\mathcal{Q}(x) $) and outputs a set of indices $\mathcal{U}$ with $|\mathcal{U}| = (1-\beta)m$. Here we employ a simple thresholding scheme on the (local) gradient norm.

Note that, if the Byzantine worker machines try to diverge the learning algorithm by increasing the norm of the local gradients; Algorithm~\ref{alg:main_algo} can identify them as outliers. Furthermore, when the Byzantine machines behave like inliers, they can not diverge the learning algorithm since $\alpha < 1/2$. In the subsequent sections, we show theoretical justification of this argument.

With Option II, we remove this restriction on Byzantine machines at the cost of slightly weakening the convergence guarantees. This is explained in Section~\ref{sec:arbitrary}. With Option II, the $i$-th local machine sends $\mathcal{C}= \{ \mathcal{Q}(\nabla F_i(w_t)), \|\mathcal{Q}(\nabla F_i(w_t))\|_q \}$ to the central machine, where $q \geq 1$.  Effectively, the $i$-th local machine just sends $ \mathcal{Q}(\nabla F_i(w_t))$ since its norm can be computed at the central machine. Byzantine workers just send arbitrary ($\star$) vector instead of compressed local gradient. Note that the Byzantine workers here do not adhere to any compression rule.

The Byzantine resilience scheme with Option II is similar to Option I except the fact that the central machine sorts the worker machines according to the norm of the compressed gradients rather than the norm of the gradients. 

\section{Distributed Learning with Restricted Adversaries}
\label{sec:restricted}
In this section, we analyze the performance of Algorithm~\ref{alg:main_algo} with \emph{Option I}. We restrict to an adversarial model in which Byzantine workers can provide arbitrary values to the input of the compression algorithm, but they adhere to the compression rule. 
Though this adversarial model is restricted, we argue that it is well-suited for applications wherein compression happens outside of worker machines. For example, Apache MXNet, a deep learning framework designed to be distributed on cloud infrastructures, uses  NVIDIA Collective Communication Library (NCCL) that employs gradient compression (see \cite{MXNet}). 
Also, in a Federated Learning setup the compression can be part of the communication protocol.  Furthermore, this can happen when worker machines are divided into groups, and each group is associated with a {\it compression unit}. As an example, cores in a multi-core processor (\cite{kangwook_core}) acting as a group of worker machines with the compression carried out by a separate processor, or servers co-located on a rack (\cite{costa2015r}) acting as a group with the compression carried out by the top-of-the-rack switch. 

%\textcolor{cyan}{Reviewer 2: Not happy with the restricted adversary}
\subsection{Main Results}
\label{sec:main_results}
We analyze Algorithm~\ref{alg:main_algo} (with Option I) and obtain the rate of the convergence under non-convex loss functions. We start with the following assumption.
\begin{assumption}
\label{asm:struct_loss}
For all $z$, the partial derivative of the loss function $f(.,z)$ with respect to the $k$-th coordinate (denoted as $\partial_k f(.,z))$ is $L_k$ Lipschitz with respect to the first argument for each $k \in [d]$, and let $\widehat{L}=\sqrt{\sum_{i=1}^d L_k^2}$. The population loss function $F(.)$ is $L_F$ smooth.
\end{assumption}
We also make the following assumption on the tail behavior of the partial derivative of the loss function.
\begin{assumption}
\label{asm:sub-exp}
(Sub-exponential gradients) For all $k \in [d]$ and $z$, the quantity $\partial_k f(w,z))$ is $v$ sub-exponential for all $w \in \mathcal{W}$.
\end{assumption}

The assumption implies that the moments of the partial derivatives are bounded. We like to emphasize that the sub-exponential assumption on gradients is fairly common (\cite{dong,su,wu}). For instance,  \cite[Proposotion 2]{dong} gives a concrete example of coordinate-wise sub-exponential gradients in the context of a regression problem. Furthermore, in \cite{dong1}, the gradients are assumed to be sub-gaussian, which is stronger than Assumption~\ref{asm:sub-exp}.

To simplify notation and for the clarity of exposition, we define the following three quantities which will be used throughout the paper.
\begin{align}
\begin{split}
\epsilon_1 &= v \sqrt{d}  ( \max \{ \frac{d}{n} \log(1+2nD\hat{L}d),\\
&\sqrt{\frac{d}{n} \log(1+2nD\hat{L}d)}\}  )  + \frac{1}{n}, \label{eqn:epsilon_def}
\end{split}
\end{align}

\small
\begin{align}
\begin{split}
    \epsilon_2 &= v \sqrt{d}  \bigg (\max \Big\{ \frac{d}{(1-\alpha)m n}\log (1+2(1-\alpha)m nD\hat{L}d), \\ &\sqrt{\frac{d}{(1-\alpha)m n}\log (1+2(1-\alpha)m nD\hat{L}d)} \Big\} \bigg ), \label{eqn:epsilon_tilde_def}
    \end{split}
\end{align} \normalsize
\begin{align}
\epsilon & = 2 \left( 1+\frac{1}{\lambda_0} \right) \bigg [ \left ( \frac{1-\alpha}{1-\beta} \right )^2 \epsilon_2^2  + \left(\frac{\sqrt{1-\delta}+\alpha + \beta}{1-\beta} \right )^2 \epsilon_1^2  \bigg ] .\label{eqn:phi_def}
\end{align}
where $\lambda_0$ is a positive constant. For intuition, one can think of $\epsilon_1 = \tilde{\mathcal{O}}(\frac{d}{\sqrt{n}})$ and $\epsilon_2 = \tilde{\mathcal{O}}(\frac{d}{\sqrt{mn}})$ as small problem dependent quantities. Assuming $\beta = c \alpha$ for a universal constant $c >1$, we have
\begin{align}
\epsilon = \tilde{\mathcal{O}} \left( d^2 \left[ \frac{\alpha^2}{n} + \frac{1-\delta}{n} + \frac{1}{mn}\right ] \right) \label{eqn:ep_order}.
\end{align}
\begin{assumption}
\label{asm:size_para}
(Size of parameter space $\mathcal{W}$) Suppose that $\| \nabla F(w) \| \leq M$ for all $w \in \mathcal{W}$. We assume that $\mathcal{W}$ contains the $\ell_2 $ ball $ \lbrace w: \|w - w_0 \| \leq c [(2 - \frac{c_0}{2})M + \sqrt{\epsilon}] \frac{F(w_0)-F(w^*)}{\epsilon}   \rbrace$, where $c_0$ is a constant, $\delta$ is the compression factor, $w_0$ is the initial parameter vector and $\epsilon$ is defined in equation~\eqref{eqn:phi_def}.
\end{assumption}
%\textcolor{cyan}{Reviewer (1,2): The assumption is contradictory. Either relax or remove the assumptions.}
We use the above assumption to ensure that the iterates of Algorithm~\ref{alg:main_algo} stays in $\mathcal{W}$. We emphasize that this is a standard assumption on the size of $\mathcal{W}$ to control the iterates for non-convex loss function. Note that, similar assumptions have been used in prior works \cite[Assumption 5]{dong}, \cite{dong1}. We point out that Assumption~\ref{asm:size_para} is used for simplicity and is not a hard requirement. We show (in the proof of Theorem~\ref{thm:non_convex}) that the iterates of Algorithm~\ref{alg:main_algo} stay in a bounded set around the initial iterate $w_0$. Also, note that the dependence  of $M$ in the final statistical rate (implicit, via diameter $D$) is logarithmic (weak dependence), as will be seen in Theorem~\ref{thm:non_convex}.
 Algorithm~\ref{alg:main_algo} for $T$ iterations with step size $\gamma = \frac{1}{L_F + \lambda_F}$ yields
%\begin{align*}
%\|w_T - w^* \| \leq \rho^{T/2}\|w_0 - w^* \| + \frac{\sqrt{2 \epsilon}}{(L_F + \lambda_F)(1-\rho^{1/2})}
%\end{align*}
%with probability greater than or equal to $1- \frac{c_1 T(1-\alpha)md}{(1+n \hat{L} D)^d} - \frac{c_2 dT}{(1+(1-\alpha)m n \hat{L} D)^d}$, provided $\delta > \delta_0 + 6\alpha - 4\alpha^2$. Here $ \rho = 2\left (1-\frac{2 L_F \lambda_F}{(\lambda_F + L_F)^2} \right)$ and $\epsilon$ is defined in equation~\eqref{eqn:phi_def}.
%\end{theorem}
%\begin{remark}
%Using AM-GM inequality, we get $\rho \in (0,1)$, and hence the theorem implies an exponential rate of convergence of Algorithm~\ref{alg:main_algo}. Note that, it is well known that gradient descent converges exponentially fast for a strongly convex and smooth function. \cite{dong} shows that the rate of convergence is preserved in Byzantine setting. Theorem~\ref{thm:convex} shows that  the exponential convergence holds even with compressed gradients in a Byzantine setting.
%\end{remark}
%
%\begin{remark}
%Running Algorithm~\ref{alg:main_algo} for $T \geq \mathcal{O}\left( \log \frac{L_F + \lambda_F}{\sqrt{\epsilon}}\|w_0 - w^* \| \right)$ iterations, we obtain a solution $ \widehat{w}=w_T$ such that $\|\widehat{w} - w^* \|^2 = \mathcal{O}(\epsilon)$.
%%
%%\swa{This is very confusing. We define $\epsilon$ in~\eqref{eqn:phi_def} and then argue that order-wise one has~\eqref{eqn:ep_order}. Then, what do we mean by "$\epsilon$ matches equation~\eqref{eqn:ep_order}"? It should always match, right? What message do we want to convey here?}
%\end{remark}

%\subsubsection{Non-convex Loss}

We provide the following rate of convergence to a critical point of the (non-convex) population loss function $F(.)$. 
\begin{theorem}
\label{thm:non_convex}
Suppose Assumptions~\ref{asm:struct_loss}, \ref{asm:sub-exp} and \ref{asm:size_para} hold, and $\alpha \leq \beta < 1/2$. For sufficiently small constant $c$, we choose the step size $\gamma = \frac{c}{L_F}$. Then, running Algorithm~\ref{alg:main_algo} for $T = C_3 \frac{L_F(F(w_0) - F(w^*))}{\epsilon}$ iterations yields
\begin{align*}
\min_{t=0,\ldots, T}\| \nabla F(w_t) \|^2 \leq C\, \epsilon,
\end{align*}
with probability greater than or equal to $1- \frac{c_1(1-\alpha)md}{(1+n \hat{L} D)^d} - \frac{c_2 d}{(1+(1-\alpha)m n \hat{L} D)^d}$, provided the compression factor satisfies $\delta > \delta_0 + 4\alpha -9\alpha^2 + 4\alpha^3$, where $\delta_0 = \left(1-\frac{(1-\beta)^2}{1+\lambda_0} \right)$ and $\lambda_0$ is a (sufficiently small) positive constant.
\end{theorem}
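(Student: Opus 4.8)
The plan is to run the standard non-convex analysis of gradient descent, but with the exact gradient replaced by the aggregated, compressed, Byzantine-filtered direction $g(w_t)\defn \frac{1}{|\cU_t|}\sum_{i\in\cU_t}\mathcal{Q}(\gr)$, and then to control how far $g(w_t)$ lies from the true population gradient $\Gr$. Since $F$ is $L_F$-smooth (Assumption~\ref{asm:struct_loss}), the update $w_{t+1}=w_t-\gamma g(w_t)$ yields $F(w_{t+1})\le F(w_t)-\gamma\langle\Gr,g(w_t)\rangle+\tfrac{L_F\gamma^2}{2}\|g(w_t)\|^2$. Writing $\langle\Gr,g(w_t)\rangle=\|\Gr\|^2-\langle\Gr,\Gr-g(w_t)\rangle\ge \|\Gr\|^2-\|\Gr\|\,\|g(w_t)-\Gr\|$, the whole theorem reduces to a deviation bound of the form $\|g(w_t)-\Gr\|^2\le (1+\lambda_0)\,a^2\,\|\Gr\|^2+\epsilon$, where $a=a(\delta,\alpha,\beta)$ is a contraction-type coefficient and $\epsilon$ is the statistical floor of \eqref{eqn:phi_def}; the threshold $\delta>\delta_0+4\alpha-9\alpha^2+4\alpha^3$ is exactly the condition forcing the net coefficient $(1+\lambda_0)a^2$ below $1$.

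The first ingredient is concentration. I would establish, uniformly over $w\in\mathcal{W}$ and with the stated probability, that (i) for each non-Byzantine machine $\|\gr-\Gr\|\le\epsilon_1$, and (ii) for the average over all $(1-\alpha)m$ good machines $\big\|\frac{1}{(1-\alpha)m}\sum_{i\in\cM}\gr-\Gr\big\|\le\epsilon_2$. Both follow from Assumption~\ref{asm:sub-exp} (coordinate-wise $v$-sub-exponential gradients) via a Bernstein bound at a fixed $w$, a union bound over the $d$ coordinates, and a covering argument over the diameter-$D$ set $\mathcal{W}$; Assumption~\ref{asm:struct_loss} (Lipschitz partials with constant $\hat L$) transfers the net bound to all of $\mathcal{W}$. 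This is the source of the $(1+n\hat L D)^d$ and $(1+(1-\alpha)mn\hat L D)^d$ factors in the failure probability.

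The core step is the decomposition of $g(w_t)$ exploiting the norm-based trimming. With $\cM,\cB$ the good and Byzantine index sets ($|\cB|=\alpha m<\beta m$), exactly $\beta m$ machines are discarded, so at least $(\beta-\alpha)m$ good machines are trimmed and at most $\alpha m$ Byzantine machines survive into $\cU_t$. I would split $g(w_t)-\Gr$ into: (a) compression error on surviving good machines, bounded by $\|\mathcal{Q}(x)-x\|\le\sqrt{1-\delta}\|x\|$ from Definition~\ref{asm:compress}; (b) the good-average statistical error, controlled by $\epsilon_2$ with prefactor $\tfrac{1-\alpha}{1-\beta}$; (c) the trimmed good machines, whose norms I bound by $\|\Gr\|+\epsilon_1$ via (i); and (d) the surviving Byzantine terms. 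For (d) I use the charging argument central to norm-based filtering: a surviving Byzantine machine has input-norm below the threshold, hence below the norm of some trimmed good machine, which by (i) is at most $\|\Gr\|+\epsilon_1$; combined with $\|\mathcal{Q}(x)\|\le(1+\sqrt{1-\delta})\|x\|$ this bounds each of the $\le\alpha m$ contributions. Collecting (a)--(d) and normalizing by $(1-\beta)m$ produces a part scaling with $\|\Gr\|$ of coefficient $a\approx\tfrac{\sqrt{1-\delta}+\alpha+\beta}{1-\beta}$ together with a statistical part built from $\tfrac{1-\alpha}{1-\beta}\epsilon_2$ and $\tfrac{\sqrt{1-\delta}+\alpha+\beta}{1-\beta}\epsilon_1$; squaring via the weighted Young inequality $(u+v)^2\le(1+\lambda_0)u^2+(1+\tfrac1{\lambda_0})v^2$ and then $(p+q)^2\le 2p^2+2q^2$ on the statistical piece reproduces $\epsilon$ of \eqref{eqn:phi_def} exactly. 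To finish, I would substitute into the smoothness inequality to obtain $F(w_{t+1})\le F(w_t)-c\gamma\|\Gr\|^2+\gamma\,\tilde c\,\epsilon$ (the $\gamma^2\|g\|^2$ term is absorbed since $\gamma=c/L_F$ is small), telescope over $t=0,\dots,T-1$ to get $\tfrac1T\sum_{t<T}\|\Gr\|^2\le\tfrac{F(w_0)-F(w^*)}{c\gamma T}+\tfrac{\tilde c}{c}\epsilon$, and choose $T=C_3 L_F(F(w_0)-F(w^*))/\epsilon$ to balance both terms at order $\epsilon$; then $\min_t\|\Gr\|^2\le\tfrac1T\sum_t\|\Gr\|^2$ gives the claim. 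Along the way I would use Assumption~\ref{asm:size_para} in an induction on $t$ to certify that every iterate stays in $\mathcal{W}$, so the uniform concentration bounds remain valid on the whole trajectory.

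The main obstacle is the multiplicative, rather than additive, nature of the error: both surviving Byzantine machines and trimmed good machines contribute terms scaling with $\|\Gr\|$ itself, so $g(w_t)$ cannot be treated as an unbiased-plus-bounded-noise estimate of $\Gr$. Driving the contraction coefficient $(1+\lambda_0)\big(\tfrac{\sqrt{1-\delta}+\alpha+\beta}{1-\beta}\big)^2$ strictly below $1$ and massaging $1-\delta<\big(\tfrac{1-\beta}{\sqrt{1+\lambda_0}}-\alpha-\beta\big)^2$ into the clean polynomial threshold $\delta>\delta_0+4\alpha-9\alpha^2+4\alpha^3$ (with $\delta_0=1-\tfrac{(1-\beta)^2}{1+\lambda_0}$) requires tight bookkeeping of all the $\alpha,\beta,\delta$ factors generated by the charging argument, and is where most of the technical effort will concentrate.
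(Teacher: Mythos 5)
Your proposal follows essentially the same route as the paper's proof: the same deviation quantity $\Delta = g(w_t)-\nabla F(w_t)$, the same two concentration lemmas ($\epsilon_1$ per good machine and $\epsilon_2$ for their average, via sub-exponential concentration plus a covering of $\mathcal{W}$), the same four-way decomposition with the charging argument for surviving Byzantine machines, the same Young-inequality squaring to get the contraction-plus-floor bound, and the same smoothness/telescoping finish with Assumption~\ref{asm:size_para} keeping the iterates in $\mathcal{W}$. The only (inconsequential) deviation is that you bound the trimmed good machines by $\|\nabla F_i(w_t)\|\le\|\nabla F(w_t)\|+\epsilon_1$ rather than by $\|\nabla F_i(w_t)-\nabla F(w_t)\|\le\epsilon_1$ as the paper does, which makes your gradient-scaling coefficient $\frac{\sqrt{1-\delta}+\alpha+\beta}{1-\beta}$ instead of the paper's tighter $\frac{\sqrt{1-\delta}+2\alpha}{1-\beta}$ and hence yields a marginally stricter (but order-wise identical) threshold on $\delta$.
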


A few remarks are in order. In the following remarks, we fix the dimension $d$, and discuss the dependence of $\epsilon$ on $(\alpha, \delta, n, m)$.
%\textcolor{cyan}{Reviewer 3: Worried about $d^2$ dependency. High dimensional problems. }

\begin{remark}
(Rate of Convergence) Algorithm~\ref{alg:main_algo} with $T$ iterations yields
\begin{align*}
    \min_{t=0,., T}\| \nabla F(w_t) \|^2 &\leq \frac{C_1 L_F(F(w_0) - F(w^*))}{T+1} + C_2 \epsilon
\end{align*}
with high probability. We see that Algorithm~\ref{alg:main_algo} converges at a rate of $\mathcal{O}(1/T)$, and finally plateaus at an error floor of $\epsilon$. Note that the rate of convergence is same as \cite{dong}. Hence, even with compression, the (order-wise) convergence rate is unaffected.
\end{remark}
\begin{remark}
We observe, from the definition of $\epsilon$ that the price for compression is $\tilde{\mathcal{O}}(\frac{1-\delta}{n})$.
\end{remark}

\begin{remark}
Substituting $\delta = 1$ (no compression) in $\epsilon$, we get $\epsilon = \tilde{\mathcal{O}}(\frac{\alpha^2}{n}+\frac{1}{mn})$, which matches the (statistical) rate of \cite{dong}.  A simple \emph{norm based thresholding} operation is computationally simple and efficient in the high dimensional settings compared to  the coordinate wise median and trimmed mean to achieve robustness and obtain the the same statistical error and iteration complexity as \cite{dong} 
\end{remark}
%\textcolor{cyan}{Reviewer 2: Not Happy with the statement that computing  median is complicated!!   }

\begin{remark}
When the compression factor $\delta$ is large enough, satisfying $\delta \geq 1 -\alpha^2$, we obtain  $\epsilon = \tilde{\mathcal{O}}(\frac{\alpha^2}{n}+\frac{1}{mn})$. In this regime, the iteration complexity and the final statistical error of Algorithm~\ref{alg:main_algo} is order-wise identical to the setting with no compression \cite{dong}. We emphasize here that a reasonable high $\delta$ is often observed in practical applications like training of neural nets \cite[Figure 2]{errorfeed}.
\end{remark}
%\textcolor{cyan}{Reviewer 2 : Not happy that compression does not affect the ultimate performance. WHY?? Stich et al}
\begin{remark}
(Optimality) For a distributed mean estimation problem, Observation 1 in \cite{dong} implies that any algorithm will yield an (statistical) error of $\Omega(\frac{\alpha^2}{n} + \frac{d}{mn})$. Hence, in the regime where $\delta \geq 1-\alpha^2$, our error-rate is optimal. 
\end{remark}
%One possible algorithm is to compress the gradients and run Algorithm~\ref{alg:main_algo}. \swa{Is this not implied due to the use of `any'?}This implies Using \cite{dong}, this would yield a statistical error of $\Omega(\frac{\alpha^2}{n} + \frac{d}{mn})$. In the regime where $\delta$ is a fixed constant, this error matches $\epsilon$ for a fixed $d$. Hence the dependence of $\epsilon$ on $\alpha$, $n$ and $m$ are \swa{are $\rightarrow$ is?} optimal and unimprovable \swa{optimal implies unimprovable, right?}. \swa{In general, this is written in a very confusing way. Please restructure. High level message is very simple: our error-rate is optimal in the regime when $\delta$ and $d$ are fixed.}
\begin{remark}
For the convergence of Algorithm~\ref{alg:main_algo}, we require $\delta > \delta_0 + 4\alpha - 9\alpha^2 + 4\alpha^3$, implying that our analysis will not work if $\delta$ is very close to $0$. Note that a very small $\delta$ does not give good accuracy in practical applications  \cite[Figure 2]{errorfeed}. Also, note that, from the definition of $\delta_0$, we can choose $\lambda_0$ sufficiently small at the expense of increasing the multiplicative constant in $\epsilon$ by a factor of $1/\lambda_0$. Since the error-rate considers asymptotic in $m$ and $n$, increasing a constant factor is insignificant. A sufficiently small $\lambda_0$ implies $\delta_0 = \mathcal{O}(2\beta)$, and hence we require $\delta > 4\alpha + 2\beta$ (ignoring the higher order dependence).
\end{remark}

\begin{remark}
The requirement $\delta > 4\alpha + 2\beta$ can be seen as a trade-off between the amount of compression and the fraction of adversaries in the system. As $\alpha$ increases, the amount of (tolerable) compression decreases and vice versa.
\end{remark}

\section{Distributed Optimization with Arbitrary Adversaries}
\label{sec:arbitrary}

%\begin{algorithm}[t!]
%  \caption{Robust Compressed Gradient Descent for Arbitrary Adversary}
%  \begin{algorithmic}[1]
% \STATE  \textbf{Input:} Step size $\gamma$, Compressor $\tilde{\mathcal{C}}(.)$, $\beta < 1$
% \STATE \textbf{Initialize:} Initial iterate $w_0$ \\
%  \FOR{$t=0,1, \ldots, T-1 $}
%  \STATE \underline{Central machine:} broadcasts $w_t$  \\
%  \textbf{ for $ i \in [m]$ do in parallel}\\
%  \STATE \underline{$i$-th worker machine:} 
%    \begin{itemize}
%        \item (Non-Byzantine) computes $\nabla F_i (w_t)$; sends $\tilde{\mathcal{C}}(\nabla F_i(w_t))$ to the central machine
%        \item (Byzantine) sends $\star$ (arbitrary) to the central machine 
%        \end{itemize}
%    \textbf{end for}
%\STATE \underline{Central Machine:}
%    \begin{itemize}
%        \item Sort the norm of compressed local gradients in a non decreasing order
%        \item Return the indices of the first $1-\beta$ fraction of elements as $\mathcal{U}_t$.
%        \item Update model parameter: $w_{t+1} = \Pi_{\mathcal{W}} \left ( w_t - \frac{\gamma}{|\mathcal{U}_t|}\sum_{i\in \mathcal{U}_t} \mathcal{Q}(\nabla F_i (w_t)) \right )$
%   \end{itemize}
%  \ENDFOR
%  \end{algorithmic}\label{alg:main_algo}
%\end{algorithm}

In this section we remove the assumption of restricted adversary (as in Section~\ref{sec:restricted}) and make the learning algorithm robust to the adversarial effects of both the computation and compression unit. In particular, here we consider Algorithm ~\ref{alg:main_algo} with Option II. Hence, the Byzantine machines do not need to adhere to the mandated compression algorithm.

In Option II, the worker machines send $\mathcal{Q}(\nabla F_i (w_t))$ to the center machine. The center machine computes its norm, and discards the top $\beta$ fraction of the worker machines having largest norm. Note that it is crucial that the center machine computes the norm of $\mathcal{Q}(\nabla F_i (w_t))$, instead of asking the worker machine to send it (similar to Option I). Otherwise, a Byzantine machine having a large $\|\mathcal{Q}(x)\|_q$ can (wrongly) report a small value of $\|\mathcal{Q}(x)\|_q$, gets selected in the trimming phase and influences (or can potentially diverge) the optimization algorithm. Hence, the center needs to compute $\|\mathcal{Q}(x)\|_q$ to remove such issues.

Although this framework is more general in terms of Byzantine attacks, however, in this setting, the statistical error-rate of our proposed algorithm is slightly weaker than that of Theorem~\ref{thm:non_convex}. Furthermore, the $(\delta,\alpha)$ trade-off is stricter compared to Theorem~\ref{thm:non_convex}.

\subsection{Main Results}
\label{sec:main_results_one}

%\subsubsection{Convex Loss}
%We now assume that the loss function $F(.)$ is convex, and our goal is to estimate $w^*$. We assume that $w^* \in \mathcal{W}$. We have the following result.
%\begin{theorem}
%\label{thm:convex1}
%Suppose that Assumptions~\ref{asm:struct_loss} and \ref{asm:sub-exp} hold and the loss function $F(.)$ is $\lambda_F$ strongly convex. Also suppose that $\alpha \leq \beta$. Then, running Algorithm~\ref{alg:main_algo} for $T$ iterations with step size $\gamma = \frac{1}{L_F + \lambda_F}$ yields
%\begin{align*}
%\|w_T - w^* \| \leq \rho^{T/2}\|w_0 - w^* \| + \frac{\sqrt{2 \epsilon}}{(L_F + \lambda_F)(1-\rho^{1/2})}
%\end{align*}
%with probability greater than or equal to $1- \frac{c_1 T(1-\alpha)md}{(1+n \hat{L} D)^d} - \frac{c_2 dT}{(1+(1-\alpha)m n \hat{L} D)^d}$, provided $\delta > \delta_0 + 6\alpha - 4\alpha^2$. Here $ \rho = 2\left (1-\frac{2 L_F \lambda_F}{(\lambda_F + L_F)^2} \right)$ and $\epsilon$ is defined in equation~\eqref{eqn:phi_def}.
%\end{theorem}
%
%
%
%
%
%\subsubsection{Non-convex Loss}
We continue to assume that the population loss function $F(.)$ is smooth and  non-convex and analyze Algorithm~\ref{alg:main_algo} with Option II. We have the following result. For the clarity of exposition, we define the following quantity which will be used in the results of this section:
\small
\begin{align*}
\widetilde{\epsilon} = 2(1+\frac{1}{\lambda_0})\bigg(\bigg(\frac{(1+\beta)\sqrt{1-\delta} +\alpha+\beta}{1-\beta} \bigg)^2\epsilon^2_1+(\frac{1-\alpha}{1-\beta})^2\epsilon_2^2 \bigg). \label{eqn:epsilon_tilde}
\end{align*}\normalsize 
Comparing $\widetilde{\epsilon}$ with $\epsilon$, we observe that $\widetilde{\epsilon} > \epsilon$. Also, note that,
\begin{align}
\widetilde{\epsilon} = \tilde{\mathcal{O}}\left( d^2 \left[ \frac{\alpha^2}{n} + \frac{1-\delta}{n} + \frac{1}{mn} \right] \right),
\end{align}
which suggests that $\widetilde{\epsilon}$ and $\epsilon$ are order-wise similar.
We have the following assumption, which parallels Assumption~\ref{asm:size_para}, with $\epsilon$ replaced by $\widetilde{\epsilon}$.
\begin{assumption}
\label{asm:size_para_one}
(Size of parameter space $\mathcal{W}$) Suppose that $\| \nabla F(w) \| \leq M$ for all $w \in \mathcal{W}$.  We assume that $\mathcal{W}$ contains the $\ell_2 $ ball $ \lbrace w: \|w - w_0 \| \leq c [(2 - \frac{c_0}{2})M + \sqrt{\widetilde{\epsilon}}] \frac{F(w_0)-F(w^*)}{\widetilde{\epsilon}}   \rbrace$, where $c_0$ is a constant, $\delta$ is the compression factor and $\widetilde{\epsilon}$ is defined in equation~\eqref{eqn:epsilon_tilde}.
\end{assumption}
\begin{theorem}
\label{thm:non_convex1}
Suppose Assumptions~\ref{asm:struct_loss},\ref{asm:sub-exp} and \ref{asm:size_para_one} hold, and $\alpha \leq \beta < 1/2$. For sufficiently small constant $c$, we choose the step size $\gamma = \frac{c}{L_F}$. Then, running Algorithm~\ref{alg:main_algo} for $T = C_3 \frac{L_F(F(w_0) - F(w^*))}{\widetilde{\epsilon}}$ iterations yields
\begin{align*}
\min_{t=0,\ldots, T}\| \nabla F(w_t) \|^2 \leq C\, \widetilde{\epsilon},
\end{align*}
with probability greater than or equal to $1- \frac{c_1(1-\alpha)md}{(1+n \hat{L} D)^d} - \frac{c_2 d}{(1+(1-\alpha)m n \hat{L} D)^d}$, provided the compression factor satisfies $\delta > \widetilde{\delta_0} + 4\alpha -8\alpha^2 + 4\alpha^3$, where $\widetilde{\delta_0} = \left(1-\frac{(1-\beta)^2}{(1+\beta)^2 (1+\lambda_0)} \right)$ and $\lambda_0$ is a (sufficiently small) positive constant.
\end{theorem}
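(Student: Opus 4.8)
The plan is to mirror the proof of Theorem~\ref{thm:non_convex} step for step, isolating the single place where Option II differs from Option I: the central machine now thresholds on the \emph{compressed} gradient norms $\|\mathcal{Q}(\nabla F_i(w_t))\|_q$ rather than on the true local gradient norms. As before, write $g(w_t)=\frac{1}{|\mathcal{U}_t|}\sum_{i\in\mathcal{U}_t}\mathcal{Q}(\nabla F_i(w_t))$ and $\Delta=g(w_t)-\nabla F(w_t)$. The first step is to establish an Option~II analogue of the $\|\Delta\|^2$ bound used for Option~I (Lemma~\ref{lem:delta_control}), of the form
\begin{align*}
\|\Delta\|^2 \;\leq\; (1+\lambda)\left(\frac{(1+\beta)\sqrt{1-\delta}+2\alpha}{1-\beta}\right)^2\|\nabla F(w_t)\|^2 \;+\; \widetilde{\epsilon}(\lambda),
\end{align*}
valid on the same high-probability event, where $\widetilde{\epsilon}(\lambda)$ is the $\lambda$-dependent quantity whose value at $\lambda=\lambda_0$ is the $\widetilde{\epsilon}$ defined in this section. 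Once this is in hand, the rest of the argument---showing the iterates stay in $\mathcal{W}$ via Assumption~\ref{asm:size_para_one}, invoking $L_F$-smoothness, applying Young's inequality, and telescoping over $T$ iterations---is verbatim the computation carried out for Theorem~\ref{thm:non_convex}, with $\epsilon$ everywhere replaced by $\widetilde{\epsilon}$ and $\delta_0$ by $\widetilde{\delta_0}$.

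To prove the modified bound I would split $\mathcal{U}_t$ into its good and Byzantine indices and control each contribution to $\Delta$ separately. For the good indices, the compression guarantee (Definition~\ref{asm:compress}) gives $\|\mathcal{Q}(\nabla F_i(w_t))-\nabla F_i(w_t)\|\leq\sqrt{1-\delta}\,\|\nabla F_i(w_t)\|$, and the sub-exponential concentration of the partial derivatives (Assumptions~\ref{asm:struct_loss}--\ref{asm:sub-exp}), together with the covering/uniform-bound machinery already encoded in $\epsilon_1,\epsilon_2$, controls $\|\nabla F_i(w_t)-\nabla F(w_t)\|$ and the average of the untrimmed good gradients. The crucial new step is the Byzantine term: since selection is now by compressed norm, any Byzantine index $j\in\mathcal{U}_t$ obeys $\|\mathcal{Q}(\star_j)\|_q\leq\tau$ with $\tau$ the threshold. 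Because $\beta>\alpha$, at least $(\beta-\alpha)m$ \emph{good} machines are trimmed, so $\tau$ is upper bounded by the compressed norm of some trimmed good machine $i_0$, and by the compression guarantee $\|\mathcal{Q}(\nabla F_{i_0}(w_t))\|_q\leq(1+\sqrt{1-\delta})\|\nabla F_{i_0}(w_t)\|$, which concentrates around $(1+\sqrt{1-\delta})(\|\nabla F(w_t)\|+\epsilon_1)$. It is precisely this inflation of the threshold by the extra factor $(1+\sqrt{1-\delta})$---absent in Option~I, where the threshold is on the \emph{un}compressed norm---that amplifies the $\sqrt{1-\delta}$ dependence; combined with the $\beta$-fraction trimming discrepancy it produces the $(1+\beta)\sqrt{1-\delta}$ coefficient in $\widetilde{\epsilon}$ and the denominator $(1+\beta)^2$ in $\widetilde{\delta_0}$.

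I expect the threshold-control step to be the main obstacle. Unlike Option~I, the thresholded object $\|\mathcal{Q}(\cdot)\|_q$ is itself compressed, so the bound on $\tau$ must be routed through the compression factor; one must also verify that having the center \emph{compute} the norm (rather than trust a reported value) is exactly what prevents a Byzantine machine from hiding a large vector behind a small claimed norm, as flagged in Section~\ref{sec:arbitrary}. Everything else---the concentration events, the $\mathcal{O}(1/T)$ descent, and the staying-in-$\mathcal{W}$ argument---transfers directly from Theorem~\ref{thm:non_convex}, and the stated probability is the same. Finally, the trade-off $\delta>\widetilde{\delta_0}+4\alpha-8\alpha^2+4\alpha^3$ is exactly the requirement that $(1+\lambda_0)\big(\frac{(1+\beta)\sqrt{1-\delta}+2\alpha}{1-\beta}\big)^2<1$, i.e.\ that the descent step leaves a strictly positive coefficient on $\|\nabla F(w_t)\|^2$; setting $\alpha=0$ recovers $\delta>\widetilde{\delta_0}$ with its $(1+\beta)^2$ denominator, and the $\alpha$-dependent lower-order terms (the $4\alpha-8\alpha^2+4\alpha^3$) fall out of the same completion-of-square bookkeeping, which I would not belabor.
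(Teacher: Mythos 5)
Your proposal is correct and follows essentially the same route as the paper: the paper also reduces Theorem~\ref{thm:non_convex1} to an Option-II analogue of the $\|\Delta\|^2$ bound (its Lemma~\ref{lem:delta_control1}, with exactly the coefficient $\bigl(\tfrac{(1+\beta)\sqrt{1-\delta}+2\alpha}{1-\beta}\bigr)^2$ and error $\widetilde{\epsilon}(\lambda)$ you state), proves it by splitting over good, trimmed-good, and surviving-Byzantine machines and bounding the surviving Byzantine compressed norms via a trimmed good machine's $\|\mathcal{Q}(\nabla F_{i_0})\|\leq(1+\sqrt{1-\delta})\|\nabla F_{i_0}\|$, and then reruns the Theorem~\ref{thm:non_convex} descent argument verbatim with $\epsilon\to\widetilde{\epsilon}$ and $\delta_0\to\widetilde{\delta_0}$. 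The level of detail you leave to "completion-of-square bookkeeping" for the $4\alpha-8\alpha^2+4\alpha^3$ term matches the paper's own "a little algebra shows."
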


\begin{remark}
The above result and their consequences resemble that of Theorem~\ref{thm:non_convex}. Since $\widetilde{\epsilon} > \epsilon$, the statistical error-rate in Theorem~\ref{thm:non_convex1} is strictly worse than that of Theorem~\ref{thm:non_convex} (although order-wise they are same).
\end{remark}
\begin{remark}
 Note that the definition of $\delta_0$ is different than in Theorem~\ref{thm:non_convex}. For a sufficiently small $\lambda_0$, we see $\widetilde{\delta_0} = \mathcal{O}(4 \beta)$, which implies we require $\delta > 4\beta + 4\alpha$ for the convergence of Theorem~\ref{thm:non_convex1}. Note that this is a slightly strict requirement compared to Theorem~\ref{thm:non_convex}. In particular, for a given $\delta$, Algorithm~\ref{alg:main_algo} with Option II can tolerate less number of Byzantine machines compared to Option I.
\end{remark}
%\textcolor{cyan}{Reviewer 1 : Is the rate $\delta > 4\beta + 4\alpha$ optimal or else comment?}
\begin{remark}
 The result in Theorem~\ref{thm:non_convex1} is applicable for arbitrary adversaries, whereas Theorem~\ref{thm:non_convex} relies on the adversary being restrictive. Hence, we can view the limitation of Theorem~\ref{thm:non_convex1} (such as worse statistical error-rate and stricter $(\delta,\alpha)$ trade-off) as a price of accommodating arbitrary adversaries.
% \textcolor{cyan}{Reviewer 1 : Bad rate $O(d\alpha)$!! Check back with the comparison}
\end{remark}

\section{Byzantine Robust Distributed Learning with Error Feedback}
\label{sec:error_feedback}
We now investigate the role of error feedback \cite{errorfeed} in distributed learning with Byzantine worker machines. We stick to the formulation of Section~\ref{sec:intro}.

In order to address the issues of convergence for sign based algorithms (like \emph{signSGD}), \cite{errorfeed} proposes a class of optimization algorithms that uses \emph{error feedback}. In this setting, the worker machine locally stores the error between the actual local gradient and its compressed counterpart. Using this as feedback, the worker machine adds this error term to the compressed gradient in the subsequent iteration. Intuitively, this accounts for correcting the the direction of the local gradient. The error-feedback has its roots in some of the classical communication system like ``delta-sigma'' modulator and adaptive modulator (\cite{haykin1994introduction}).

\begin{algorithm}[t!]
  \caption{Distributed Compressed Gradient Descent with Error Feedback}
  \begin{algorithmic}[1]
 \STATE  \textbf{Input:} Step size $\gamma$, Compressor $\mathcal{Q}(.)$, parameter $\beta (> \alpha)$.
 \STATE \textbf{Initialize:} Initial iterate $w_0$, $e_i(0) = 0$ $\forall$ $i \in [m]$ \\
  \FOR{$t=0,1, \ldots, T-1 $}
  \STATE \underline{Central machine:} sends $w_t$ to all worker \\
  \textbf{for $i \in [m]$ do in parallel}\\
  \STATE \underline{$i$-th non-Byzantine worker machine:}
    \begin{itemize} 
        \item computes $p_i(w_t) = \gamma \nabla F_i(w_t) + e_i(t)$
        \item sends $\mathcal{Q}(p_i(w_t))$ to the central machine
        \item computes $e_i(t+1) = p_i(w_t) - \mathcal{Q}(p_i(w_t))$ 
    \end{itemize}
    \STATE  \underline{Byzantine worker machine:}
    \begin{itemize}
    \item sends $\star$ to the central machine.
    \end{itemize}
  \STATE  \underline{At Central machine:}
    \begin{itemize}
    \item sorts the worker machines in non-decreasing order according to $\|\mathcal{Q}(p_i(w_t))\|$.
    \item returns the indices of the first $1-\beta$ fraction of elements as $\mathcal{U}_t$.
        \item $w_{t+1} =  w_t - \frac{\gamma}{|\mathcal{U}_t|}\sum_{i \in \mathcal{U}_t} \mathcal{Q}(p_i(w_t)) $
    \end{itemize}
  \ENDFOR
  \end{algorithmic}\label{alg:err_feed}
\end{algorithm}

We analyze the distributed error feedback algorithm in the presence of Byzantine machines. The algorithm is presented in Algorithm~\ref{alg:err_feed}. We observe that here the central machine sorts the worker machines according to the norm of the compressed local gradients, and ignore the largest $\beta$ fraction.

Note that, similar to Section~\ref{sec:arbitrary}, we handle arbitrary adversaries. In the subsequent section, we show (both theoretically and experimentally) that the statistical error rate of Algorithm~\ref{alg:err_feed} is smaller than Algorithm~\ref{alg:main_algo}.

\subsection{Main Results}

In this section we analyze Algorithm~\ref{alg:err_feed} and obtain the rate of the convergence under non-convex smooth  loss functions. Throughout the section, we select $\gamma$ as the step size and assume that Algorithm~\ref{alg:err_feed} is run for $T$ iterations. We start with the following assumption.

\begin{assumption}
\label{asm:bounded_moment}
For all non-Byzantine worker machine $i$, the local loss functions $F_i(.)$ satisfy $\|\nabla F_i(x)\|^2 \leq \sigma^2$, where $x \in \{w_j\}_{j=0}^T$, and $\{w_0,\ldots,w_T\}$ are the iterates of Algorithm~\ref{alg:err_feed}.
\end{assumption}

Note that several learning problems satisfy the above condition (with high probability). In Appendix (Section~\ref{sec:bounded}), we consider the canonical problem of least squares and obtain an expression of $\sigma^2$ with high probability.

Note that since $F_i(.)$ can be written as loss over data points of machine $i$, we observe that the bounded gradient condition is equivalent to the bounded second moment condition for SGD, and have featured in several previous works, see, e.g., \cite{karimireddy2019scaffold}, \cite{mayekar2020limits}. Here, we are using all the data points and (hence no randomness over the choice of data points) perform gradient descent instead of SGD. Also, note that Assumption~\ref{asm:bounded_moment} is weaker than the bounded second moment condition since we do not require $\|\nabla F_i(x)\|^2$ to be bounded for all $x$; just when $x \in \{w_j\}_{j=0}^T$.

We also require the following assumption on the size of the parameter space $\mathcal{W}$, which parallels Assumption~\ref{asm:size_para} and \ref{asm:size_para_one}.

\begin{assumption}
\label{asm:size_para_err}
(Size of parameter space $\mathcal{W}$) Suppose that $\| \nabla F(w) \| \leq M$ for all $w \in \mathcal{W}$.  We assume that $\mathcal{W}$ contains the $\ell_2 $ ball $ \lbrace w: \|w - w_0 \| \leq \gamma r^* T  \rbrace$, where
\begin{align*}
\begin{split}
    r^* &= \epsilon_2 + M + \frac{6 \beta (1+\sqrt{1-\delta})}{(1-\beta)} \left(  \epsilon_1 +  M + \sqrt{\frac{3(1-\delta)}{\delta}} \sigma \right) \\
    &+ \sqrt{\frac{12(1-\delta)}{\delta}} \sigma,
    \end{split}
\end{align*}
and $(\epsilon_1,\epsilon_2)$ are defined in equations~\eqref{eqn:epsilon_def} and \eqref{eqn:epsilon_tilde_def} respectively.
\end{assumption}

Similar to Assumption~\ref{asm:size_para} and \ref{asm:size_para_one}, we use the above assumption to ensure that the iterates of Algorithm~\ref{alg:err_feed} stays in $\mathcal{W}$, and we emphasize that this is a standard assumption to control the iterates for non-convex loss function (see \cite{dong,dong1}).

To simplify notation and for the clarity of exposition, we define the following quantities which will be used in the main results of this section.
\small
\begin{align}
\begin{split}
    \Delta_1 &= \frac{9 (1+\sqrt{1-\delta})^2 }{2c(1-\beta)^2} \left[\alpha^2 + \beta^2 + (\beta - \alpha)^2 \right]\\
    & \times\left(  \epsilon_1^2 + \frac{3(1-\delta)}{\delta} \sigma^2 \right)+ \frac{50}{c}\epsilon_2^2, \label{eqn:del_one}
    \end{split}
\end{align}

\begin{align}
\begin{split}
    \Delta_2 &= \frac{ L^2 }{2}  \frac{3(1-\delta)\sigma^2}{c\delta} + \frac{2L \epsilon_2^2}{c}  + \left(\frac{1}{2}+L\right) \frac{9 (1+\sqrt{1-\delta})^2 }{c(1-\beta)^2}\\
    &\times \left[\alpha^2 + \beta^2 + (\beta - \alpha)^2 \right] \left( \epsilon_1^2+ \frac{3(1-\delta)}{\delta} \sigma^2 \right), \label{eqn:del_two}
    \end{split}
\end{align} \normalsize
\begin{align}
    \Delta_3 =  (\frac{ L^2}{100} + 25  L^2 ) \frac{3(1-\delta)\sigma^2}{c\delta}, \label{eqn:del_three}
\end{align}
where $c$ is a universal constant.

We show the following rate of convergence to a critical point of the population loss function $F(.)$. 
\begin{theorem}
\label{thm:non_convex_err}
Suppose Assumptions~\ref{asm:struct_loss}, \ref{asm:sub-exp}, \ref{asm:bounded_moment} and \ref{asm:size_para_err} hold, and $\alpha \leq \beta < 1/2$. Then, running Algorithm~\ref{alg:main_algo} for $T$ iterations with step size $\gamma$ yields
\begin{align*}
\min_{t=0,\ldots, T}\| \nabla F(w_t) \|^2 \leq \frac{F(w_0)-F^*}{c\gamma(T+1)} + \Delta_1 + \gamma \Delta_2 + \gamma^2 \Delta_3,
\end{align*}
with probability greater than or equal to $1- \frac{c_1(1-\alpha)md}{(1+n \hat{L} D)^d} - \frac{c_2 d}{(1+(1-\alpha)m n \hat{L} D)^d}$, provided the compression factor satisfies $\frac{ (1+\sqrt{1-\delta})^2 }{(1-\beta)^2} \left[\alpha^2 + \beta^2 + (\beta - \alpha)^2 \right] < 0.107$. Here $\Delta_1,\Delta_2$ and $\Delta_3$ are defined in equations~\eqref{eqn:del_one},\eqref{eqn:del_two} and \eqref{eqn:del_three} respectively.
\end{theorem}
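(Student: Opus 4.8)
The plan is to combine the perturbed-iterate (virtual-sequence) technique for error feedback with the norm-based Byzantine control already developed for Theorem~\ref{thm:non_convex}. Write $\mathcal{G}$ and $\mathcal{B}$ for the honest and Byzantine machines, with $|\mathcal{G}| = (1-\alpha)m$, and let $g(w_t) = \frac{1}{|\mathcal{U}_t|}\sum_{i\in\mathcal{U}_t}\mathcal{Q}(p_i(w_t))$ denote the aggregated step. The first ingredient is a bound on the accumulated feedback error. For an honest machine, $e_i(t+1) = p_i(w_t) - \mathcal{Q}(p_i(w_t))$ with $p_i(w_t) = \gamma\nabla F_i(w_t) + e_i(t)$, so the $\delta$-compressor property gives $\|e_i(t+1)\|^2 \le (1-\delta)\|\gamma\nabla F_i(w_t) + e_i(t)\|^2$. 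Applying Young's inequality and unrolling the resulting geometric recursion (using $e_i(0)=0$ and $\|\nabla F_i\|\le\sigma$ from Assumption~\ref{asm:bounded_moment}) yields $\|e_i(t)\|^2 \lesssim \frac{1-\delta}{\delta}\gamma^2\sigma^2$ for every honest $i$ and every $t$; this is the source of the $\frac{(1-\delta)}{\delta}\sigma^2$ terms in $\Delta_1,\Delta_2,\Delta_3$.

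Next I would introduce the virtual iterate $\tilde w_t = w_t - \bar e(t)$, where $\bar e(t) = \frac{1}{|\mathcal{G}|}\sum_{i\in\mathcal{G}} e_i(t)$ is the honest-average error. Writing, for honest $i$, $\mathcal{Q}(p_i(w_t)) = \gamma\nabla F_i(w_t) + e_i(t) - e_i(t+1)$ and decomposing $g(w_t) = \bar g(w_t) + (g(w_t) - \bar g(w_t))$, where $\bar g(w_t) = \frac{1}{|\mathcal{G}|}\sum_{i\in\mathcal{G}}\mathcal{Q}(p_i(w_t))$, the feedback error telescopes along the honest average and I obtain the exact recursion $\tilde w_{t+1} = \tilde w_t - \gamma\nabla F(w_t) - r_t$, with perturbation $r_t = \gamma\big(\tfrac{1}{|\mathcal{G}|}\sum_{i\in\mathcal{G}}\nabla F_i(w_t) - \nabla F(w_t)\big) + (g(w_t) - \bar g(w_t))$. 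Here $\|\tfrac{1}{|\mathcal{G}|}\sum_{i\in\mathcal{G}}\nabla F_i(w_t) - \nabla F(w_t)\| \le \epsilon_2$ by the honest-average concentration bound (the same covering and sub-exponential argument behind Theorem~\ref{thm:non_convex}, Assumptions~\ref{asm:struct_loss} and \ref{asm:sub-exp}), which produces the $\epsilon_2^2$ terms.

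The heart of the argument is bounding the Byzantine/trimming mismatch $\|g(w_t) - \bar g(w_t)\|$. I would split it into the selected-Byzantine contribution $\frac{1}{|\mathcal{U}_t|}\sum_{i\in\mathcal{U}_t\cap\mathcal{B}}\mathcal{Q}(p_i(w_t))$ and the honest reweighting/trimming defect. Using $|\mathcal{U}_t\cap\mathcal{B}| \le \alpha m$, that at most $\beta m$ honest machines are trimmed, and that the norm-based sort forces $\|\mathcal{Q}(p_i(w_t))\|$ of any selected Byzantine machine to be no larger than that of a trimmed honest machine, every offending term is controlled by an honest compressed-gradient norm $\|\mathcal{Q}(p_j(w_t))\| \le (1+\sqrt{1-\delta})\|p_j(w_t)\| \le (1+\sqrt{1-\delta})(\gamma\|\nabla F_j(w_t)\| + \|e_j(t)\|)$, with $\|\nabla F_j(w_t)\| \le \|\nabla F(w_t)\| + \epsilon_1$ (single-machine deviation, giving the $\epsilon_1$ terms). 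Careful counting of the three defect types — selected Byzantine ($\alpha^2$), trimmed honest ($\beta^2$), and the reweighting of the retained honest machines ($(\beta-\alpha)^2$) — yields the factor $[\alpha^2 + \beta^2 + (\beta-\alpha)^2]$ together with the $\frac{(1+\sqrt{1-\delta})^2}{(1-\beta)^2}$ coefficient, which is also what Assumption~\ref{asm:size_para_err} uses (through $r^*$) to keep the iterates inside $\mathcal{W}$.

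Finally I would apply the $L_F$-smoothness descent inequality to the virtual sequence, $F(\tilde w_{t+1}) \le F(\tilde w_t) + \langle\nabla F(\tilde w_t), -\gamma\nabla F(w_t) - r_t\rangle + \frac{L_F}{2}\|\gamma\nabla F(w_t) + r_t\|^2$, replace $\nabla F(\tilde w_t)$ by $\nabla F(w_t)$ at the cost of $L_F\|\bar e(t)\|$ (gradient-Lipschitzness), and expand the inner products with Young's inequality. The crucial point is that the Byzantine-bias part of $r_t$ carries a term proportional to $\|\nabla F(w_t)\|$ with coefficient $\tfrac{(1+\sqrt{1-\delta})}{1-\beta}\sqrt{\alpha^2+\beta^2+(\beta-\alpha)^2}$; after squaring, this competes with the $-\gamma\|\nabla F(w_t)\|^2$ descent term, so a strictly positive net coefficient on $\|\nabla F(w_t)\|^2$ survives exactly when $\tfrac{(1+\sqrt{1-\delta})^2}{(1-\beta)^2}[\alpha^2+\beta^2+(\beta-\alpha)^2] < 0.107$ — the stated trade-off. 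Summing over $t$, telescoping $F(\tilde w_0) - F(\tilde w_{T+1})$ with $\tilde w_0 = w_0$ and $F(\tilde w_{T+1}) \ge F^*$, and dividing by $\gamma(T+1)$ gives the $\frac{F(w_0)-F^*}{c\gamma(T+1)}$ leading term, while the $\gamma$-independent floor collects into $\Delta_1$ and the residual displacement-squared terms (from $\frac{L_F}{2}\|\cdot\|^2$ and the Lipschitz correction) organize into $\gamma\Delta_2 + \gamma^2\Delta_3$. The main obstacle is the third step: unlike single-machine error feedback, the clean telescoping is broken by the time-varying selected set $\mathcal{U}_t$ and by the non-telescoping Byzantine terms, so these must be absorbed into $r_t$ and re-bounded through the norm-threshold comparison, and doing so while keeping the self-referential $\|\nabla F(w_t)\|$ dependence small enough to preserve descent is precisely what forces the $0.107$ condition.
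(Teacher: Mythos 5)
Your proposal is correct and follows essentially the same route as the paper: the virtual iterate $\widetilde{w}_t = w_t - \frac{1}{|\mathcal{M}|}\sum_{i\in\mathcal{M}}e_i(t)$, the geometric unrolling of the feedback error to get $\|e_i(t)\|^2 \lesssim \frac{1-\delta}{\delta}\gamma^2\sigma^2$, the three-way split of the trimming/Byzantine mismatch (your $g-\bar g$ is exactly the paper's $\tilde T = T_1 - T_2 + T_3$, with the same $[\alpha^2+\beta^2+(\beta-\alpha)^2]$ counting and the same norm-threshold comparison against honest machines), and the smoothness descent on the virtual sequence whose surviving $\|\nabla F(w_t)\|^2$ coefficient yields the $0.107$ condition. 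The only cosmetic difference is that you fold the honest-average concentration term $\epsilon_2$ into the perturbation $r_t$, whereas the paper keeps $\frac{1}{|\mathcal{M}|}\sum_{i\in\mathcal{M}}\nabla F_i(w_t)$ as the descent direction and absorbs that deviation inside the smoothness expansion; the two bookkeepings are equivalent.
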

\begin{remark}
(Choice of Step Size $\gamma$) Substituting $\gamma = \frac{1}{\sqrt{T+1}}$, we obtain
\small
\begin{align*}
     \min_{t=0,\ldots,T}\|\nabla F(w_t)\|^2 &\leq \frac{F(w_0)-F^*}{c\sqrt{T+1}} + \Delta_1 + \frac{\Delta_2}{\sqrt{T+1}} + \frac{\Delta_3}{T+1},
\end{align*}\normalsize
with high probability. Hence, we observe that the quantity associated with $\Delta_3$ goes down at a considerably faster rate ($\mathcal{O}(1/T)$) than the other terms and hence can be ignored, when $T$ is large. 
\end{remark}

\begin{remark}
Note that when no Byzantine worker machines are present, i.e., $\alpha = \beta = 0$, we obtain 
\begin{align*}
\Delta_1 &= \frac{50}{c}\epsilon_2^2, \,\, \,\,\, \Delta_2 = \frac{ L^2 }{2}  \frac{3(1-\delta)\sigma^2}{c\delta} + \frac{2L \epsilon_2^2}{c}, \\
\Delta_3 & = (\frac{ L^2}{100} + 25  L^2 ) \frac{3(1-\delta)\sigma^2}{c\delta}.
\end{align*}
Additionally, if $\delta = \Theta(1)$ (this is quite common in applications like training of neural nets, as mentioned earlier), we obtain $\Delta_2 = C ( L^2 \sigma^2 + L \epsilon_2^2)$, and $\Delta_3 = C_1 L^2$. Substituting $\epsilon_2 =\mathcal{O}(\frac{d}{\sqrt{mn}})$ and for a fixed $d$, the upper bound in the above theorem is order-wise identical to that of standard SGD in a population loss minimization problem under similar setting \cite{bubeck2015convex},\cite{hardt2021patterns},\cite[Remark 4]{errorfeed}. 
\end{remark}

\begin{remark} (No compression setting)
In the setting, where $\delta = 1$ (no compression), we obtain
\begin{align*}
    \Delta_1 = \mathcal{O}\left[ d^2 \left( \frac{\alpha^2}{n} + \frac{1}{m n} \right) \right],
\end{align*}
and 
\begin{align*}
    \Delta_2 = \mathcal{O}\left[ d^2 L \left( \frac{\alpha^2}{ n} + \frac{1}{mn}\right) \right],
\end{align*}
and $\Delta_3 = 0$. The statistical rate (obtained by making $T$ sufficiently large) of the problem is $\Delta_1$, and this rate matches exactly to that of \cite{dong}. Hence, we could recover the optimal rate without compression. Furthermore, this rate is optimal in $(\alpha,m,n)$ as shown in \cite{dong}.
\end{remark}

\begin{remark} [Comparison with Algorithm~\ref{alg:main_algo}]
In numerical experiments (Section~\ref{sec:experiment}), we compare the performance of Algorithm~\ref{alg:err_feed} with the one without error feedback (Algorithm~\ref{alg:main_algo}). We keep the experiment setup (ex., learning rate, compression) identical for both the algorithms, and compare their performance (see Figure~\ref{fig:regwfd1000}). We observe that the convergence of Algorithm~\ref{alg:err_feed} with error feedback is faster than Algorithm~\ref{alg:main_algo}, which is intuitive since error feedback helps in correcting the direction of the local gradient.
\end{remark}

\section{Experiments}
\label{sec:experiment}

% \textcolor{cyan}{ Reviewer 3: The experiments section is somewhat lacking. I have the following recommendations for the authors:
%\begin{itemize}
% Please provide training loss-error curves for the case of no byzantine workers and the case of byzantine workers with no alleviation mechanism in each experiment. This would give a better idea as to how much does the proposed algorithm alleviate the byzantine attacks.  Please provide details as to what choices of $\delta$ and $\beta$ were used for the experiments.
%}
%\textcolor{cyan}{ Reviewer 3:
% It would be particularly useful to get more insight on how the proposed algorithms perform under different choices of $\beta$ and $\delta $. Training loss curves for the proposed algorithm for different choices of $ \beta$ and $\delta$ would be insightful in that regard.How do the test error curves look like for the MNIST experiment? What is the limit of $\alpha$ that the algorithms support empirically? In particular, at the limit does the algorithm diverge or converges to a wrong set of minima or has considerably reduced speed of convergence?}

%\vspace{-2mm}

In this section we validate the correctness of our proposed algorithms for linear regression problem and training ReLU network. In all the experiments, we choose the following compression scheme: given any $x \in \mathbb{R}^d$, we report  $\mathcal{C}(x)=\{\frac{\|x\|_1}{d},\sign(x)\}$ where $\sign(x)$ serves as the quantized vector and  $\frac{\|x\|_1}{d}$ is the scaling factor. %\swa{We need to agree on what is the compressor $\mathcal{Q}$ and what is the tuple $\mathcal{C}$. It is okay if $\mathcal{C}$ is redundant, we just define it for simplicity.} 
All the reported results are averaged over 20 different runs.

\begin{figure*}[t!]
    \centering
    % \vspace{-17 mm}
    \subfloat[ $\#$ Byzantine nodes=10]{{\includegraphics[width=3.5cm]{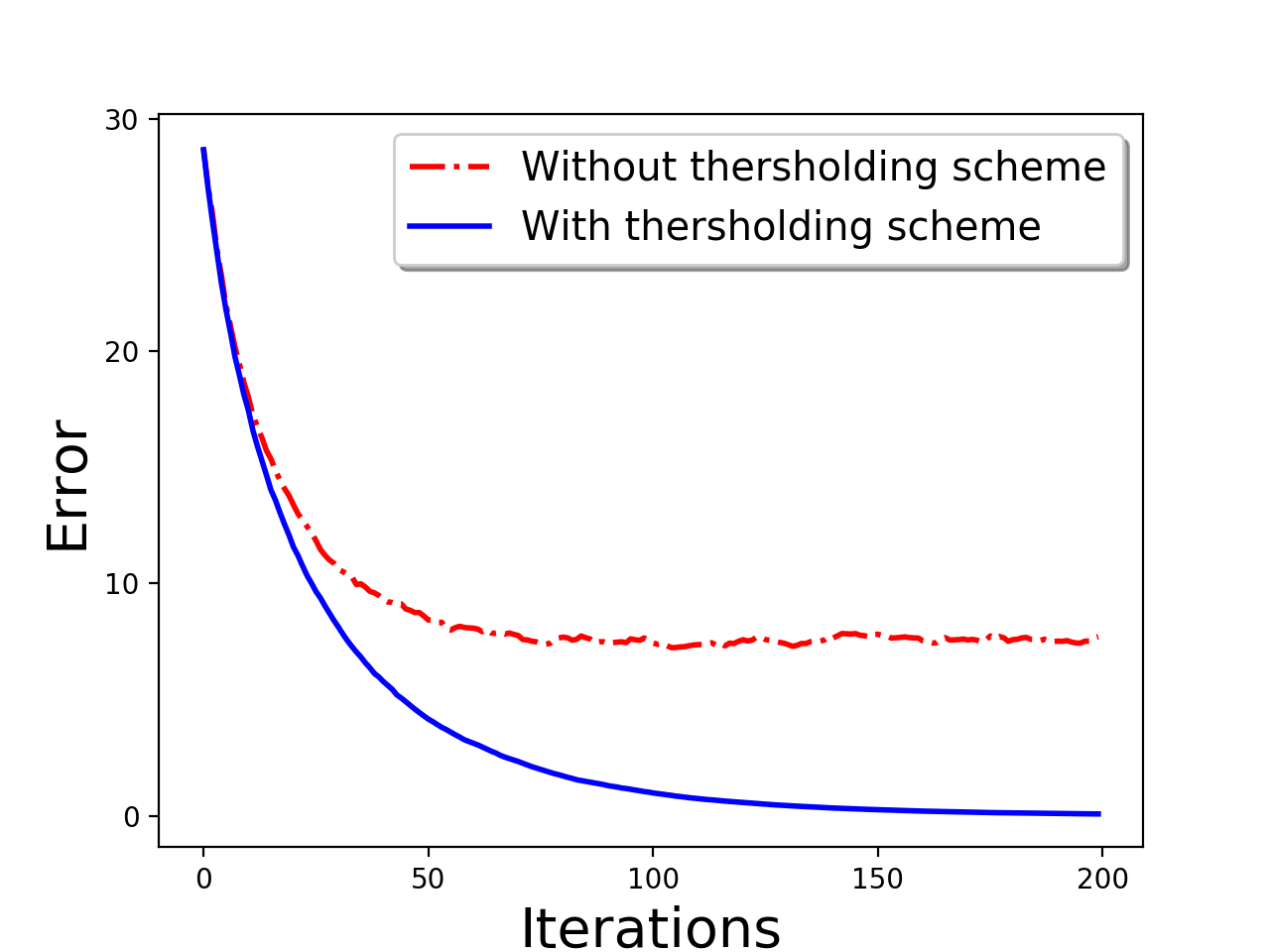} }}%
    \qquad
    \subfloat[$\#$ Byzantine nodes=20]{{\includegraphics[width=3.5cm]{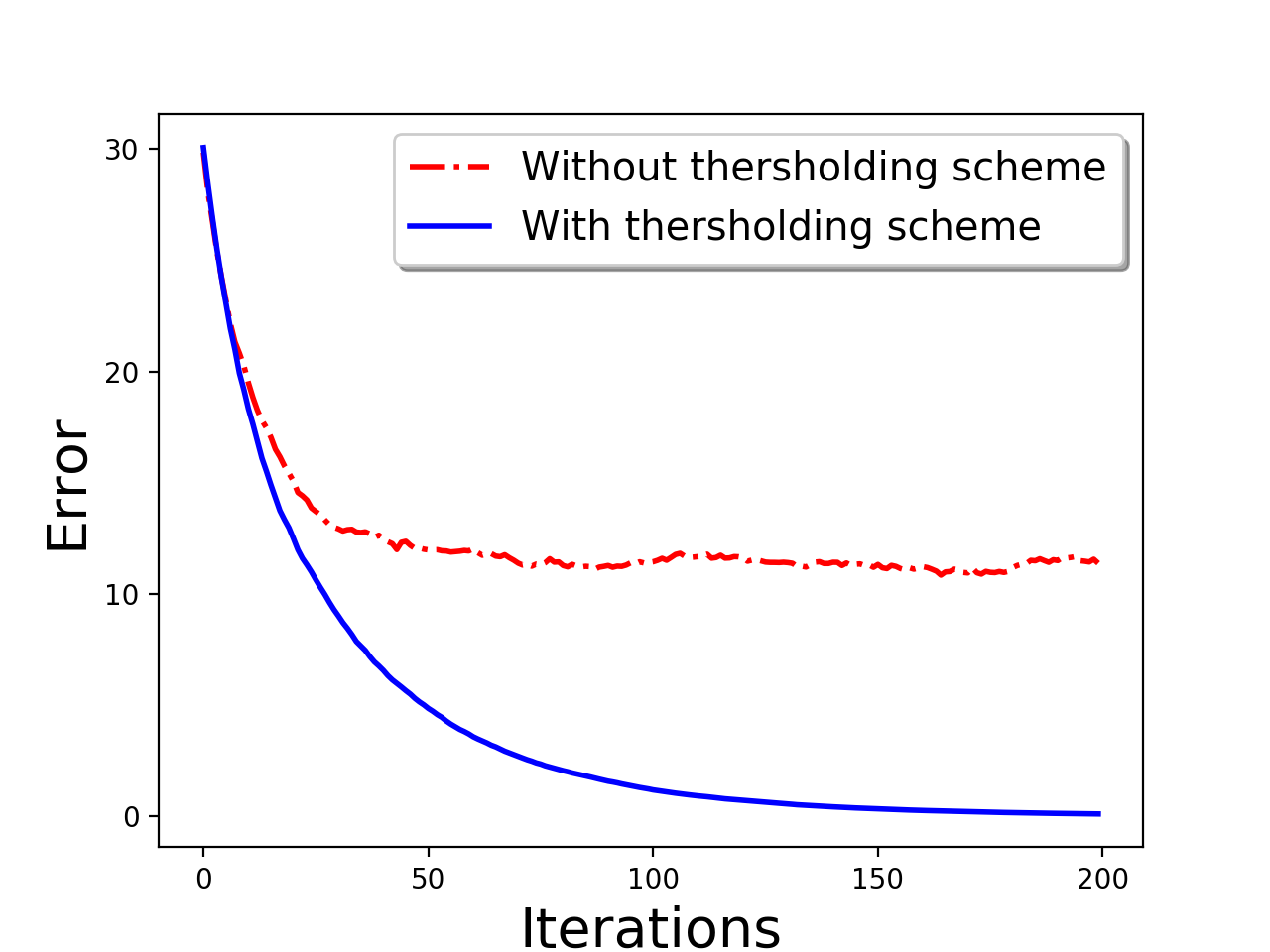} }}%
    \qquad
    \subfloat[$\#$ Byzantine nodes=10]{{\includegraphics[width=3.5cm]{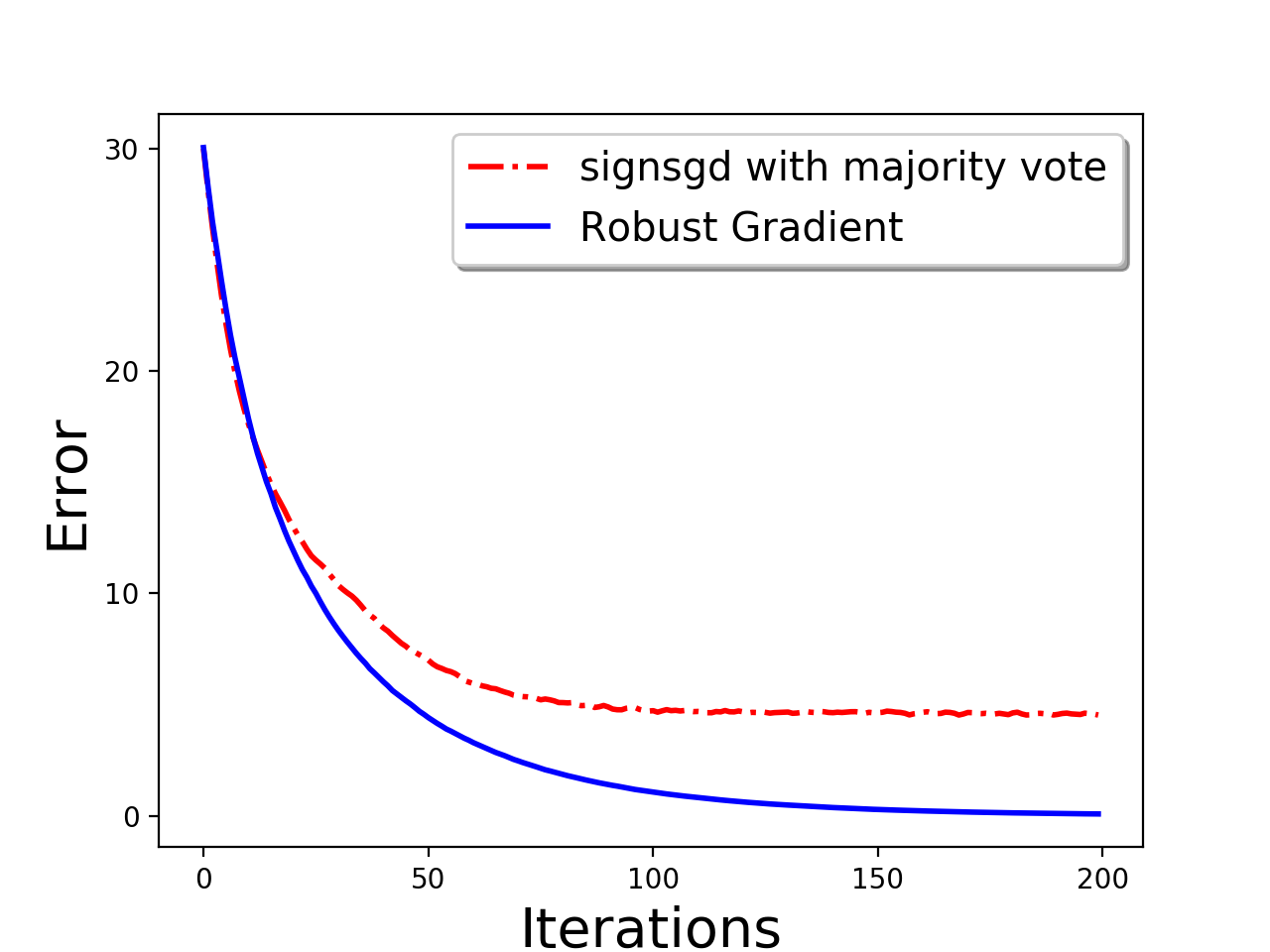} }}%
    \qquad
    \subfloat[$\#$ Byzantine nodes=20]{{\includegraphics[width=3.5cm]{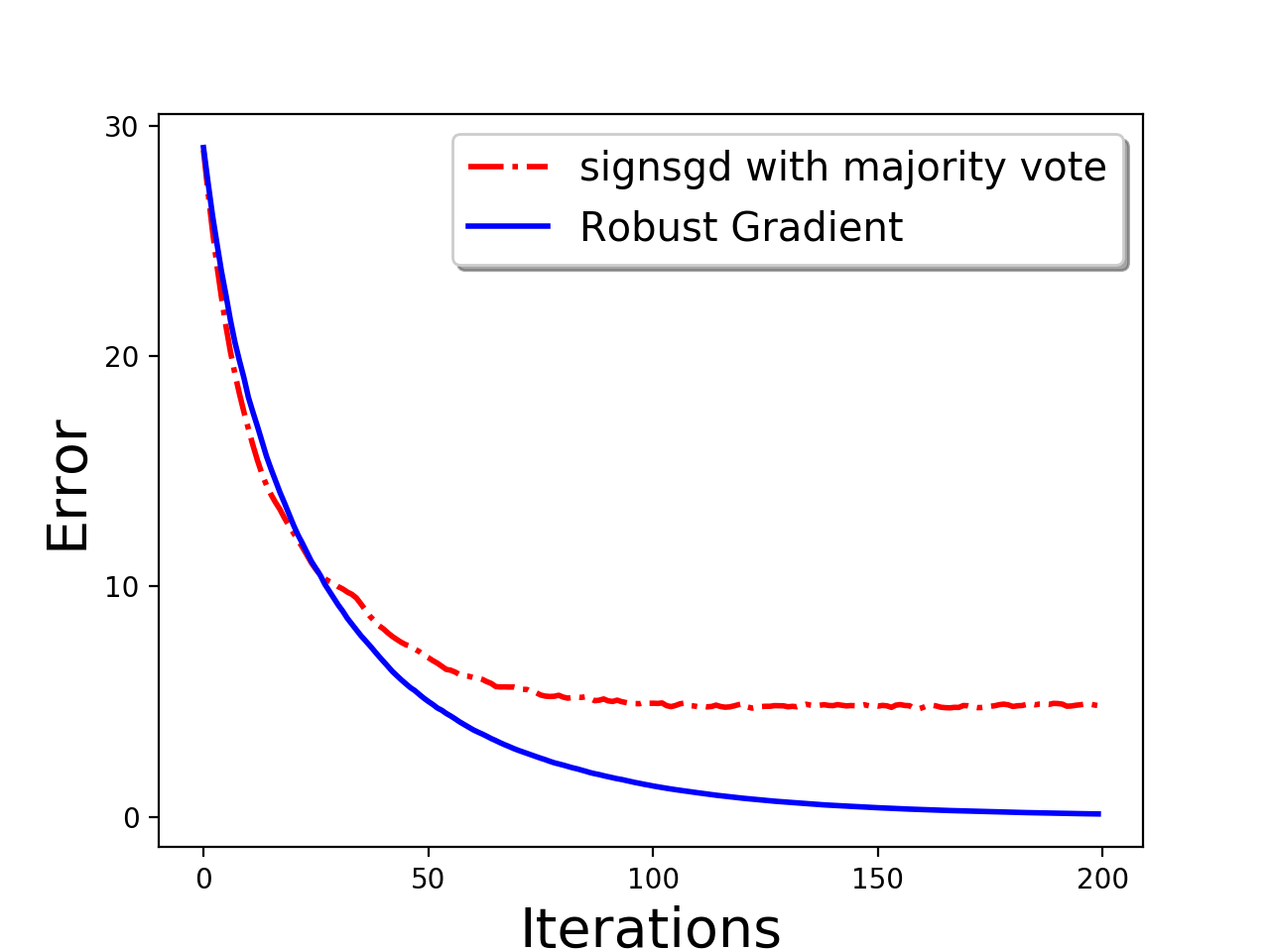} }}%
    \caption{Comparison of Robust Compressed Gradient  Descent with and without thresholding scheme in a regression problem (a,b). The plots show better convergence with thersholding. Comparison of Robust Compressed Gradient  Descent with majority vote based {\it signSGD} \cite{anima}  in regression Problem. The plots (c,d) show  better convergence with thresholding in comparison to the majority vote based robustness of \cite{anima} }%
    \label{fig:companima1000}
\end{figure*}
First we consider a least square regression problem $w^*=\argmin_{w}\|Aw -b\|_2$. For the regression problem we generate  matrix $A \in \mathbb{R}^{N\times d}$, vector $w^* \in \mathbb{R}^d$ by sampling each item independently from standard normal distribution and set $b=Aw^*$. 
%\swa{It will be good to give more details. Is each entry of $A$ and $w^*$ chosen iid with standard normal?} 
Here we choose $N=4000$ and consider $d=1000 $. We  partition the data set equally into $m=200$ servers. 
%\swa{This means each machine would have 20 examples, right? This case we have $d \gg m > n$. On the other hand, our theory is for fix $d$ and asymptotic $n$ and $m$. Just an observation.}
We randomly choose $\alpha m$ $(= 10,20) $ workers to be Byzantine and apply  norm based thresholding operation  with parameter  $\beta m$ $( = 12,22)$  respectively. 
%\swa{The term thresholding would be confusing.}
 We simulate the Byzantine workers by adding i.i.d $\mathcal{N}(0,10I_d)$ entries to the gradient. In our experiments the gradient is the most pertinent information of the the worker server. So we choose to add noise to the gradient to make it a Byzantine worker. However, later on, we consider several kinds of attack models. We choose $\|w_t-w^*\| $ as the error metric for this problem. 

\paragraph{Effectiveness of thresholding} We compare Algorithm~\ref{alg:main_algo} with compressed gradient descent (with vanilla  aggregation). Our method is equipped with Byzantine tolerance steps and the vanilla compressed gradient just computes the  average of the compressed gradient sent by the workers.  From Figure~\ref{fig:companima1000} (a,b) it is evident that the the application of norm based thresholding scheme provides better convergence result compared to the compressed gradient method without it.
 
\paragraph{Comparison with {\it signSGD} with majority vote}  Next, in Figure ~\ref{fig:companima1000}(c,d), we show the comparison of our method   with  \cite{anima} in the regression setup described above.  Our method shows a better trend in convergence.

%\textcolor{blue}{Rev2Point13Remove the SignSGD description :}In \cite{anima}, a communication efficient byzantine tolerant algorithm is proposed where communication efficiency is achieved by communicating sign of the gradient and robustness is attained by taking co-ordinate wise majority vote. The robustness in our algorithm comes from thresholding operation on the scaling factor. We show a comparison of  both method in Figure~\ref{fig:companima1000} in the regression setup depicted above.  Our method shows a better trend in convergence.
% \swa{In sign-SGD with majority vote, the center node also sends back the sign. Are we considering that for simulations? That might affect the rate of convergence as well, right? It is good to discuss this.}
\paragraph{Error-feedback with thresholding scheme}  We demonstrate the effectiveness of Byzantine resilience with error-feedback scheme as described in  Algorithm~\ref{alg:err_feed}. We compare our scheme with Algorithm~\ref{alg:main_algo} (which does not use error feedback) in Figure~\ref{fig:regwfd1000}.    It  is evident that with error-feedback, better convergence is achieved. 

%\textcolor{blue}{Rev2point17: Add some discussion.}
%As described in Section ~\ref{sec:error_feedback}, in each iteration, the worker machines add the error term occurred in the previous iteration and correct the direction of the gradient locally.
\begin{figure*}[t!]
    \centering
    \vspace{-5mm}
    \subfloat[Number of Byzantine nodes=10 ]{{\includegraphics[width=5cm]{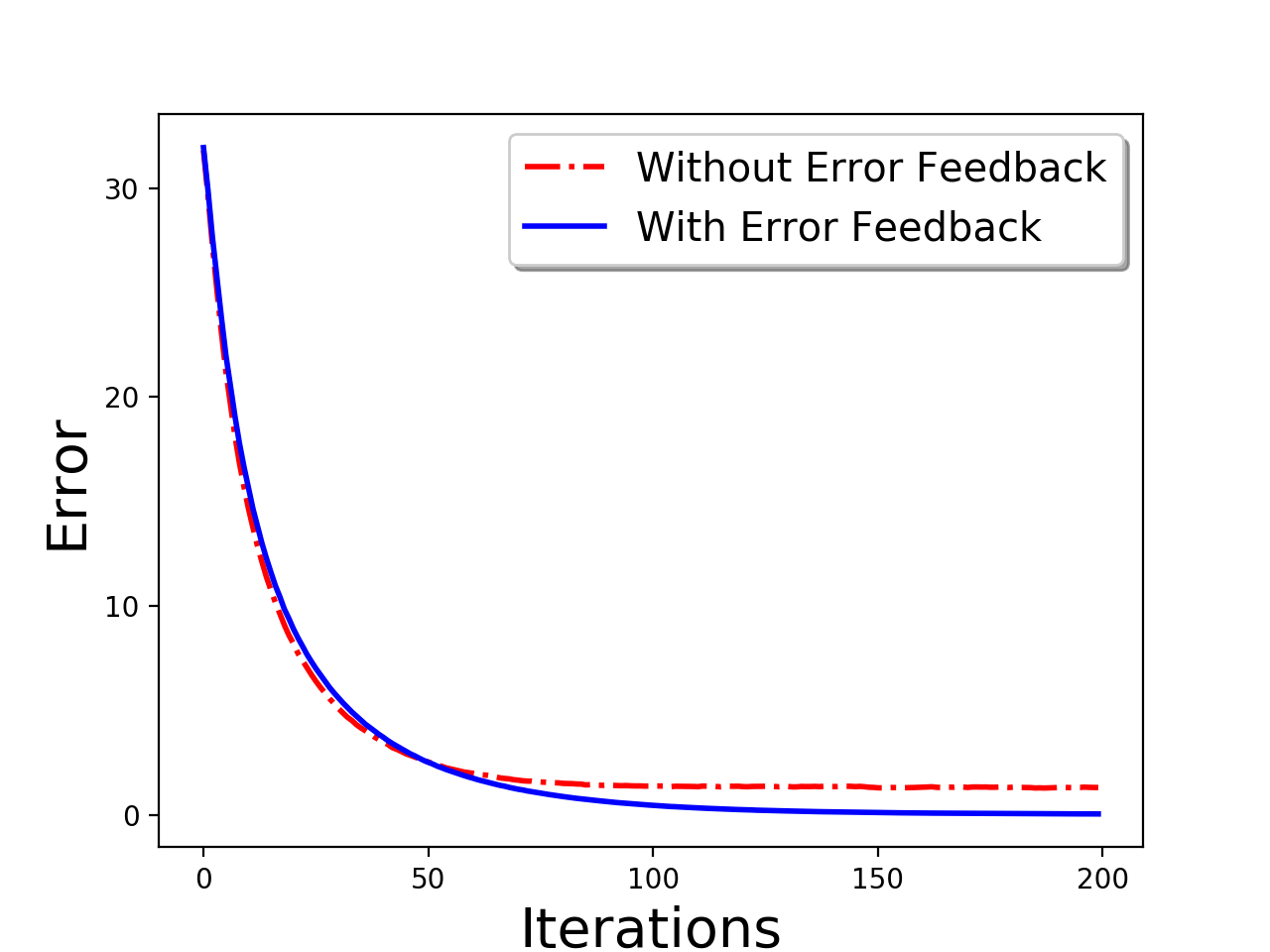} }}%
    \qquad
    \subfloat[Number of Byzantine nodes=20]{{\includegraphics[width=5cm]{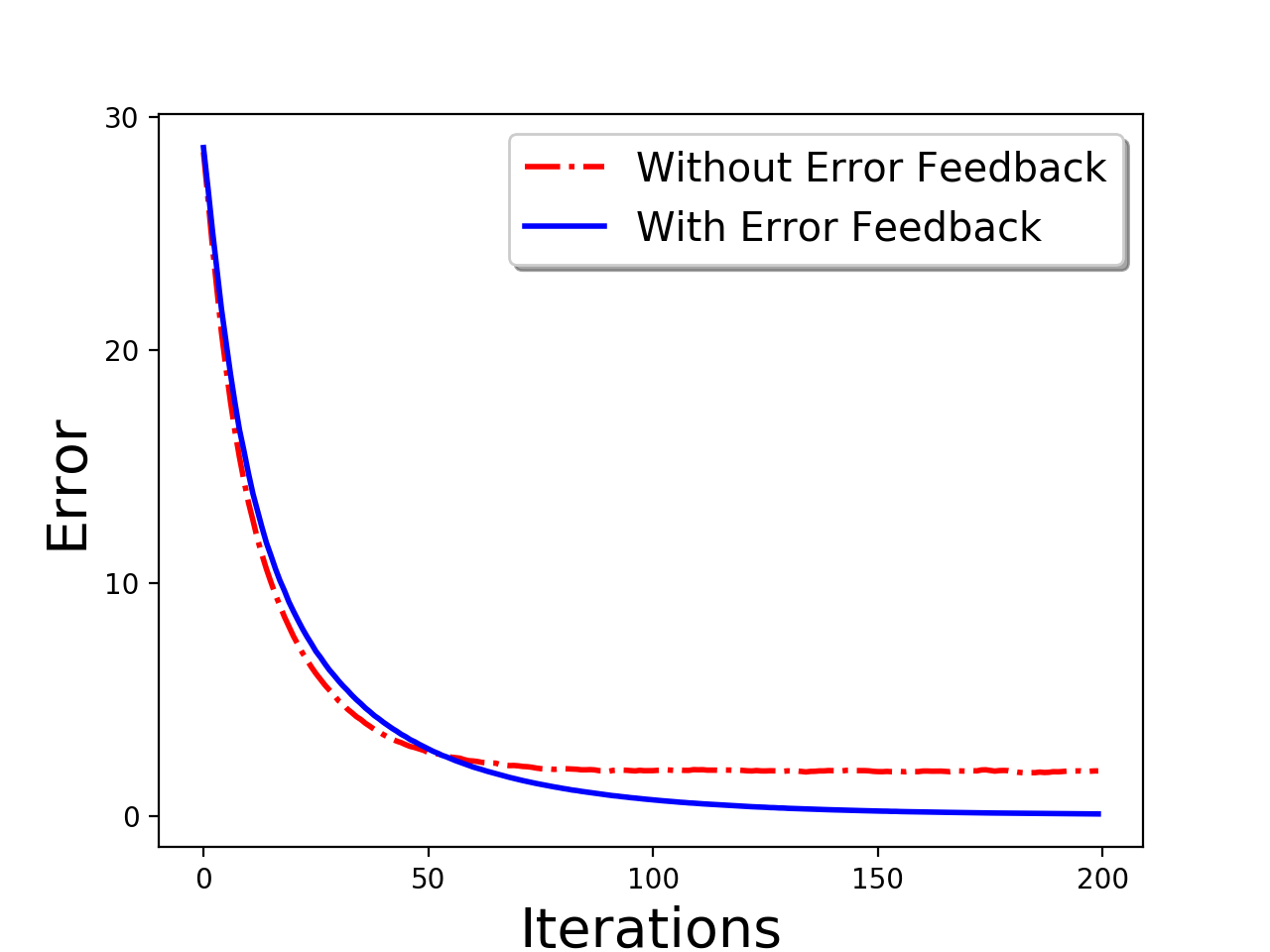} }}%
    \caption{Comparison of norm based thresholding with and without error feedback. The plots show that error feedback based scheme offers better convergence.   }%
    \label{fig:regwfd1000}
\vspace{-4mm}
    %\subfloat[ ]{{\includegraphics[width=4cm]{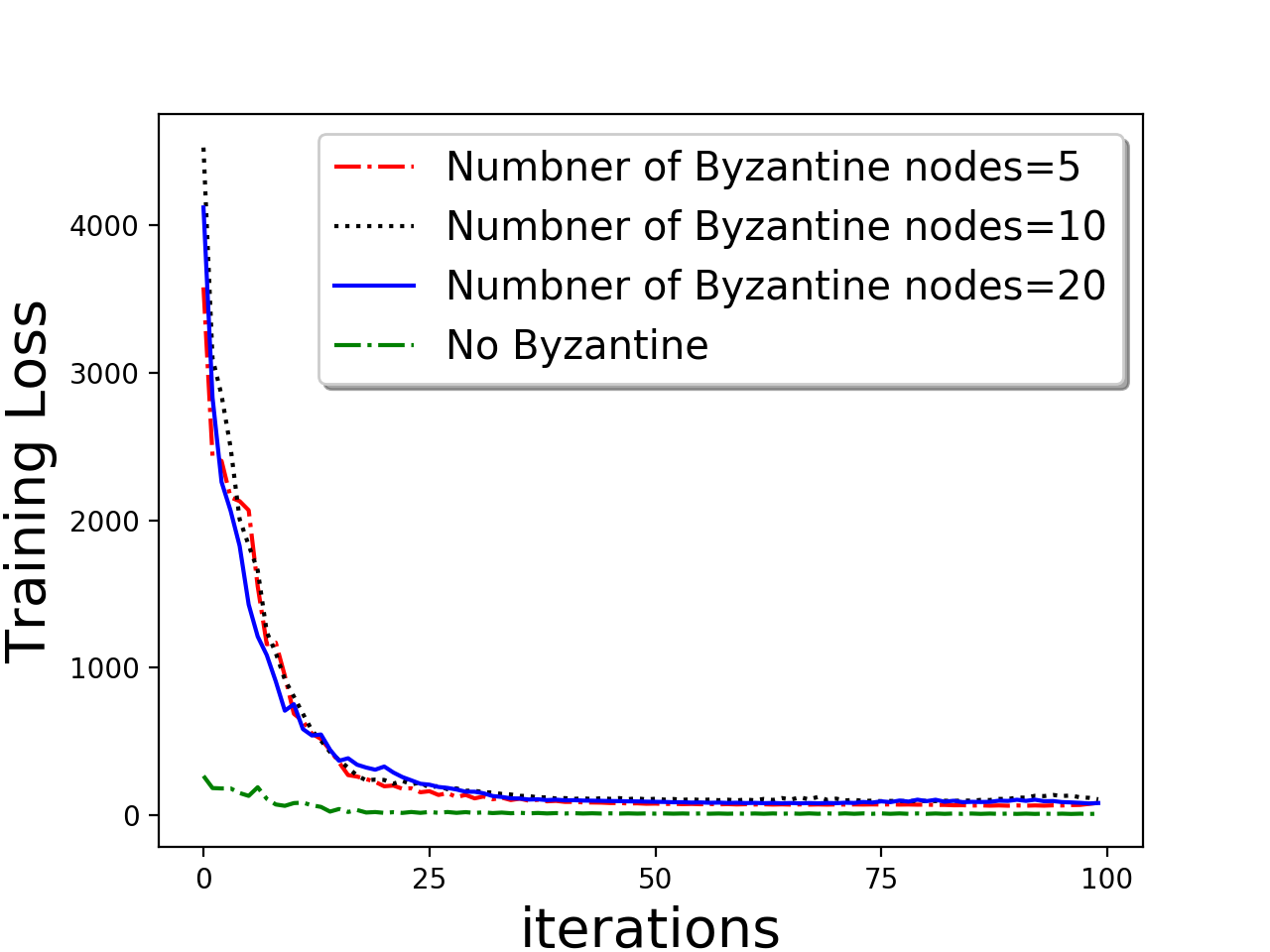} }}%
    \subfloat[ ]{{\includegraphics[width=5cm]{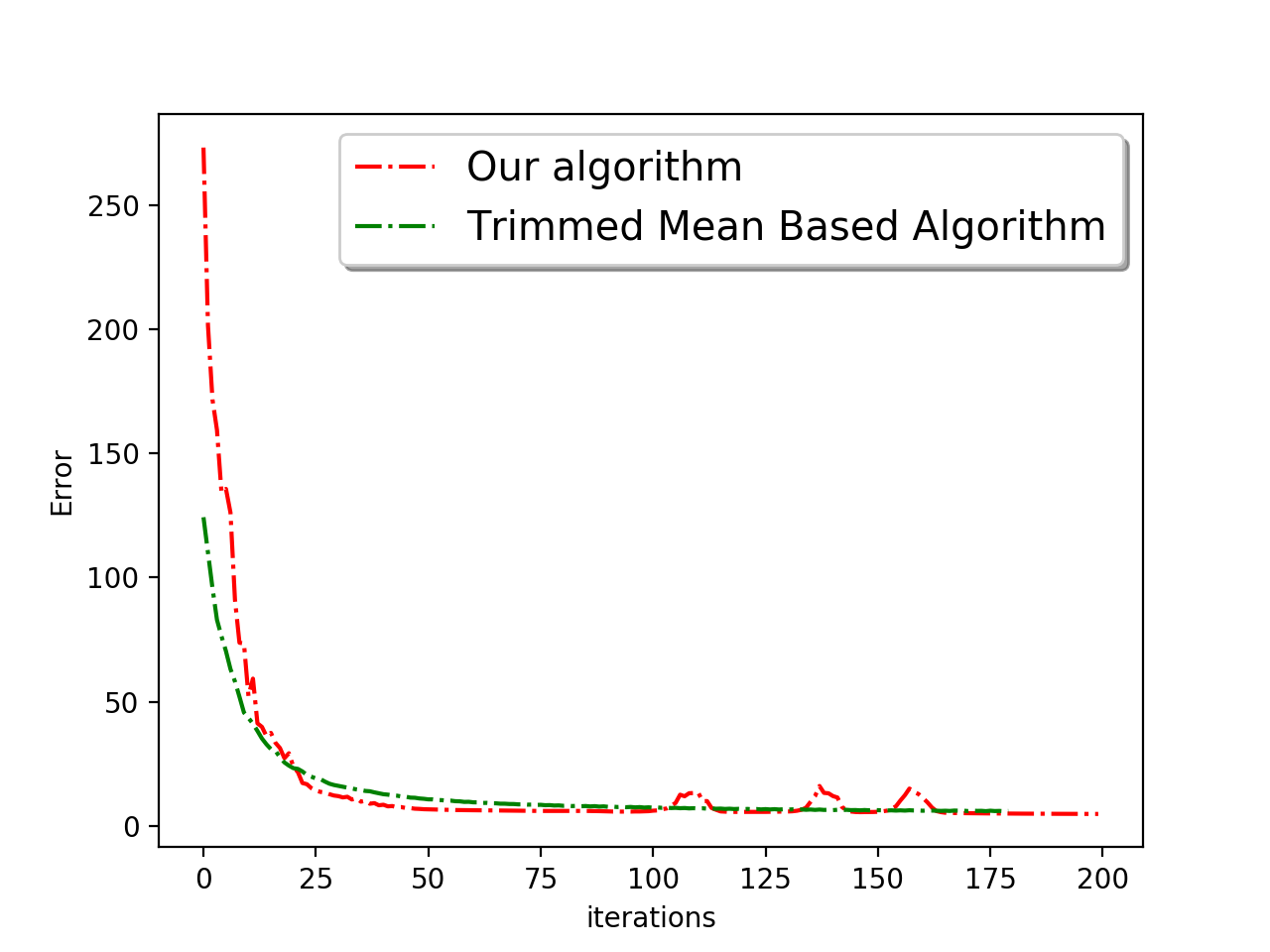} }}
    \qquad
   \subfloat[ ]{{\includegraphics[width=5cm]{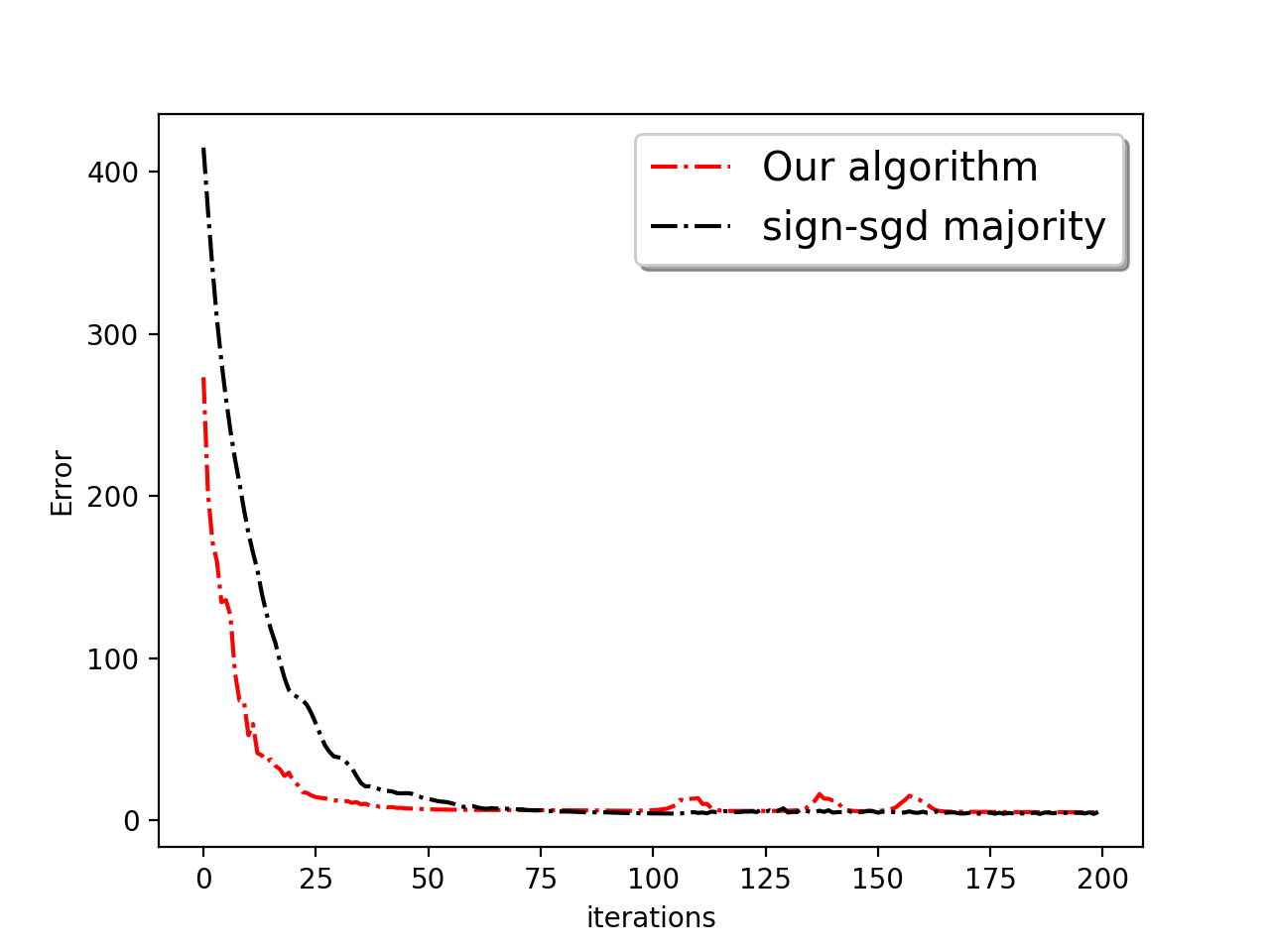} }}
  \vspace{-3mm}
    \caption{Training (cross entropy) loss for MNIST image.  Comparison with (a) Uncompressed  Trimmed mean \cite{dong} (b) majority based  {\it {signSGD}}  of \cite{anima}. In plot (a) show that Robust Gradient descent matches the convergence of the uncompressed trimmed mean \cite{dong}. Plot (b) show a faster convergence compared to the algorithm of \cite{anima}.}
    \label{fig:mnist}
\vspace{-3mm}
    \subfloat[Deterministic shift]{{\includegraphics[width=5cm]{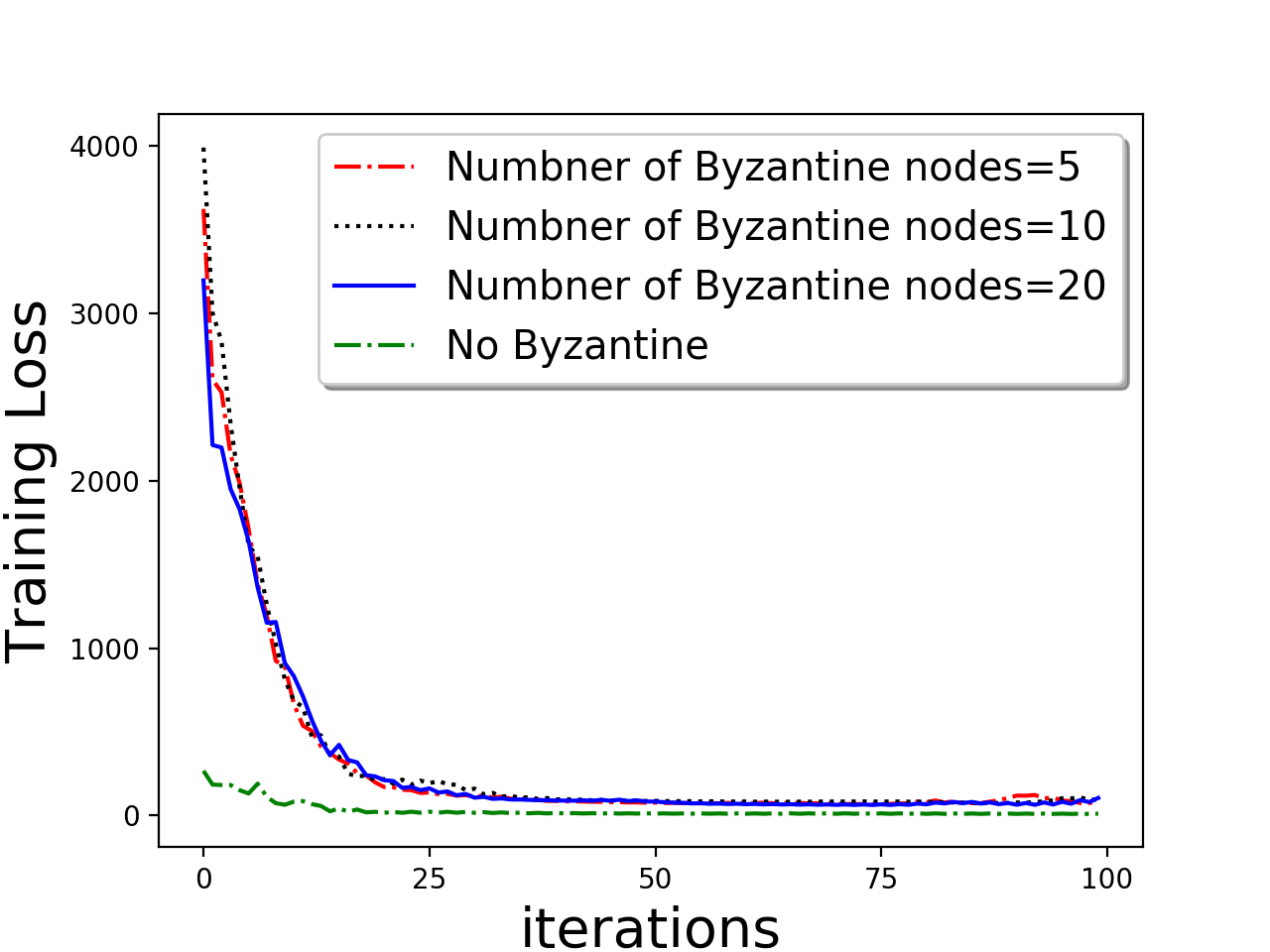} }}%
    %\qquad
    \subfloat[Random Labeling ]{{\includegraphics[width=5cm]{mnist-tr.png} }}%
    \caption{Training (cross entropy) loss for MNIST image. Different types of attack (a)  labels with deterministic shift $(9-\text{label})$  (b) random labels. Plots show theresholding scheme with  different type of Byzantine attacks  achieve similar convergence as `no Byzantine' setup.} %
    \label{fig:diffatk}
\end{figure*}

\paragraph{Feed-forward Neural Net with ReLU activation}
 Next, we show the effectiveness of our method in training a fully connected feed forward neural net. 
 %\swa{What is the architecture? Saying `a neural net' is too little details.} 
 We implement the neural net in pytorch  and use the digit recognition dataset MNIST (\cite{mnist}). We partition $60,000$ training data into 200 different worker nodes. The neural net is equipped with $1000$ node hidden layer with ReLU activation function  and we choose \emph{cross-entropy-loss} as the loss function. We simulate the Byzantine workers by adding i.i.d $\mathcal{N}(0,10I_d)$ entries to the gradient.
 In  Figure~\ref{fig:mnist} we compare our  robust compressed gradeint descent scheme with the trimmed mean scheme of \cite{dong} and majority vote based \textit{signSGD} scheme of \cite{anima}. Compared to the majority vote based scheme, our scheme converges faster. Further, our method shows as good as performance of trimmed mean despite the fact the robust scheme of \cite{dong}  is an uncompressed scheme and uses a more complicated aggregation rules.

\paragraph{Different Types of Attacks } In the previous paragraph we compared our scheme with existing scheme with additive Gaussian noise as a form of Byzantine attack. We also show convergence results with the following type of attacks, which are quite common (\cite{dong}) in neural net training with digit recognition dataset \cite{mnist}. (a) \textit{Random label:} the  Byzantine worker machines randomly replaces  the labels of the data, and (b) \textit{Deterministic Shift: } Byzantine workers  in a deterministic manner replace the labels $y$ with $9-y$ ($0$ becomes $9$ , $9$ becomes $0$). In Figure~\ref{fig:diffatk} we show  the convergence with different numbers of Byzantine workers.

\paragraph{Large Number of Byzantine Workers}
 In Figures~\ref{fig:negatk}, we show the convergence results that holds beyond the theoretical limit (as shown in Theorem~\ref{thm:non_convex} and \ref{thm:non_convex1})  of the number of Byzantine servers in the regression problem and neural net training. In Figure~\ref{fig:negatk} (a,c), for the regression problem, the Byzantine attack is additive Gaussian noise as described before and our algorithm is robust up to $40\% (\alpha = .4)$ of the workers being Byzantine. While training of the feed-forward neural network, we apply a deterministic shift as the Byzantine attack, and the algorithm converges even for $40\% (\alpha = .4)$ Byzantine workers.

 Another `natural' Byzantine attack would be when a Byzantine worker sends $-\epsilon g$ where $0\leq \epsilon \leq 1$ and $g$ is the local gradient making the algorithm `ascent' type. We choose $\epsilon=0.9 $ and show convergence for the regression problem for up to $40\%$ Byzantine workers, and for the  neural network training for up to $33\%$ Byzantine workers in Figure ~\ref{fig:negatk} (b,d).
    
    \begin{figure*}[h!]
    \centering
    \subfloat[Regression Problem ]{{\includegraphics[width=4cm]{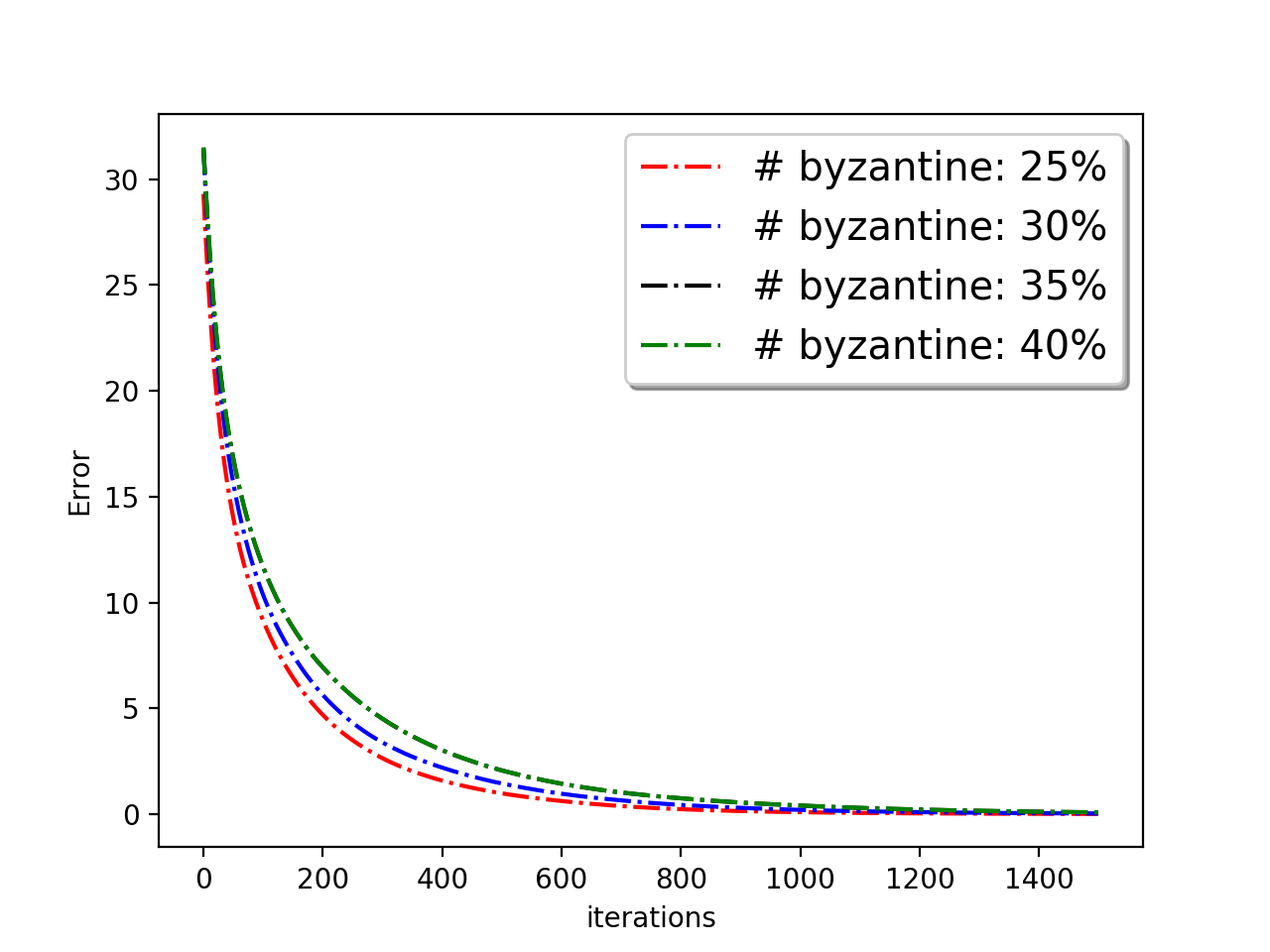} }}%
    %\qquad
    \subfloat[Training loss for ReLU net ]{{\includegraphics[width=4cm]{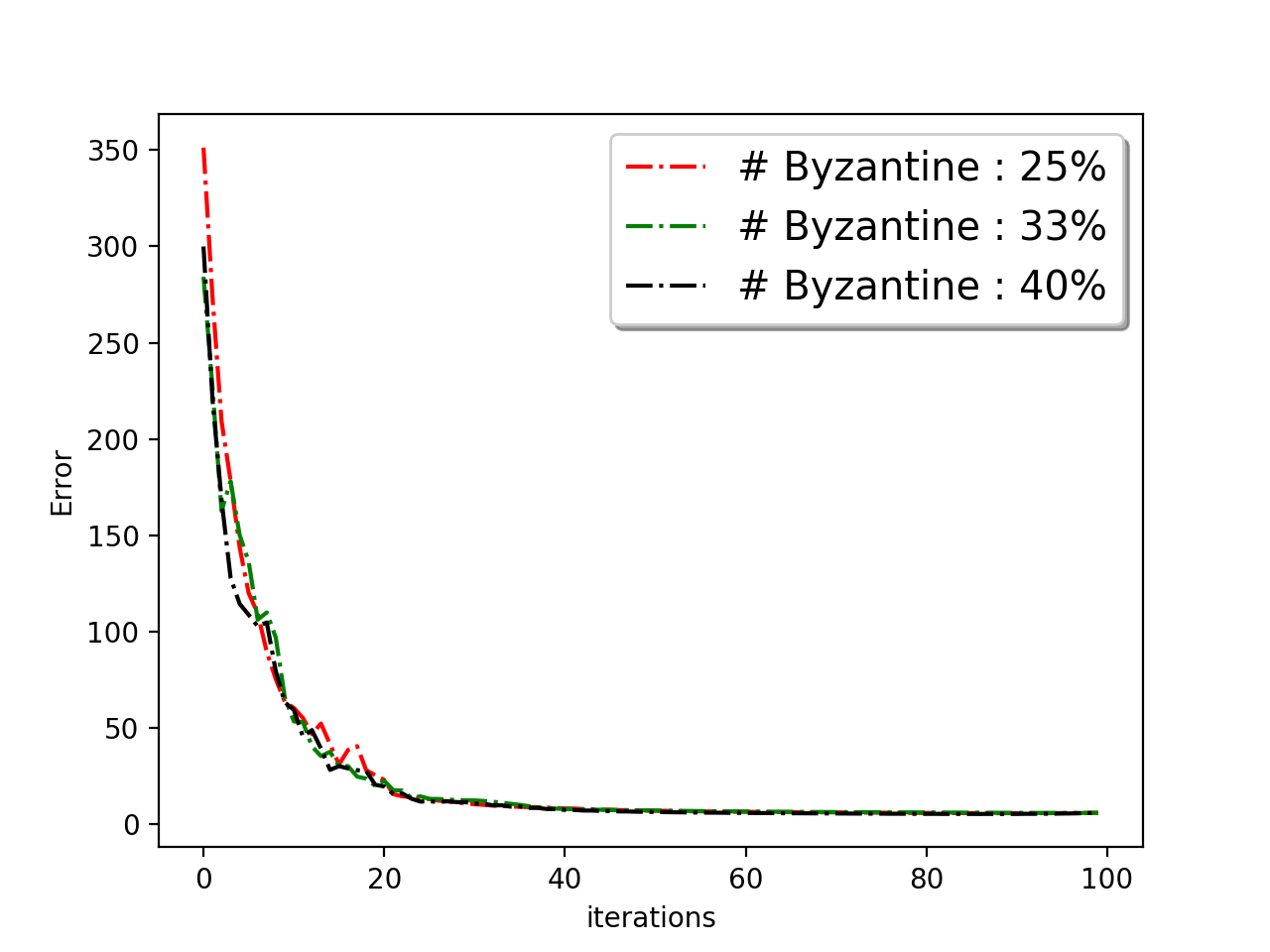} }}%
   % \caption{ } 
    %\label{fig:moreatk}
    %\vspace{-4mm}
\subfloat[Regression Problem]{{\includegraphics[width=4cm]{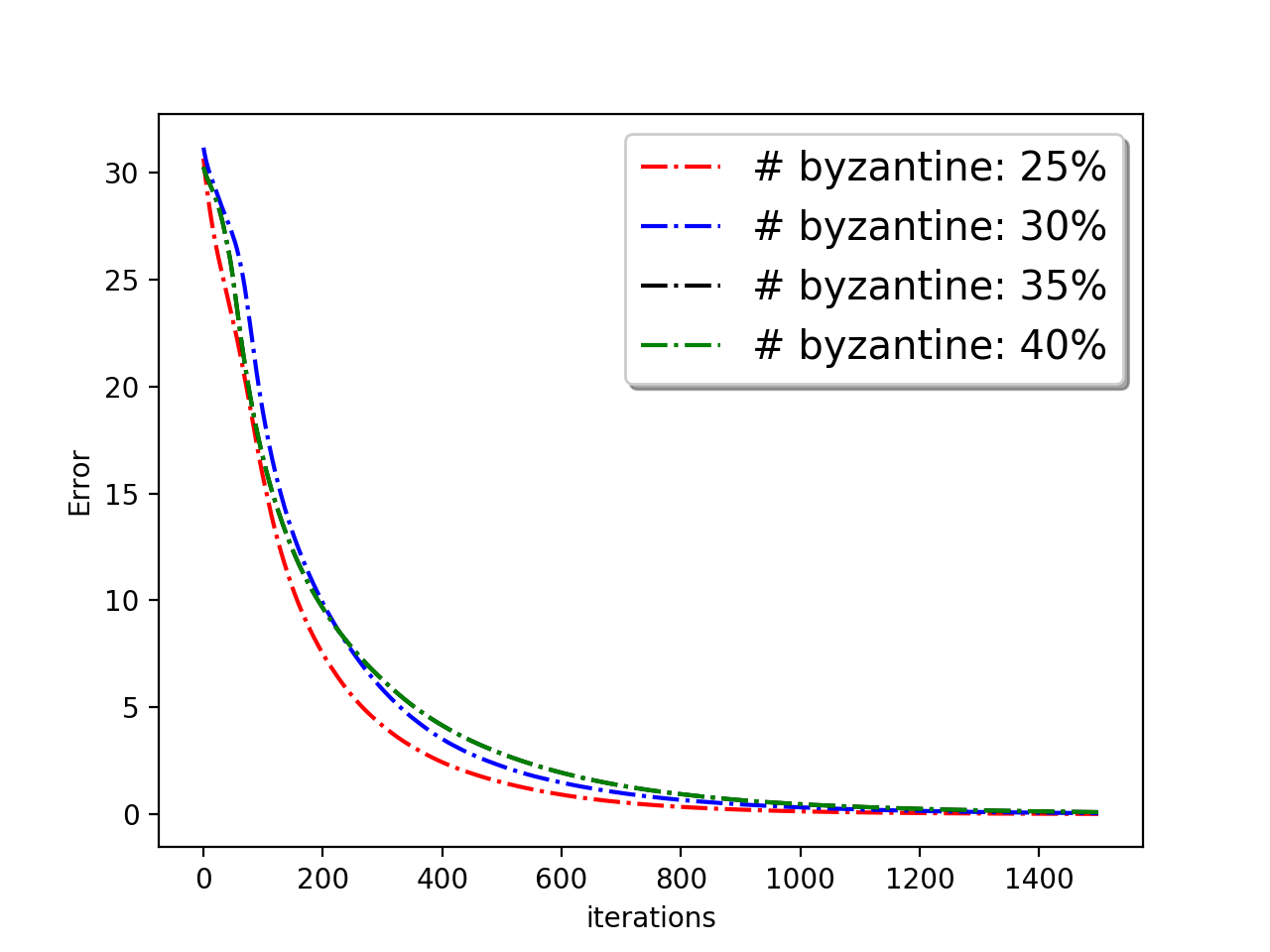} }}%
    %\qquad
    \subfloat[Training loss for ReLU net  ]{{\includegraphics[width=4cm]{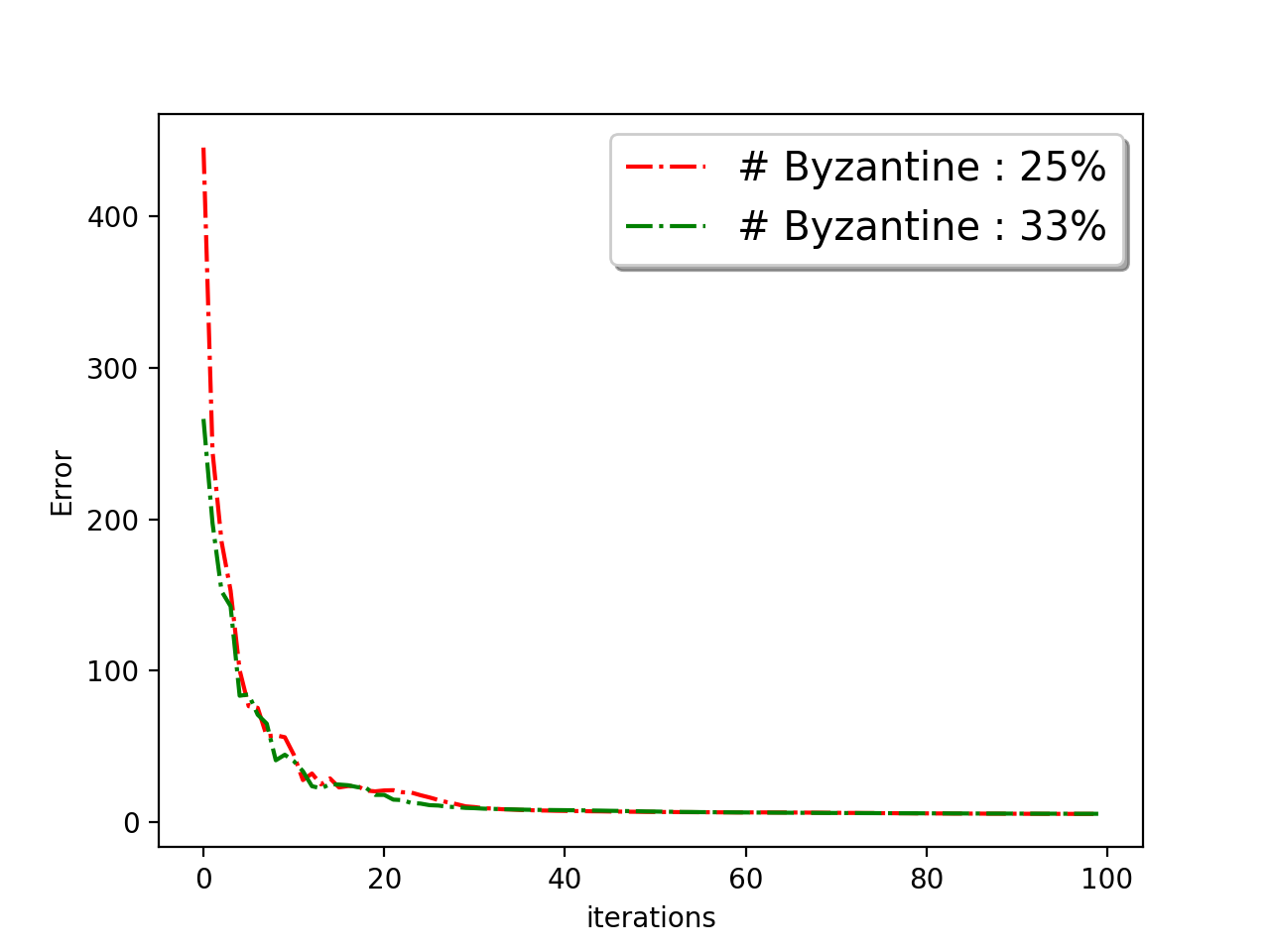} }}%
  
    \caption{Convergence for (a)  regression problem   (b) training (cross entropy) loss for MNIST image. Plots show convergence beyond the theoretical bound on the number of Byzantine machine. Convergence for (c)  regression problem   (d) training (cross entropy) loss for MNIST image. Plots show convergence with an negative Byzantine attack of $-\epsilon$ times the local gradient with high number of Byzantine machines for $\epsilon=0.9$.} %
    \label{fig:negatk}
\end{figure*}

\section{Conclusion and Future work}
 We address the problem of robust distributed optimization where the worker machines send the compressed gradient to the central machine. We propose a first order optimization algorithm, and consider the setting of restricted as well as arbitrary Byzantine machines. Furthermore, we consider the setup where error feedback is used to accelerate the learning process. We provide theoretical guarantees in all these settings and provide experimental validation under different setup. As an immediate future work, it might also be interesting to study a second order distributed optimization algorithm with compressed gradients and Hessians. In this paper we did not consider a few significant features in Federated Learning: (a) data heterogeneity across users and (b) data privacy of the worker machines. We keep these as our future endeavors.

\subsection*{Acknowledgments}
Avishek Ghosh and Kannan Ramchandran are supported in part by NSF grant NSF CCF-1527767. Raj Kumar Maity and Arya Mazumdar are supported by NSF grants NSF CCF 1642658 and 1618512. Swanand Kadhe is supported in part by National Science Foundation grants CCF-1748585 and CNS-1748692

%\clearpage
\bibliographystyle{IEEEtran}
%\bibliography{aistatref}
\bibliography{commbyz}
\section{Additional  Experiments}\label{sec:moreexp}
In Figure ~\ref{fig:mnistfail}, we show the convergence with deterministic label shift and negative update attack (previously explained in Section ~\ref{sec:experiment}) for MNIST dataset. We choose different number of Byzantine machines and the norm based threshloding scheme fails when $50\%$ of worker machines are Byzantine. This is actually a theoretical limit for which the robustness can be provided in Byzantine  resilience. 
 \begin{figure}[ht]
 \vspace{-20pt}
    \centering
    \subfloat[Deterministic Shift ]{{\includegraphics[width=4cm]{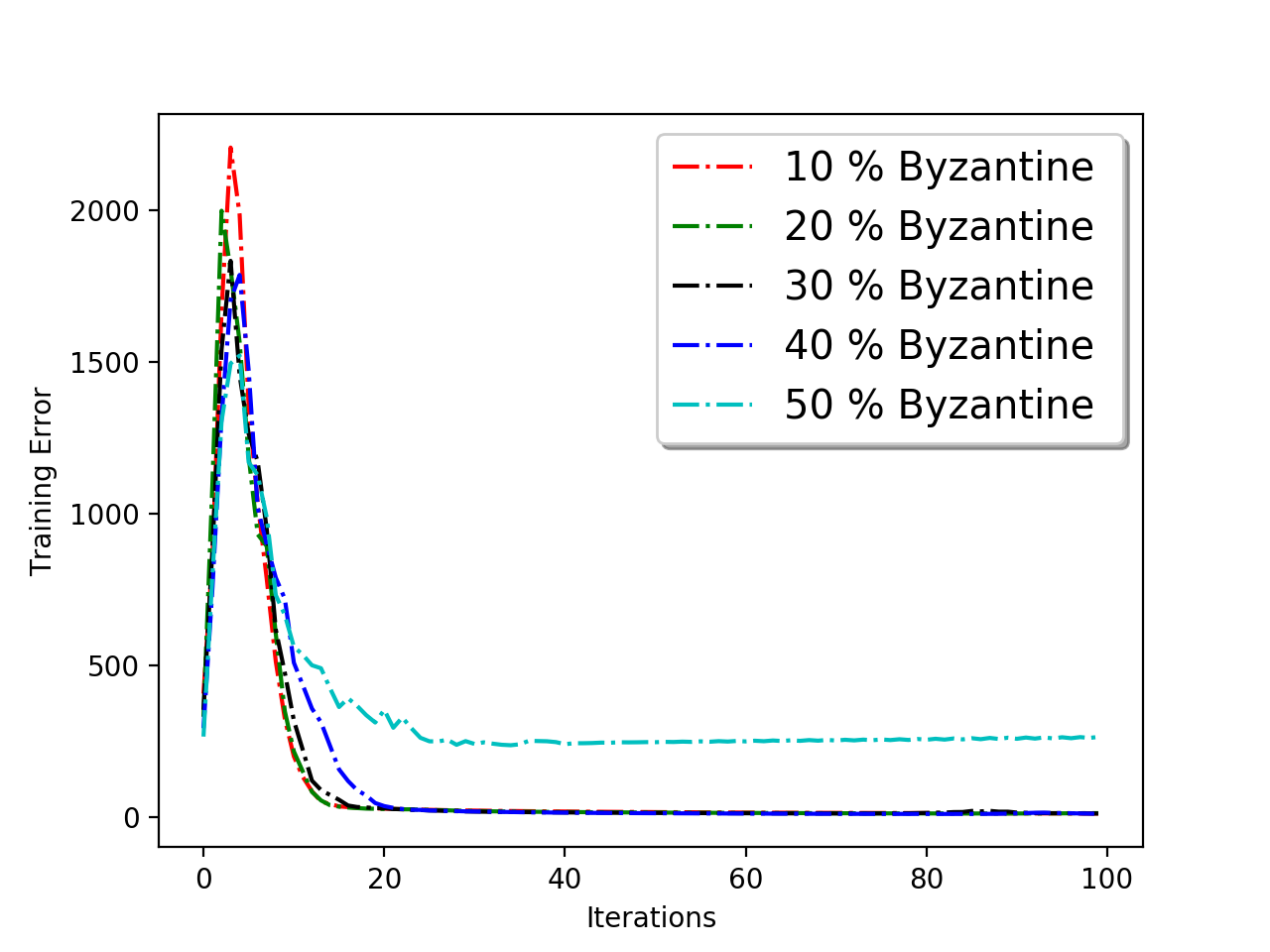} }}%
    %\qquad
    \subfloat[Negative update]{{\includegraphics[width=4cm]{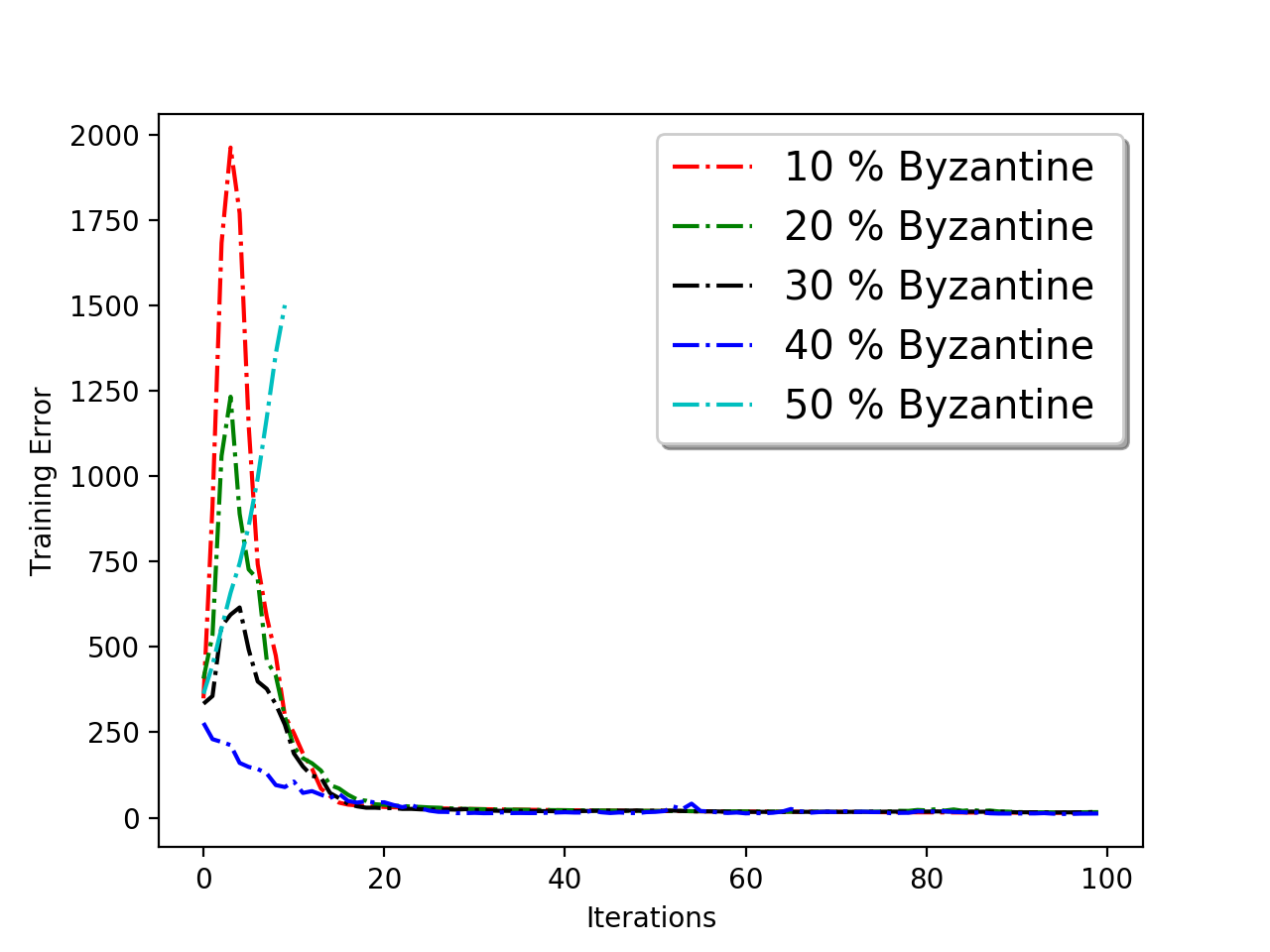} }}%
   % \qquad
    %\subfloat[Number of Byzantine nodes=20]{{\includegraphics[width=5cm]{withouterrofeed/Regression_d_1000_byz_20.png} }}%
    \caption{Convergence of (left) labels with deterministic shift (9-label) and  (Right)  negative Byzantine attack of $-\epsilon$ times the local gradient with different fraction of Byzantine machines for $\epsilon=0.9$ } %
    \label{fig:mnistfail}
\end{figure} 
 
\vspace{-10pt}
In Figure ~\ref{fig:mnistover} (left), we show the effect of negative attack in training neural network with  $20\%$ of worker machines are Byzantine. The plot shows the byzantine resilience  capability of norm based thresholding for negative update attack with different level of severity. The norm based thresholding provides  robustness for all the cases.
In Figure ~\ref{fig:mnistover} (right ), we demonstrate the scenario when the number of Byzantine machines are unknown to the algorithm and the number is either under or over-estimated. Our algorithm trimmed  $\beta$ fraction of  updates from the worker machines that is higher than the number of Byzantine machines. If the number of Byzantine machines are unknown then the safe idea is to trim more than $50\%$ of the updates. In the Figure ~\ref{fig:mnistover} (right), we entertain the idea of not knowing the number of Byzantine machines and trim more, exactly and less updates. To simulate this, we choose $40$ machines out $200$ to be Byzantine machines with Gaussian attack and trim $30,40,50$ updates. It is evident from the plot that in case of underestimating the number of Byzantine machines and trimming less number of machine leads to bad sub-optimal results.

\begin{figure}[h!]
\vspace{-20pt}
    \centering
    \subfloat[Negative attack ]{{\includegraphics[width=4cm]{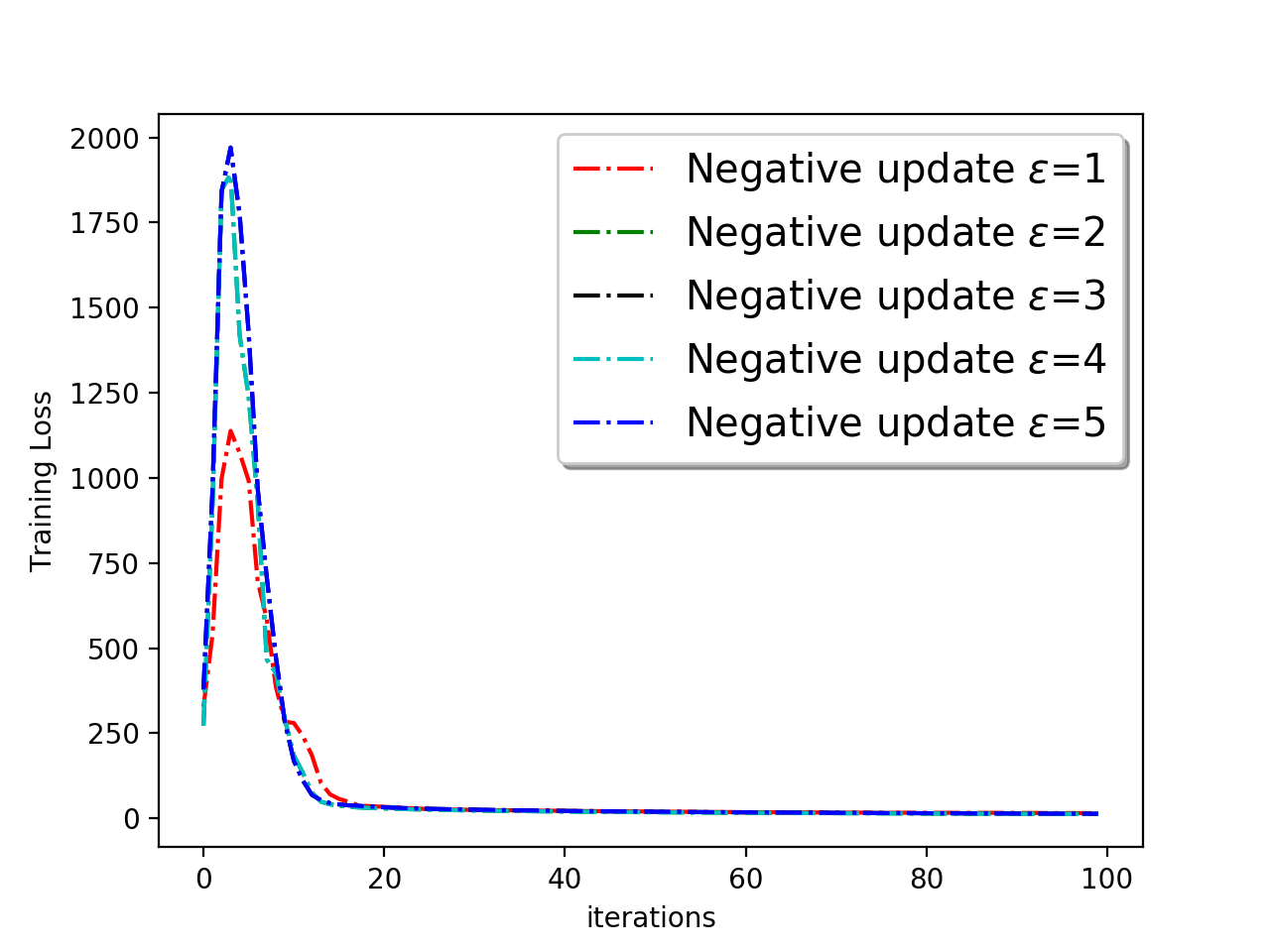} }}%
    %\qquad
    \subfloat[Unknown $\alpha$]{{\includegraphics[width=4cm]{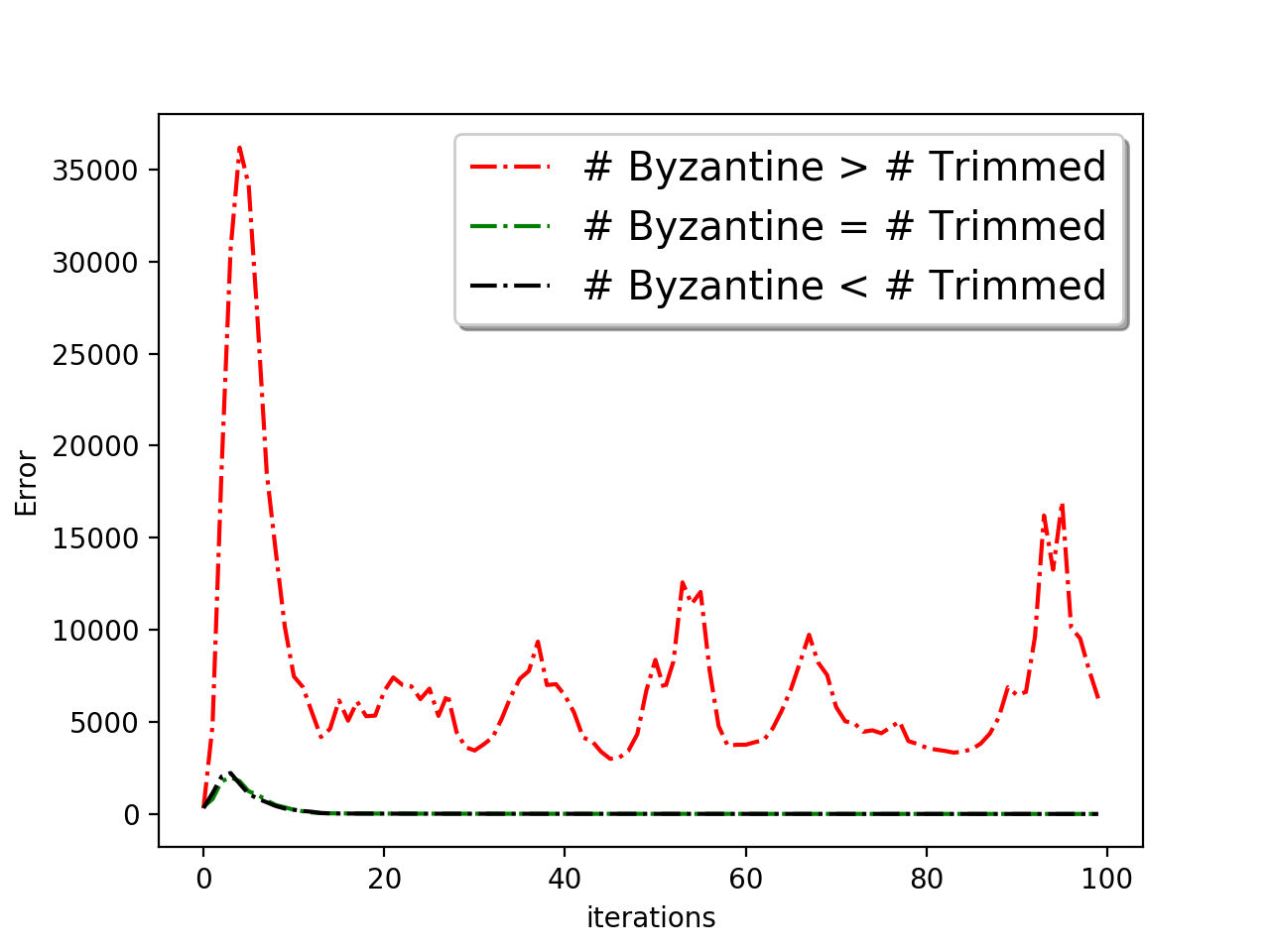} }}
    \caption{ (left) Plot shows the convergence of training Neural network with negative update attack $-\epsilon$ times the local gradient with 
    $\epsilon =1,2,3,4,5$   and $20\%$ of worker machines are Byzantine.  (Right) Plot shows convergence in training neural net when the number of Byzantine machine is over or underestimated.  }%
    \label{fig:mnistover}
\end{figure} 

\vspace{-10pt}
In Figure ~\ref{fig:bitscompare},  we compare the number of bits required  for the convergence of compressed and uncompressed $(\delta=1)$ case of our algorithm (Algorithm ~\ref{alg:main_algo}) upto a given precision for the regression problem. In particular, we choose $\|w_T - w^*\| \leq 0.1$ as a stopping criterion. In the bar plot, we report the total number of bits (in log) that the worker nodes send to the center machine. For the compression, we use the QSGD \cite{qsgd} compression scheme. We use $32$ bits to present a real number. The uncompressed scheme require at least $15 \times$ more bits to achieve the precision compare to the compressed version.

\begin{figure}[h!]
\vspace{-20pt}
    \centering
    \subfloat[Gaussian attack ]{{\includegraphics[width=4cm]{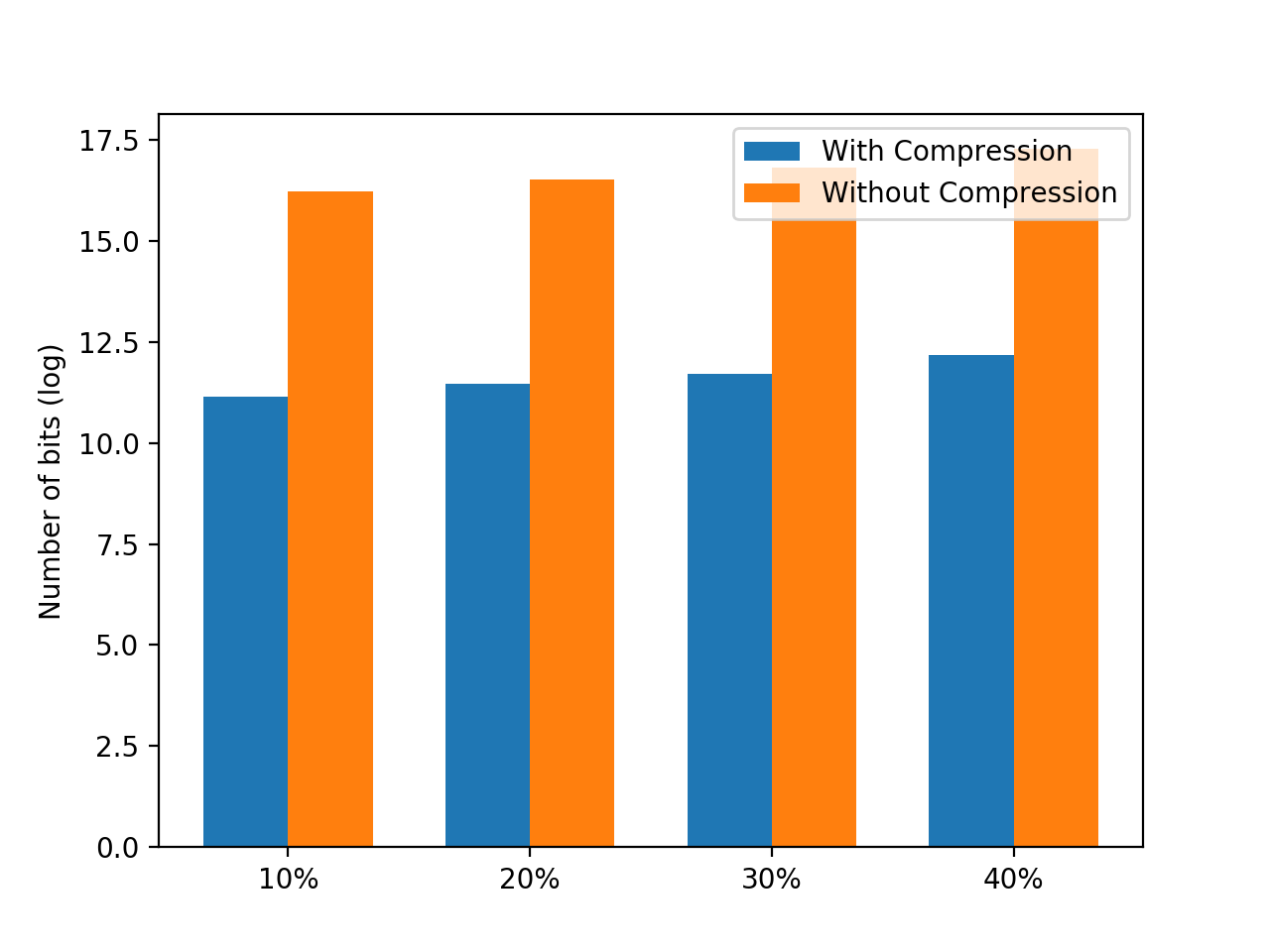} }}%
    %\qquad
    \subfloat[Negative attack]{{\includegraphics[width=4cm]{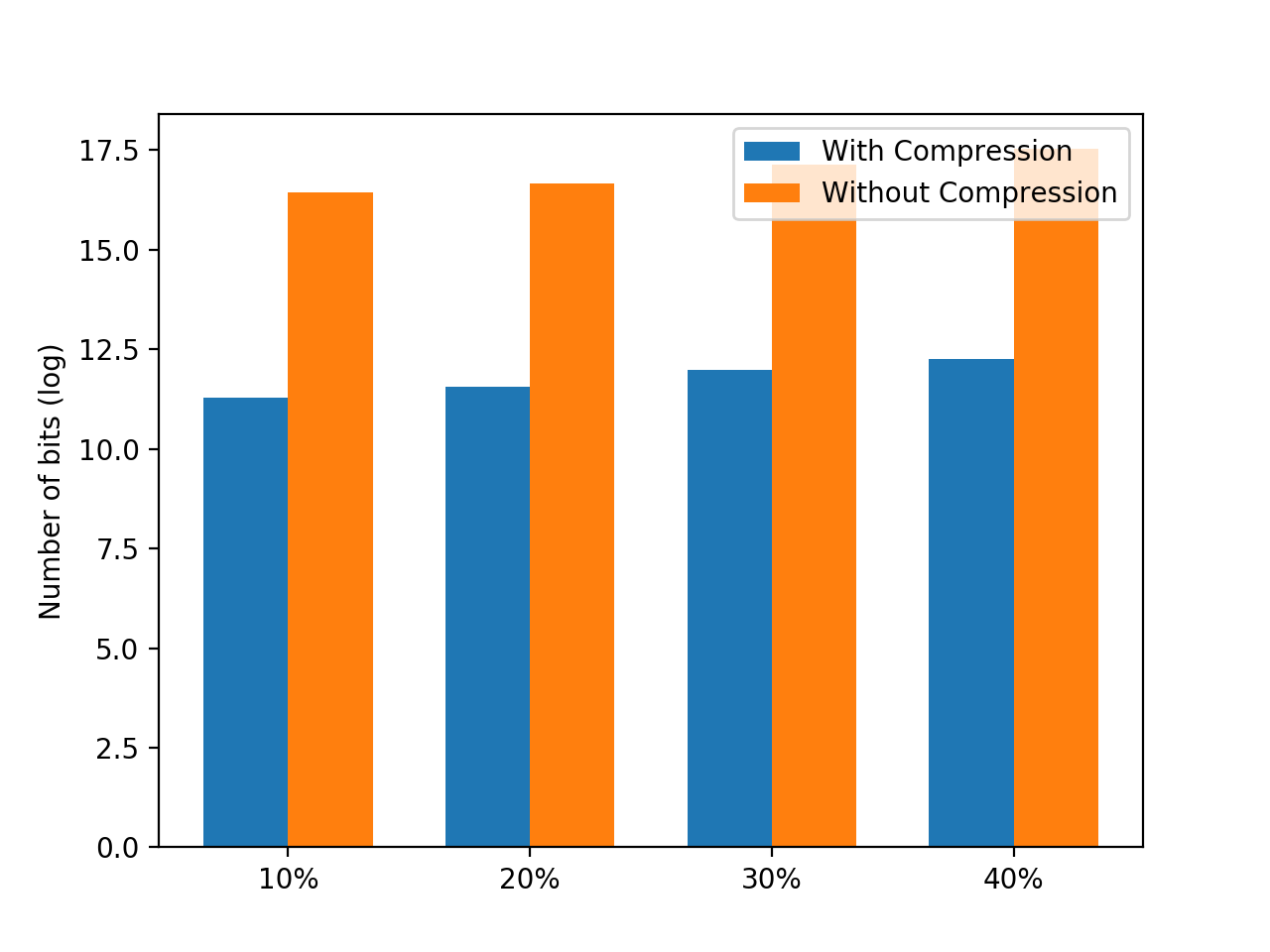} }}%
    \caption{ Comparison in terms of the average number of bits sent by each worker machines to the center machine for compressed and uncompressed case with norm based thresholding for negative update attack $(\epsilon=1)$ (left) and Gaussian attack (right). }%
    \label{fig:bitscompare}
\end{figure} 
In Figure ~\ref{fig:topk}, we show the plot of the convergence with norm based thresholding and co-ordinate wise trimmed mean \cite{dong} for $10\%$ Byzantine machines with Gaussian attack and Negative update attack. For compression, we use top-$k$ sparsification where the worker machines send top $k$ co-ordinate with maximum absolute values. We choose $k=100$. It is evident from the plot that norm based thresholding is a better robust scheme in sparsified domain.

\begin{figure}[h!]
\vspace{-20pt}
    \centering
    \subfloat[Gaussian attack ]{{\includegraphics[width=4cm]{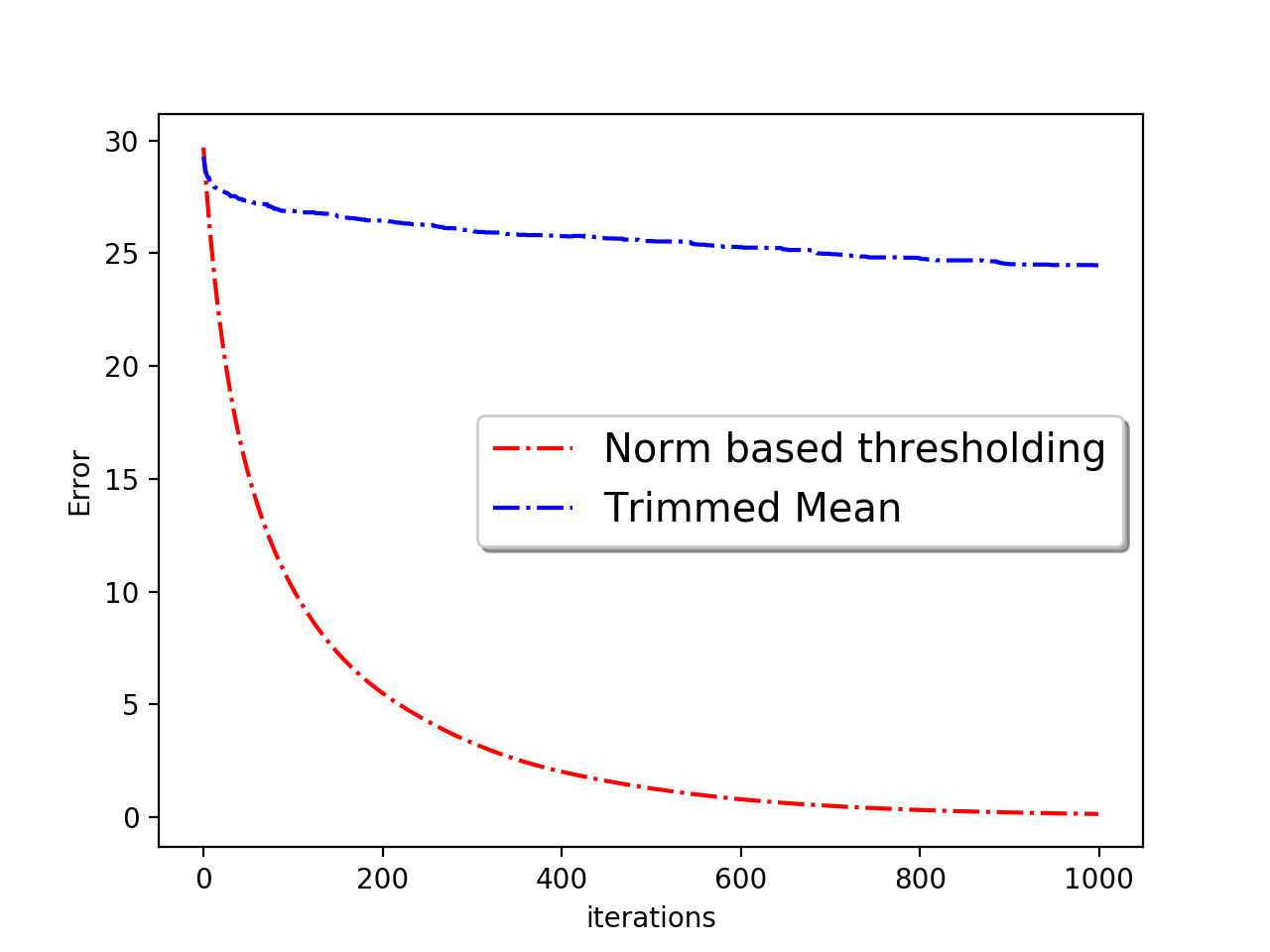} }}%
    %\qquad
    \subfloat[Negative attack]{{\includegraphics[width=4cm]{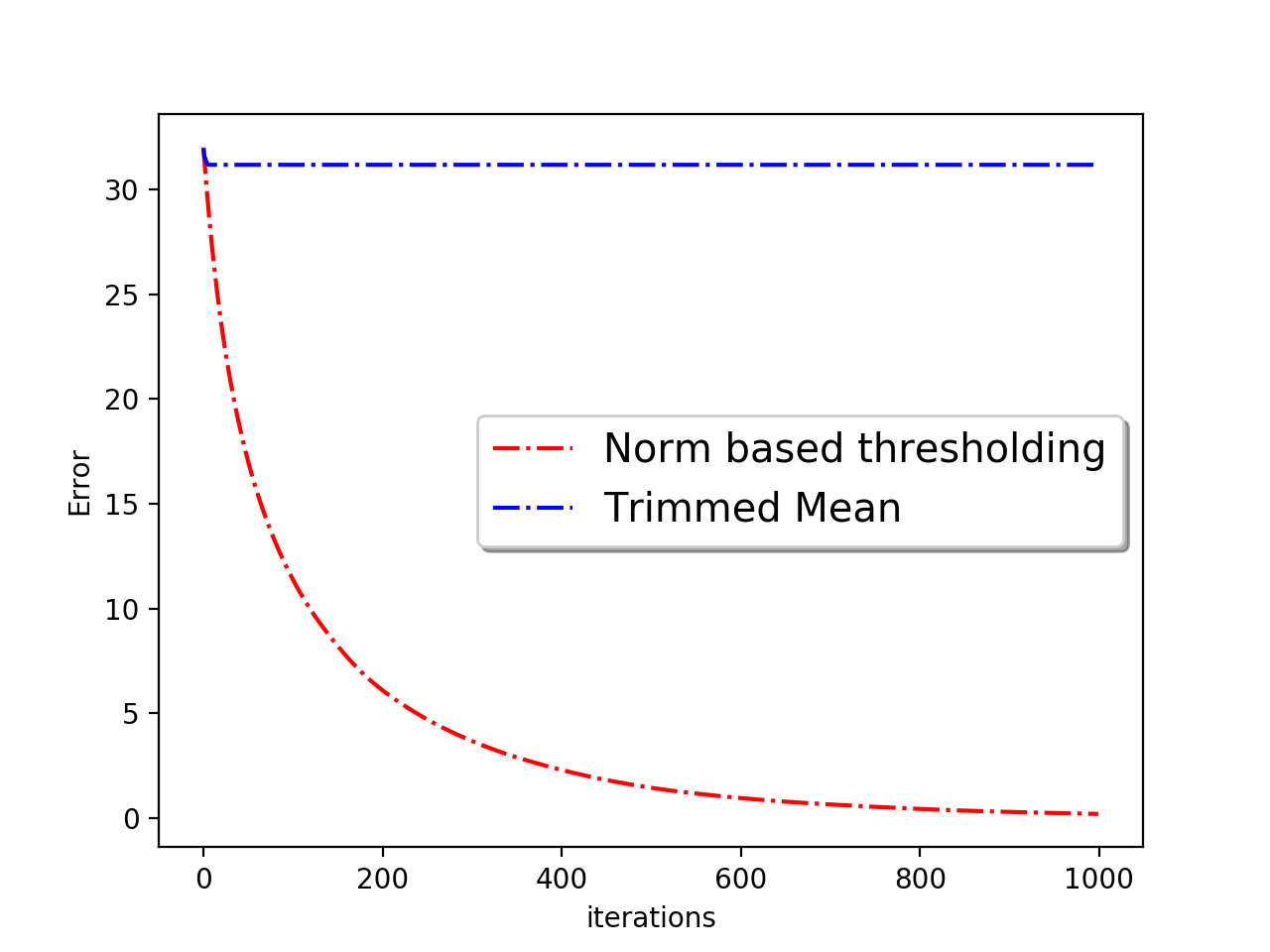} }}%
   % \qquad
    %\subfloat[Number of Byzantine nodes=20]{{\includegraphics[width=5cm]{withouterrofeed/Regression_d_1000_byz_20.png} }}%
    \caption{ Comparison of the convergence of the norm based thresholding and trimmed mean with top-k compression scheme for Gaussian attack (left) and Negative update attack with $\epsilon=1$ (right) for regression problem. We choose $k=100$ and $10\%$ Byzantine machines.  }%
    \label{fig:topk}
\end{figure}

\onecolumn

\begin{center}
%\large{\textbf{Supplementary Material for Communication Efficient Byzantine Robust Distributed Learning}}
\large{\textbf{APPENDIX}}
\end{center}

\section{Analysis of Algorithm~\ref{alg:main_algo}}
\label{sec:proofs}
In this section, we provide analysis of the Lemmas required for the proof of Theorem~\ref{thm:non_convex} and Theorem~\ref{thm:non_convex1}.

\textbf{Notation:} Let $\mathcal{M}$ and $\mathcal{B}$ denote the set of non-Byzantine and Byzantine worker machines. Furthermore, $\mathcal{U}_t$ and $\mathcal{T}_t$ denote untrimmed and trimmed worker machines. So evidently,
\begin{align*}
|\mathcal{M}|+|\mathcal{B}| = |\mathcal{U}_t|+|\mathcal{T}_t|= m.
\end{align*}

\subsection{Proof of Theorem~\ref{thm:non_convex}}

Let $g(w_t)= \frac{1}{|\mathcal{U}_t|}\sum_{i\in \mathcal{U}_t} \mathcal{Q}(\nabla F_i(w_t))$ and $\Delta = g(w_t) - \nabla F(w_t) $. We have the following Lemma to control of $\|\Delta\|^2$.
\begin{lemma}
\label{lem:delta_control}
For any $\lambda >0$, we have,
\begin{align*}
\| \Delta \|^2 \leq (1+\lambda) \left( \frac{\sqrt{1-\delta} + 2\alpha}{1-\beta} \right)^2 \| \nabla F(w_t) \|^2 + \tilde{\epsilon}(\lambda) 
\end{align*}
with probability greater than or equal to $1- \frac{c_1(1-\alpha)md}{(1+n \hat{L} D)^d} - \frac{c_2 d}{(1+(1-\alpha)m n \hat{L} D)^d}$, where
\begin{align*}
\tilde{\epsilon}(\lambda) = 2(1+\frac{1}{\lambda}) \left [ \left(\frac{\sqrt{1-\delta}+\alpha + \beta}{1-\beta} \right)^2 \epsilon_1^2 + \left( \frac{1-\alpha}{1-\beta} \right)^2 \epsilon_2^2 \right ] .
\end{align*}
with $\epsilon_1$ and $\epsilon_2$ as defined in equation~\eqref{eqn:epsilon_def} and \eqref{eqn:epsilon_tilde_def} respectively.
\end{lemma} 

The proof of the lemma is deferred to Section~\ref{sec:proof_lem_delta}. We prove the theorem using the above lemma.

We first show that with Assumption~\ref{asm:size_para} and with the choice of step size $\gamma$, we always stay in $\mathcal{W}$ without projection. Recall that $g(w_t)= \frac{1}{|\mathcal{U}_t|}\sum_{i\in \mathcal{U}_t} \mathcal{Q}(\nabla F_i(w_t))$ and $\Delta = g(w_t) - \nabla F(w_t) $. We have
\begin{align*}
\|w_{t+1} - w^* \| & \leq \|w_t - w^* \| + \gamma(\|\nabla F(w_t) \| + \|g(w_t) - \nabla F(w_t) \|) \\
& \leq  \|w_t - w^* \| + \frac{c}{L_F}(\|\nabla F(w_t)\| + \|\Delta \|)
\end{align*}

We use Lemma~\ref{lem:delta_control} with $\lambda = \lambda_0$ for a sufficiently small positive constant $\lambda_0$. Define  $\delta_0 = \left( 1- \frac{(1-\beta)^2}{1+\lambda_0} \right)$. A little algebra shows that provided  $\delta > \delta_0 + 4 \alpha - 9 \alpha^2 + 4\alpha^3$, we obtain
\begin{align*}
\|\Delta \|^2 \leq (1-c_0) \|\nabla F(w_t)\|^2 + \epsilon
\end{align*}
with probability greater than or equal to $1- \frac{c_1(1-\alpha)md}{(1+n \hat{L} D)^d} - \frac{c_2 d}{(1+(1-\alpha)m n \hat{L} D)^d}$, where $c_0$ is a positive constant and $\epsilon$ is defined in equation~\eqref{eqn:phi_def}.
Substituting, we obtain
\begin{align*}
\|w_{t+1} - w^* \|  & \leq  \|w_t - w^* \| + \frac{c_1}{L_F} \left( (1 + \sqrt{1-c_0})\|\nabla F(w_t)\| + \sqrt{\epsilon} \right) \\
 & \leq  \|w_t - w^* \| + \frac{c_1}{L_F} \left( (2 - \frac{c_0}{2})\|\nabla F(w_t)\| + \sqrt{\epsilon} \right).
\end{align*}
where we use the fact that $\sqrt{1-c_0}\leq 1 - c_0/2$. Now, running $T = C \frac{L_F(F(w_0) - F(w^*))}{\epsilon}$ iterations, we see that Assumption~\ref{asm:size_para} ensures that the iterations of Algorithm~\ref{alg:main_algo} is always in $\mathcal{W}$. Hence, let us now analyze the algorithm without the projection step.

\vspace{3mm}

Using the smoothness of $F(.)$, we have
\begin{align*}
    F(w_{t+1}) \leq F(w_t) & + \langle \nabla F(w_t), w_{t+1}-w_t \rangle  + \frac{L_F}{2}\|w_{t+1} - w_t \|^2.
\end{align*}
 Using the iteration of Algorithm~\ref{alg:main_algo}, we obtain
\begin{align*}
    F(w_{t+1})  & \leq F(w_t) - \gamma \langle \nabla F(w_t), \nabla F(w_t)+ \Delta \rangle  +\frac{ \gamma^2 L_F}{2}\|\nabla F(w_t) + \Delta \|^2 \\
    & \leq F(w_t) - \gamma \|\nabla F(w_t)\|^2 - \gamma \langle \nabla F(w_t),\Delta \rangle  + \frac{\gamma^2 L_F}{2} \|\nabla F(w_t)\|^2 + \frac{\gamma^2 L_F}{2}\|\Delta\|^2 + \gamma^2 L_F \langle \nabla F(w_t),\Delta \rangle \\
    & \leq F(w_t) - (\gamma - \frac{\gamma^2L_F}{2}) \|\nabla F(w_t) \|^2 + (\gamma + \gamma^2 L_F) \left( \frac{\rho}{2}\|\nabla F(w_t)\|^2 + \frac{1}{2\rho}\|\Delta\|^2 \right)+ \frac{\gamma^2 L_F}{2}\|\Delta\|^2,
\end{align*}
where $\rho > 0$ and the last inequality follows from Young's inequality. Substituting $\rho = 1$, we obtain

\begin{align*}
    (\gamma/2 - \gamma^2 L_F) \|\nabla F(w_t)\|^2  \leq F(w_t) - F(w_{t+1})\nonumber   + (\gamma/2 + \gamma^2 L_F) \| \Delta\|^2.
\end{align*}
We now use Lemma~\ref{lem:delta_control} to obtain
 \begin{align*}
    & (\frac{\gamma}{2} -\gamma^2 L_F) \| \nabla F(w_t) \|^2 \leq F(w_t) - F(w_{t+1}) \\
    & + (\gamma/2 + \gamma^2 L_F) \bigg ((1+\lambda) \left( \frac{\sqrt{1-\delta} + 2\alpha}{1-\beta} \right)^2 \| \nabla F(w_t) \|^2 + \tilde{\epsilon}(\lambda) \bigg ). 
 \end{align*}
with high probability. Upon further simplification, we have
\begin{align*}
    \bigg ( \frac{\gamma}{2} - \frac{\gamma}{2} (1+\lambda) \left( \frac{\sqrt{1-\delta} + 2\alpha}{1-\beta} \right)^2  - (1+\lambda) \left( \frac{\sqrt{1-\delta} + 2\alpha}{1-\beta} \right)^2 \gamma^2 L_F  - \gamma^2 L_F \bigg )   \| \nabla F(w_t)\|^2 \\
     \leq F(w_t) -  F(w_{t+1})  + (\gamma/2 + \gamma^2 L_F) \tilde{\epsilon}(\lambda).
\end{align*}
We now substitute $\gamma = \frac{c}{L_F}$, for a small enough constant $c$, so that we can ignore the contributions of the terms with quadratic dependence on $\gamma$. We substitute $\lambda = \lambda_0$ for a sufficiently small positive constant $\lambda_0$. Provided  $\delta > \delta_0 + 4 \alpha - 9 \alpha^2 + 4\alpha^3$, where $\delta_0 = \left( 1 - \frac{(1-\beta)^2}{1+\lambda_0}\right)^2$, we have
\begin{align*}
 \bigg ( \frac{\gamma}{2} - \frac{\gamma}{2} (1+\lambda) \left( \frac{\sqrt{1-\delta} + 2\alpha}{1-\beta} \right)^2  - (1+\lambda) \left( \frac{\sqrt{1-\delta} + 2\alpha}{1-\beta} \right)^2 \gamma^2 L_F  - \gamma^2 L_F \bigg ) = \frac{c_1}{L_F},
\end{align*}
where $c_1$ is a constant. With this choice, we obtain
\begin{align*}
\frac{1}{T+1}\sum_{t=0}^T \| \nabla F (w_t) \|^2 & \leq C_1  \frac{L_F(F(w_0) - F(w^*))}{T+1} + C_2  \epsilon
\end{align*}
where the first term is obtained from a telescopic sum and $\epsilon$ is defined in equation~\eqref{eqn:phi_def}. Finally, we obtain
\begin{align*}
    \min_{t=0,\ldots, T}\| \nabla F(w_t) \|^2 & \leq  C_1  \frac{L_F(F(w_0) - F(w^*))}{T+1} + C_2  \epsilon
\end{align*}
with probability greater than or equal to $1- \frac{c_1(1-\alpha)md}{(1+n \hat{L} D)^d} - \frac{c_2 d}{(1+(1-\alpha)m n \hat{L} D)^d}$, proving Theorem~\ref{thm:non_convex}.

\subsection{Proof of Theorem~\ref{thm:non_convex1}}
The proof of convergence for Theorem~\ref{thm:non_convex1} follows the same steps as Theorem~\ref{thm:non_convex}. Recall that the quantity of interest is 
 \begin{align*}
    \widetilde{\Delta} = g(w_t) -\nabla F(w_t)
\end{align*}
for which  we prove bound in the following lemma.
\begin{lemma}
\label{lem:delta_control1}
For any $\lambda >0$, we have,
\begin{align*}
\| \widetilde{\Delta} \|^2 \leq ((1+\lambda)\bigg(\frac{(1+\beta)\sqrt{1-\delta} +2\alpha}{1-\beta} \bigg)^2||\nabla F(w_t)||^2 + \widetilde{\epsilon}(\lambda)
\end{align*}
with probability greater than or equal to $1- \frac{c_1(1-\alpha)md}{(1+n \hat{L} D)^d} - \frac{c_2 d}{(1+(1-\alpha)m n \hat{L} D)^d}$, where
\begin{align*}
\tilde{\epsilon}(\lambda) = 2(1+\frac{1}{\lambda})\bigg(\bigg(\frac{(1+\beta)\sqrt{1-\delta} +\alpha+\beta}{1-\beta} \bigg)^2\epsilon^2_1+(\frac{1-\alpha}{1-\beta})^2\epsilon_2^2 \bigg) .
\end{align*}
with $\epsilon_1$ and $\epsilon_2$ as defined in equation~\eqref{eqn:epsilon_def} and \eqref{eqn:epsilon_tilde_def} respectively.
\end{lemma} 
Taking the above lemma for granted, we proceed to prove Theorem~\ref{thm:non_convex1}. The proof of  Lemma~\ref{lem:delta_control1} is deferred to Section~\ref{subsec:proof_delta_control1}.

The proof parallels the proof of \ref{thm:non_convex}, except the fact that we use Lemma~\ref{lem:delta_control1} to upper bound $\| \widetilde{\Delta} \|^2$. Correspondingly,  a little algebra shows that we require $\delta > \widetilde{\delta_0} + 4\alpha - 8\alpha^2 + 4\alpha^3$, where $\widetilde{\delta_0} = \left( 1-\frac{(1-\beta)^2}{(1+\beta)^2(1+\lambda_0)} \right)$, where $\lambda_0$ is a sufficiently small positive constant. With the above requirement, the proof follows the same steps as Theorem~\ref{thm:non_convex} and hence we omit the details here.

\subsection{Proof of Lemma~\ref{lem:delta_control}:}
\label{sec:proof_lem_delta}

We require the following result to prove Lemma~\ref{lem:delta_control}. In the following result, we show that for non-Byzantine worker machine $i$, the local gradient $\nabla F_i(w_t)$ is concentrated around the global gradient $\nabla F(w_t)$.
\begin{lemma}
\label{lem:ind_control}
For any $w \in \mathcal{W}$,  we have
\begin{align*}
   \max_{i \in \mathcal{M}} \| \nabla F_i(w) - \nabla F(w) \| \leq \epsilon_1
\end{align*}
with probability exceeding 
 $1-\frac{2(1-\alpha)md}{(1+n \hat{L} D)^d}$,  where $\epsilon_1$ is defined in equation~\eqref{eqn:epsilon_def}.
\end{lemma}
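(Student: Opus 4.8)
The plan is to establish the bound uniformly over $\mathcal{W}$ by combining pointwise sub-exponential concentration with a covering (net) argument, exactly in the spirit of the gradient-concentration results of \cite{dong}. The probability bound in the statement already signals this structure: the factor $(1+n\widehat{L}D)^d$ is the cardinality of an $\epsilon$-net of the diameter-$D$ parameter space $\mathcal{W}\subseteq\real^d$. First I would reduce the $\ell_2$ claim to a coordinatewise one: writing $\|\nabla F_i(w)-\nabla F(w)\|^2=\sum_{k=1}^d|\partial_k F_i(w)-\partial_k F(w)|^2$, it suffices to control each coordinate by some level $t$ and then pay a factor $\sqrt{d}$ when passing to the Euclidean norm.

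Next I would fix a coordinate $k\in[d]$, a non-Byzantine machine $i\in\mathcal{M}$, and a point $w\in\mathcal{W}$. Since $F_i(w)=\frac1n\sum_{j=1}^n f(w,z^{i,j})$ with the $z^{i,j}$ drawn i.i.d.\ from $\mathcal{D}$, the quantity $\partial_k F_i(w)-\partial_k F(w)=\frac1n\sum_{j=1}^n\big(\partial_k f(w,z^{i,j})-\E[\partial_k f(w,z)]\big)$ is an average of $n$ i.i.d.\ zero-mean $v$-sub-exponential variables by Assumption~\ref{asm:sub-exp}. A Bernstein-type tail bound then gives
\[
\Prob\Big(|\partial_k F_i(w)-\partial_k F(w)|\ge t\Big)\le 2\exp\Big(-c\,n\min\{t^2/v^2,\,t/v\}\Big),
\]
and it is exactly this $\min$ of the sub-Gaussian and sub-exponential regimes that later produces the $\max\{\frac{d}{n}\log(\cdot),\sqrt{\frac{d}{n}\log(\cdot)}\}$ shape of $\epsilon_1$.

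To make the bound uniform in $w$, let $\{w_1,\dots,w_N\}$ be a $\rho$-net of $\mathcal{W}$, so that $N\le(1+2D/\rho)^d$. A union bound over the $N$ net points, the $d$ coordinates, and the $(1-\alpha)m$ non-Byzantine machines controls $|\partial_k F_i(w_\ell)-\partial_k F(w_\ell)|$ simultaneously. For a general $w$, taking the nearest net point $w_\ell$ (so $\|w-w_\ell\|\le\rho$) and invoking Assumption~\ref{asm:struct_loss} — $\partial_k f(\cdot,z)$ is $L_k$-Lipschitz, hence so are $\partial_k F_i$ and $\partial_k F$ — adds a discretization error of at most $L_k\rho$ per coordinate, i.e.\ $\rho\,\widehat{L}$ in $\ell_2$. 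Choosing the net resolution $\rho$ appropriately simultaneously makes the $\ell_2$ discretization slack of order $1/n$ (the additive term in $\epsilon_1$) and renders the net exponent $\log N$ of order $d\log(1+2nD\widehat{L}d)$. Balancing $c\,n\min\{t^2/v^2,t/v\}$ against $\log N$ then forces the per-coordinate level $t=v\max\{\frac{d}{n}\log(1+2nD\widehat{L}d),\sqrt{\frac{d}{n}\log(1+2nD\widehat{L}d)}\}$; multiplying by $\sqrt{d}$ and adding the $1/n$ slack reproduces $\epsilon_1$ of \eqref{eqn:epsilon_def}, while the union count $(1-\alpha)m\cdot d\cdot N$ against the per-event tail yields the failure probability $\tfrac{2(1-\alpha)md}{(1+n\widehat{L}D)^d}$.

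The main obstacle is bookkeeping rather than conceptual: one must tune $\rho$ so that the three competing quantities — the Lipschitz discretization error, the net cardinality $\log N$ entering the exponent, and the target level $\epsilon_1$ — balance to the exact closed form in \eqref{eqn:epsilon_def}, and one must carry the two-regime $\min\{t^2/v^2,t/v\}$ Bernstein bound through the union bound without collapsing it prematurely, since it is responsible for the $\max\{\cdot,\cdot\}$ in $\epsilon_1$. One must also be careful that restricting to the index set $\mathcal{M}$ is essential: only the non-Byzantine machines hold genuine i.i.d.\ data, so the concentration — and hence the maximum over $i\in\mathcal{M}$ rather than over all $m$ machines — applies precisely to them.
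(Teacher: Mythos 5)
Your proposal is correct and follows essentially the same route as the paper's proof: coordinatewise sub-exponential (Bernstein-type) concentration, a union bound over the $d$ coordinates, the $(1-\alpha)m$ honest machines, and the points of a net of $\mathcal{W}$, followed by a Lipschitz discretization step with the net resolution tuned to $\Theta(1/(n\widehat{L}))$ so that the slack is the additive $1/n$ term in $\epsilon_1$ and the net cardinality produces the $(1+n\widehat{L}D)^d$ denominator in the failure probability. The only cosmetic difference is a factor of $2$ in the per-coordinate discretization error (the paper uses $2L_k\delta$ since both $\partial_k F_i$ and $\partial_k F$ move when passing to the nearest net point), which does not affect the final form of $\epsilon_1$.
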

Since the iterations $\{w_t\}_{t=1}^T \in \mathcal{W}$, we have the above lemma for all the iterates of our algorithm. Furthermore, we have the following Lemma which implies that the average of local gradients $\nabla F_i(w_t)$ over non-Byzantine worker machines is close to its expectation $\nabla F(w_t)$.
\begin{lemma}
\label{lem:machine_control}
 For any $w \in \mathcal{W}$, we have
\begin{align*}
    \|\frac{1}{|\mathcal{M}|}\sum_{i\in \mathcal{M}} \nabla F_i(w) - \nabla F(w) \| \leq  \epsilon_2.
\end{align*}
with probability exceeding 
 $1- \frac{2(1-\alpha)md}{(1+n \hat{L} D)^d} - \frac{2d}{(1+(1-\alpha)m n \hat{L} D)^d}$, where $\epsilon_2$ is defined in equation~\eqref{eqn:epsilon_tilde_def}.
\end{lemma}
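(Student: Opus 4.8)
The plan is to read the left-hand side as the deviation of a \emph{pooled} empirical gradient from the population gradient, and then to control that deviation \emph{uniformly} over $\mathcal{W}$ via a sub-exponential concentration bound at a fixed point combined with a covering-net argument. The structure is essentially that of Lemma~\ref{lem:ind_control}; the only change is that here the average pools the data of all honest machines rather than a single machine's, so the effective sample size is $N := (1-\alpha)mn$. Concretely, since $F_i(w)=\frac1n\sum_{j=1}^n f(w,z^{i,j})$ and $|\mathcal{M}|=(1-\alpha)m$, the quantity $\frac{1}{|\mathcal{M}|}\sum_{i\in\mathcal{M}}\nabla F_i(w)$ is exactly the empirical mean of $\nabla f(w,\cdot)$ over the $N$ i.i.d.\ points held by non-Byzantine machines, and its expectation is $\nabla F(w)=\E_{z\sim\mathcal{D}}[\nabla f(w,z)]$. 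Hence the target reduces to bounding $\bigl\|\frac1N\sum_{\ell=1}^N(\nabla f(w,z_\ell)-\nabla F(w))\bigr\|$.

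First I would fix a coordinate $k\in[d]$ and a fixed $w\in\mathcal{W}$. By Assumption~\ref{asm:sub-exp} each $\partial_k f(w,z_\ell)-\partial_k F(w)$ is zero-mean and $v$-sub-exponential, so a Bernstein-type tail bound gives $\Pr\bigl[|\frac1N\sum_\ell(\partial_k f(w,z_\ell)-\partial_k F(w))|>t\bigr]\le 2\exp\bigl(-\tfrac{N}{2}\min\{t^2/v^2,\,t/v\}\bigr)$. A union bound over the $d$ coordinates together with $\|\cdot\|_2\le\sqrt d\,\|\cdot\|_\infty$ converts this into an $\ell_2$-deviation bound: inverting the tail at effective log-cardinality $\tau$ yields a per-coordinate radius $t\asymp v\max\{\tau/N,\sqrt{\tau/N}\}$, and the $\sqrt d$ factor then produces the $v\sqrt d\max\{\tfrac{d}{N}\log(\cdot),\sqrt{\tfrac{d}{N}\log(\cdot)}\}$ form seen in $\epsilon_2$.

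Next I would make the bound uniform over $w$, which is necessary because the iterates $w_t$ are data-dependent. I would take a minimal $\eta$-net $\mathcal{N}_\eta$ of the diameter-$D$ set $\mathcal{W}$, of cardinality $|\mathcal{N}_\eta|\le(1+D/\eta)^d$, apply the fixed-point bound at each net point, and union bound, so that $\tau\asymp d\log(1+D/\eta)$. For an arbitrary $w$ I take its nearest net point $w'$; by Assumption~\ref{asm:struct_loss} the map $w\mapsto\nabla f(w,z)$, and hence $w\mapsto\nabla F(w)$, is $\hat L$-Lipschitz, so both $\|\frac1N\sum(\nabla f(w,\cdot)-\nabla f(w',\cdot))\|$ and $\|\nabla F(w)-\nabla F(w')\|$ are at most $\hat L\eta$. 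Choosing $\eta\asymp \frac{1}{N\hat L}$ makes $1+D/\eta\asymp 1+2ND\hat L d$, reproducing both the $\log(1+2(1-\alpha)mnD\hat L d)$ inside $\epsilon_2$ and the failure term $\frac{2d}{(1+(1-\alpha)mn\hat L D)^d}$; the resulting discretization error is of order $1/N=1/((1-\alpha)mn)$, which is strictly lower order and therefore absorbed (this is why, unlike $\epsilon_1$, the quantity $\epsilon_2$ carries no separate additive term). The extra term $\frac{2(1-\alpha)md}{(1+n\hat L D)^d}$ in the stated probability is inherited from intersecting with the good event of Lemma~\ref{lem:ind_control}, with which this lemma is used jointly inside Lemma~\ref{lem:delta_control}.

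I expect the only delicate part to be this uniformization bookkeeping: the fixed-$w$ concentration is routine, but one must choose the net radius $\eta$, the coordinate union, and the Bernstein exponent $\min\{t^2,t\}$ so that inverting the tail yields \emph{precisely} the $\max\{\cdot,\cdot\}$ form of $\epsilon_2$ and the exact stated probability, and then verify that the $\hat L\eta$ discretization error is genuinely negligible relative to the concentration term so it does not surface in $\epsilon_2$.
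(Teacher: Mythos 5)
Your proposal is correct and follows essentially the same route as the paper: the paper proves Lemma~\ref{lem:machine_control} by simply repeating the argument of Lemma~\ref{lem:ind_control} (coordinate-wise sub-exponential concentration, union bound over a $(1+D/\eta)^d$-covering net of $\mathcal{W}$, Lipschitz extension off the net) with the single machine's sample size $n$ replaced by the pooled size $(1-\alpha)mn$, which is exactly your reduction. Your remarks on the absence of the additive $1/N$ discretization term in $\epsilon_2$ and on the provenance of the extra $\frac{2(1-\alpha)md}{(1+n\hat{L}D)^d}$ failure-probability term are consistent with how the lemma is stated and used in the paper.
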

similarly, since the iterations $\{w_t\}_{t=1}^T \in \mathcal{W}$, we have the above lemma for all the iterates of our algorithm.

\vspace{3mm}
Recall the definition of $\Delta$. Using triangle inequality, we obtain
\begin{align*}
    \|\Delta\|  \leq \underbrace{\|\frac{1}{|\mathcal{U}_t|}\sum_{i\in \mathcal{U}_t} \mathcal{Q}(\nabla F_i(w_t)) - \frac{1}{|\mathcal{U}_t|}\sum_{i\in \mathcal{U}_t} \nabla F_i(w_t) \|}_{T_1} + \underbrace{ \|\frac{1}{|\mathcal{U}_t|}\sum_{i\in \mathcal{U}_t}  \nabla F_i(w_t) - \nabla F(w_t) \|}_{T_2}
\end{align*}

We first control $T_1$. Using the compression scheme (Definition~\ref{asm:compress}), we obtain
\begin{align*}
  T_1 =  & \|\frac{1}{|\mathcal{U}_t|}\sum_{i\in \mathcal{U}_t} \mathcal{Q}(\nabla F_i(w_t)) - \frac{1}{|\mathcal{U}_t|}\sum_{i\in \mathcal{U}_t} \nabla F_i(w_t) \|  \leq \frac{\sqrt{1-\delta}}{|\mathcal{U}_t|} \sum_{i\in \mathcal{U}_t} \|\nabla F_i(w_t) \| \\
    & \leq \frac{\sqrt{1-\delta}}{|\mathcal{U}_t|} \left[ \sum_{i \in \mathcal{M}} \|\nabla F_i(w_t) \| - \sum_{i \in \mathcal{M}\cap \mathcal{T}_t}  \|\nabla F_i(w_t) \| + \sum_{i \in \mathcal{B}\cap \mathcal{U}_t}  \|\nabla F_i(w_t) \| \right ] \\
   & \leq \frac{\sqrt{1-\delta}}{|\mathcal{U}_t|} \left[ \sum_{i \in \mathcal{M}} \|\nabla F_i(w_t) \| + \sum_{i \in \mathcal{B}\cap \mathcal{U}_t}  \|\nabla F_i(w_t) \| \right ]
\end{align*}
Since $\beta \geq \alpha$, we ensure that $\mathcal{M}\cap \mathcal{T}_t \neq \emptyset$. We have,
\begin{align*}
T_1 & \leq \frac{\sqrt{1-\delta}}{|\mathcal{U}_t|} \left[ \sum_{i \in \mathcal{M}} \|\nabla F_i(w_t) \| + \alpha m \max_{i \in \mathcal{M}}\|\nabla F_i(w_t) \| \right ] \\
& \leq \underbrace{\frac{\sqrt{1-\delta}}{|\mathcal{U}_t|} \left[ \sum_{i \in \mathcal{M}} \|\nabla F_i(w_t) - \nabla F(w_t) \| + \sum_{i \in \mathcal{M}}\| \nabla F(w_t) \|\right ] }_{T_3} \\
& + \underbrace{ \frac{\alpha m \sqrt{1-\delta}}{|\mathcal{U}_t|} \max_{i \in \mathcal{M}} \left[ \| \nabla F_i(w_t) - \nabla F(w_t)\| + \|\nabla F(w_t)\| \right]}_{T_4}
\end{align*}
We now upper-bound $T_3$. We have
\begin{align*}
T_3 & \leq \frac{\sqrt{1-\delta} |\mathcal{M}|}{|\mathcal{U}_t|} \max_{i \in \mathcal{M}} \|\nabla F_i(w_t) - \nabla F(w_t) \| + \frac{\sqrt{1-\delta} |\mathcal{M}|}{|\mathcal{U}_t|} \| \nabla F(w_t) \| \\
& \leq \frac{\sqrt{1-\delta} (1-\alpha)}{(1-\beta)}\max_{i \in \mathcal{M}} \|\nabla F_i(w_t) - \nabla F(w_t) \| + \frac{\sqrt{1-\delta} (1-\alpha)}{(1-\beta)} \| \nabla F(w_t) \| \\
& \leq \frac{\sqrt{1-\delta} (1-\alpha)}{(1-\beta)} \epsilon_1 + \frac{\sqrt{1-\delta} (1-\alpha)}{(1-\beta)} \| \nabla F(w_t) \|
\end{align*}
with probability exceeding 
 $1-\frac{2(1-\alpha)md}{(1+n \hat{L} D)^d}$, where we use Lemma~\ref{lem:ind_control}. Similarly, for $T_4$, we have
\begin{align*}
T_4 \leq \frac{\sqrt{1-\delta} \alpha}{1-\beta} \epsilon_1 + \frac{\sqrt{1-\delta} \alpha}{1-\beta} \|\nabla F(w_t) \|.
\end{align*}
We now control the terms in $T_2$. We obtain the following:
\begin{align*}
T_2 & \leq \frac{1}{|\mathcal{U}_t|} \|\sum_{i \in \mathcal{U}_t}  \nabla F_i(w_t) - \nabla F(w_t) \| \\
& \leq \frac{1}{|\mathcal{U}_t|} \| \sum_{i \in \mathcal{M}}  (\nabla F_i(w_t) - \nabla F(w_t)) - \sum_{i \in \mathcal{M} \cap \mathcal{T}_t}  (\nabla F_i(w_t) - \nabla F(w_t)) + \sum_{i \in \mathcal{B} \cap \mathcal{T}_t}  (\nabla F_i(w_t) - \nabla F(w_t)) \| \\
& \leq \frac{1}{|\mathcal{U}_t|} \| \sum_{i \in \mathcal{M}}  (\nabla F_i(w_t) - \nabla F(w_t)) \| + \frac{1}{|\mathcal{U}_t|} \|\sum_{i \in \mathcal{M} \cap \mathcal{T}_t}  (\nabla F_i(w_t) - \nabla F(w_t))\| \\
& + \frac{1}{|\mathcal{U}_t|} \|\sum_{i \in \mathcal{B} \cap \mathcal{T}_t}  (\nabla F_i(w_t) - \nabla F(w_t))\|.
\end{align*}
Using Lemma~\ref{lem:machine_control}, we have
\begin{align*}
\frac{1}{|\mathcal{U}_t|} \| \sum_{i \in \mathcal{M}}  (\nabla F_i(w_t) - \nabla F(w_t)) \| \leq \frac{1-\alpha}{1-\beta} \epsilon_2.
\end{align*}
with probability exceeding $1- \frac{2(1-\alpha)md}{(1+n \hat{L} D)^d} - \frac{2d}{(1+(1-\alpha)m n \hat{L} D)^d}$. Also, we obtain
\begin{align*}
\frac{1}{|\mathcal{U}_t|} \|\sum_{i \in \mathcal{M} \cap \mathcal{T}_t}  (\nabla F_i(w_t) - \nabla F(w_t))\| \leq \frac{\beta}{1-\alpha} \max_{i \in \mathcal{M}} \| \nabla F_i(w_t) - \nabla F(w_t) \| \leq \frac{\beta}{1-\alpha} \epsilon_1,
\end{align*}
with probability at least $1-\frac{2(1-\alpha)md}{(1+n \hat{L} D)^d}$, where the last inequality is derived from Lemma~\ref{lem:ind_control}. Finally, for the Byzantine term, we have
\begin{align*}
\frac{1}{|\mathcal{U}_t|} \|\sum_{i \in \mathcal{B} \cap \mathcal{T}_t}  (\nabla F_i(w_t) - \nabla F(w_t))\| & \leq \frac{\alpha}{1-\beta} \max_{i \in \mathcal{B} \cap \mathcal{T}_t} \|\nabla F_i(w_t)\| + \frac{\alpha}{1-\beta}\| \nabla F(w_t) \| \\
& \leq \frac{\alpha}{1-\beta} \max_{i \in \mathcal{M}} \|\nabla F_i(w_t)\|+ \frac{\alpha}{1-\beta}\| \nabla F(w_t) \| \\
& \leq \frac{\alpha}{1-\beta} \max_{i \in \mathcal{M}} \|\nabla F_i(w_t) - \nabla F(w_t)\|+ \frac{2\alpha}{1-\beta}\| \nabla F(w_t) \| \\
& \leq \frac{\alpha}{1-\beta} \epsilon_1 + \frac{2\alpha}{1-\beta}\| \nabla F(w_t) \|,
\end{align*}
with high probability, where the last inequality follows from Lemma~\ref{lem:ind_control}.

Combining all the terms of $T_1$ and $T_2$, we obtain,
\begin{align*}
\|\Delta\| \leq \frac{\sqrt{1-\delta} + 2\alpha}{1-\beta} \| \nabla F(w_t) \| + \frac{\sqrt{1-\delta} + \alpha + \beta}{1-\beta} \epsilon_1 + \frac{1-\alpha}{1-\beta} \epsilon_2.
\end{align*}
Now, using Young's inequality, for any $\lambda > 0$, we obtain
\begin{align*}
\| \Delta \|^2 \leq (1+\lambda) \left( \frac{\sqrt{1-\delta} + 2\alpha}{1-\beta} \right)^2 \| \nabla F(w_t) \|^2 + \tilde{\epsilon}(\lambda) 
\end{align*}
where
\begin{align*}
\tilde{\epsilon}(\lambda) = 2(1+\frac{1}{\lambda}) \left [ \left(\frac{\sqrt{1-\delta}+\alpha + \beta}{1-\beta} \right)^2 \epsilon_1^2 + \left( \frac{1-\alpha}{1-\beta} \right)^2 \right ] \epsilon_2^2.
\end{align*}

\subsection{Proof of Lemma~\ref{lem:ind_control}:}
For a fixed $i \in \mathcal{M}$, we first analyze the quantity $\|\nabla F_i(w_t) - \nabla F(w_t) \|$. Notice that $i$ is non-Byzantine. Recall that machine $i$ has $n$ independent data points. We use the sub-exponential concentration to control this term. Let us rewrite the concentration inequality.

\emph{Univariate sub-exponential concentration:} Suppose $Y$ is univariate random variable with $\E Y = \mu$ and $y_1,\ldots,y_n$ are i.i.d draws of $Y$. Also, $Y$ is $v$ sub-exponential. From sub-exponential concentration (Hoeffding's inequality), we obtain
\begin{align*}
    \Prob \left( |\frac{1}{n}\sum_{i=1}^n y_i -\mu| > t \right) \leq 2 \exp\{-n\min(\frac{t}{v},\frac{t^2}{v^2})\}.
\end{align*}

We directly use this to the $k$-th partial derivative of $F_i$. Let $\partial_k f (w_t, z^{i,j})$ be the partial derivative of the loss function with respect to $k$-th coordinate on $i$-th machine with $j$-th data point. From Assumption~\ref{asm:sub-exp}, we obtain
\begin{align*}
    \Prob \left(  |\frac{1}{n}\sum_{j=1}^n \partial_k  f (w_t, z^{i,j}) - \partial_k F(w_t)| \geq t \right)   \leq 2 \exp\{-n\min \left (\frac{t}{v},\frac{t^2}{v^2} \right)\}.
\end{align*}

Since $\nabla F_i (w_t) = \frac{1}{n}\sum_{j=1}^n \nabla f (w_t, z^{i,j})$, denoting $\nabla F_i^{(k)}(w_t)$ as the $k$-th coordinate of $\nabla F_i(w_t)$, we have
\begin{align*}
    |\nabla F_i^{(k)} (w_t) - \partial_k F(w_t)| \leq t
\end{align*}
with probability at least $1- 2 \exp\{-n\min(\frac{t}{v},\frac{t^2}{v^2})\}$.

This result holds for a particular $w_t$. To extend this for all $w \in \mathcal{W}$, we exploit the covering net argument and the Lipschitz continuity of the partial derivative of the loss function (Assumption~\ref{asm:struct_loss}). Let $\{w_1,\ldots,w_N\}$ be a $\delta$ covering of $\mathcal{W}$. Since $\mathcal{W}$ has diameter $D$, from Vershynin, we obtain $N \leq (1+\frac{D}{\delta})^d$. Hence with probability at least
\begin{align*}
    1-2 N d \exp\{-n\min \left(\frac{t}{v},\frac{t^2}{v^2} \right )\},
\end{align*}
we have 
\begin{align*}
    |\nabla F_i^{(k)} (w) - \partial_k F(w)| \leq t
\end{align*}
for all $w \in \{w_1,\ldots,w_N\}$ and $k \in [d]$. This implies
\begin{align*}
    \|\nabla F_i(w_t) - \nabla F(w_t) \| \leq t \sqrt{d},
\end{align*}
with probability greater than or equal to $1-2 N d \exp\{-n\min(\frac{t}{v},\frac{t^2}{v^2})\}$.

We now reason about $w \in \mathcal{W} \setminus \{w_1,\ldots,w_N \}$ via Lipschitzness (Assumption~\ref{asm:struct_loss}). From the definition of $\delta$ cover, for any $w \in \mathcal{W}$, there exists $w_\ell$, an element of the cover such that $\|w-w_\ell\| \leq \delta$. Hence, we obtain
\begin{align*}
    |\nabla F_i^{(k)} (w) - \partial_k F(w)| \leq t+ 2L_k \delta
\end{align*}
for all $w \in \mathcal{W}$ and consequently
\begin{align*}
     \|\nabla F_i(w_t) - \nabla F(w_t) \| \leq \sqrt{d}\, t + 2 \delta \hat{L}
\end{align*}
with probability at least $1-2 N d \exp\{-n\min(\frac{t}{v},\frac{t^2}{v^2})\}$, where $\hat{L}= \sqrt{\sum_{k=1}^d L_k^2}$.

Choosing $\delta = \frac{1}{2n \hat{L}}$ and
\begin{align*}
 t = v \max \{ \frac{d}{n} \log(1+2n\hat{L}d), \sqrt{\frac{d}{n} \log(1+2n\hat{L}d)}\},
\end{align*}
we obtain
\begin{align}
     \|\nabla F_i(w_t) - \nabla F(w_t) \|  \leq v \sqrt{d} \left ( \max \{ \frac{d}{n} \log(1+2n\hat{L}d), \sqrt{\frac{d}{n} \log(1+2n\hat{L}d)}\} \right ) + \frac{1}{n} = \epsilon_1, \label{eqn:epsilon}
\end{align}
with probability greater than $1- \frac{d}{(1+n\hat{L}D)^d}$. Taking union bound on all non-Byzantine machines yields the theorem.

\subsection{Proof of Lemma~\ref{lem:machine_control}}
%Recall that $(\mathcal{B}\cap \mathcal{U}_t)$ are the byzantine servers that we use in the update i.e not get trimmed by the $\mathtt{Byzantine \, remove}$ algorithm and $(\mathcal{M}\cap \mathcal{T}_t)$ are the non-byzantine server that get trimmed as outlier by the $\mathtt{Byzantine \, remove}$ algorithm. We have
%\begin{align*}
%  & \frac{1}{|\mathcal{U}_t|} \|\sum_{i\in \mathcal{U}_t} (\nabla F_i(w_t) - \nabla F(w_t)) \| \\
%   \leq & \frac{1}{|\mathcal{U}_t|} \| \sum_{i\in \mathcal{M}}(\nabla F_i(w_t) - \nabla F(w_t))\\
%    & + \sum_{i \in \mathcal{M}\cap \mathcal{T}_t}(\nabla F_i(w_t) - \nabla F(w_t)) \\
%   & -\sum_{i \in \mathcal{B}\cap \mathcal{U}_t} (\nabla F_i(w_t) - \nabla F(w_t))\|
%\end{align*}
%Using triangle inequality,
%\begin{align*}
%  & \frac{1}{|\mathcal{U}_t|} \|\sum_{i\in \mathcal{U}_t} (\nabla F_i(w_t) - \nabla F(w_t)) \\
%   \leq &  \frac{1}{|\mathcal{U}_t|} \| \sum_{i\in \mathcal{M}}(\nabla F_i(w_t) - \nabla F(w_t))\| \\
%  & + \frac{1}{|\mathcal{U}_t|} \| \sum_{i\in \mathcal{M}\cap T_t}(\nabla F_i(w_t) - \nabla F(w_t))\| \\
%  & + \frac{1}{|\mathcal{U}_t|} \| \sum_{i\in \mathcal{B} \cap \mathcal{U}_t}(\nabla F_i(w_t) - \nabla F(w_t))\|.
%\end{align*}
%
%\paragraph{Control of $\frac{1}{|\mathcal{U}_t|} \| \sum_{i\in \mathcal{M}}(\nabla F_i(w_t) - \nabla F(w_t))\|$:}

We need to upper bound the following quantity:
\begin{align*}
 \| \frac{1}{|\mathcal{M}|} \sum_{i\in \mathcal{M}}(\nabla F_i(w_t) - \nabla F(w_t))\|
\end{align*}

We now use similar argument (sub-exponential concentration) like Lemma~\ref{lem:ind_control}. The only difference is that in this case, we also consider \emph{averaging} over worker nodes. We obtain the following:
\begin{align*}
    \| \frac{1}{|\mathcal{M}|} \sum_{i\in \mathcal{M}}(\nabla F_i(w_t) - \nabla F(w_t))\| \leq \epsilon_2
\end{align*}
where 
\begin{align*}
    \epsilon_2 = v \sqrt{d} & \bigg (\max \{ \frac{d}{(1-\alpha)m n}\log (1+2(1-\alpha)m n\hat{L}d), \sqrt{\frac{d}{(1-\alpha)m n}\log (1+2(1-\alpha)m n\hat{L}d)} \} \bigg ),
\end{align*}
with probability $1-\frac{2d}{(1+(1-\alpha)m n \hat{L} D)^d}$.

%\paragraph{Control of the other terms:}
%We have
%\begin{align*}
%   & \frac{1}{|\mathcal{U}_t|} \| \sum_{i\in \mathcal{M}\cap T_t}(\nabla F_i(w_t) - \nabla F(w_t))\| \\
%   & \leq \frac{\beta}{1-\beta}\, \max_{i \in \mathcal{M}}  \|\nabla F_i(w_t) - \nabla F(w_t))\| \leq \frac{\beta}{1-\beta} \, \epsilon_1
%\end{align*}
%with probability greater than $1-\frac{2(1-\alpha)m d}{(1+n \hat{L}D)^d}$. Also since $\beta \geq \alpha$, without loss of generality, we have \swa{This needs to be verified.}
%\begin{align*}
% & \frac{1}{|\mathcal{U}_t|} \| \sum_{i\in \mathcal{B} \cap \mathcal{U}_t}(\nabla F_i(w_t) - \nabla F(w_t))\| \\
%  & \leq \frac{\alpha}{1-\beta} \,   \max_{i \in \mathcal{M}}  \|\nabla F_i(w_t) - \nabla F(w_t))\| \leq \frac{\alpha}{1-\beta} \, \epsilon_1
%\end{align*}
%with probability exceeding $1-\frac{2(1-\alpha) m d}{(1+n \hat{L}D)^d}$.
%
%Hence, combining the terms together, we obtain
%\begin{align*}
%    \frac{1}{|\mathcal{U}_t|} \|\sum_{i\in \mathcal{U}_t} (\nabla F_i(w_t) - \nabla F(w_t)) \| \leq \frac{1-\alpha}{1-\beta}\, \epsilon_2 + \frac{\alpha + \beta}{1-\beta} \, \epsilon_1.
%\end{align*}

\subsection{Proof of Lemma~\ref{lem:delta_control1}}
\label{subsec:proof_delta_control1}

Here we prove an upper bound on the norm of
\begin{align*}
    \widetilde{\Delta} = g(w_t) -\nabla F(w_t)
\end{align*}
where $g(w_t)=\frac{1}{|\mathcal{U}_t|} \sum_{i \in \mathcal{U}_t}Q(\nabla F_i(w_t))$.
%In the proof we are going to use the following useful result
%\begin{itemize}
%    \item If $\cM$ is the set of good machine and 
%    \begin{align}
%        |\cU_t| = |\cM| - |\cM \cap \cT_t| + |\cB \cap \cU_t| \label{mcal}  
%    \end{align}
%    \item The following has been proved on the assumption of sub-exponential gradient.
%    \begin{align}
%        || \nabla f_i(w_t)-\nabla F(w_t)|| \leq \epsilon_1 \label{subexp}
%    \end{align}
%    with high probability.
%\end{itemize}

We have 
\begin{align*}
    ||\widetilde{\Delta}|| =& || \frac{1}{|\mathcal{U}_t|} \sum_{i \in \mathcal{U}_t}Q(\nabla F_i(w_t)) -\nabla F(w_t) ||\\
    &= \frac{1}{|\cU_t|} || \sum_{i \in \cM}[Q(\nabla F_i(w_t)) -\nabla F(w_t)] - \sum_{i \in (\cM\cap \cT_t )}[Q(\nabla F_i(w_t)) -\nabla F(w_t)] \\
    & + \sum_{i \in (\cB\cap \cU_t )}[Q(\nabla F_i(w_t)) -\nabla F(w_t)] ||\\
    & \leq \frac{1}{|\cU_t|}\bigg(\underbrace{ || \sum_{i \in \cM}Q(\nabla F_i(w_t)) -\nabla F(w_t)||}_{T_1} +\underbrace{ ||\sum_{i \in (\cM\cap \cT_t )}Q(\nabla F_i(w_t)) -\nabla F(w_t)||}_{T_2}\\
    & +\underbrace{ ||\sum_{i \in (\cB\cap \cU_t )}Q(\nabla F_i(w_t)) -\nabla F(w_t)||}_{T_3} \bigg)
\end{align*}
Now we bound each term separately. For the first term, we have
\begin{align*}
  \frac{1}{|\cU_t|}  T_1 & =\frac{1}{|\cU_t|}|| \sum_{i \in \cM}Q(\nabla F_i(w_t)) -\nabla F(w_t)|| \\
    &=\frac{1}{|\cU_t|} ||\sum_{i \in \cM}Q(\nabla F_i(w_t))-\nabla F_i(w_t)|| + \frac{1}{|\cU_t|} ||\sum_{i \in \cM} \nabla F_i(w_t)-\nabla F(w_t)||\\
   &\leq \frac{1}{|\cU_t|} \sum_{i \in \cM} \bigg(||Q(\nabla F_i(w_t))-\nabla F_i(w_t)|| \bigg) + \frac{1-\alpha}{1-\beta}\epsilon_2 \\
   & \leq \frac{1}{|\cU_t|} \sum_{i \in \cM} \bigg(  \sqrt{1-\delta}||\nabla F_i(w_t)||\bigg) + \frac{1-\alpha}{1-\beta}\epsilon_2 \\
   & \leq \frac{\sqrt{1-\delta}}{|\cU_t|}\sum_{i \in \cM} \bigg(  ||\nabla F(w_t)||  +||\nabla F_i(w_t)-\nabla F(w_t)||\bigg) +\frac{1-\alpha}{1-\beta}\epsilon_2\\
  & \leq \frac{\sqrt{1-\delta}(1-\alpha)}{1-\beta} ||\nabla F(w_t)||  +\frac{\sqrt{1-\delta}(1-\alpha)}{1-\beta}\epsilon_1 +\frac{1-\alpha}{1-\beta}\epsilon_2
\end{align*}
where we use the definition of a $\delta$-approximate compressor, Lemma~\ref{lem:ind_control} and Lemma~\ref{lem:machine_control}. Similarly, we can bound $T_2$ as
\begin{align*}
    T_2 & \leq \sum_{i \in (\cM\cap \cT_t )} || Q(\nabla F_i(w_t)) -\nabla F(w_t)||\\
    & \leq \beta m \max_{i \in \mathcal{M}} || Q(\nabla F_i(w_t)) -\nabla F(w_t)|| \\
     & \leq  \beta m \max_{i \in \mathcal{M}} \bigg(\sqrt{1-\delta}|| \nabla F_i(w_t))|| + ||\nabla F_i(w_t) -\nabla F(w_t)|| \bigg)\\
     & \leq \beta m \max_{i \in \mathcal{M}} \bigg(\sqrt{1-\delta}|| \nabla F(w_t))|| +(1+\sqrt{1-\delta}) ||\nabla F_i(w_t) -\nabla F(w_t)|| \bigg)
     \end{align*}
where we use the definition of $\delta$-approximate compressor. Hence invoking Lemma~\ref{lem:ind_control}, we obtain
\begin{align*}
  \frac{1}{|\cU_t|}  T_2&\leq \frac{\beta\sqrt{1-\delta}}{1-\beta} || \nabla F(w_t))|| +    \frac{\beta(1+\sqrt{1-\delta})}{1-\beta}\epsilon_1
\end{align*}
 Also, owing to the trimming with $\beta > \alpha$, we have at least one good machine in the set $\cT_t$ for all $t$. Now each term in the set $\cB \cap \cU_t$, we have
\begin{align*}
    T_3 &= \sum_{i \in (\cB\cap \cU_t )}||Q(\nabla F_i(w_t)) -\nabla F(w_t)|| \\
   & \leq \alpha m (\max_{i\in \cM} ||Q(\nabla F_i(w_t))|| +||\nabla F(w_t)||)\\
   & \leq \alpha m (\max_{i\in \cM} \sqrt{1-\delta}|| \nabla F_i(w_t)|| +|| \nabla F_i(w_t)||+||\nabla F(w_t)||) \\
   & \leq  \alpha m \bigg((1+ \sqrt{1-\delta})\epsilon_1 +(2+\sqrt{1-\delta})||\nabla F(w_t)||\bigg)\\
  \frac{1}{|\cU_t|}  T_3 & \leq \frac{\alpha(2+\sqrt{1-\delta})}{1-\beta}||\nabla F(w_t)||   +\frac{\alpha(1+\sqrt{1-\delta})}{1-\beta}\epsilon_1
\end{align*}
where we use Lemma~\ref{lem:ind_control}. Putting $T_1,T_2,T_3$ we get 
\begin{align*}
||\widetilde{\Delta} || & \leq \bigg(\frac{\sqrt{1-\delta}(1-\alpha)}{1-\beta} +\frac{\beta\sqrt{1-\delta}}{1-\beta}+\frac{\alpha(2+\sqrt{1-\delta})}{1-\beta}  \bigg)||\nabla F(w_t)|| \\
& + \bigg(\frac{\sqrt{1-\delta}(1-\alpha)}{1-\beta} + \frac{\beta(1+\sqrt{1-\delta})}{1-\beta}+\frac{\alpha(1+\sqrt{1-\delta})}{1-\beta}\bigg) \epsilon_1 
+\frac{1-\alpha}{1-\beta}\epsilon_2\\
&=\bigg(\frac{(1+\beta)\sqrt{1-\delta} +2\alpha}{1-\beta} \bigg)||\nabla F(w_t)||+ \bigg(\frac{(1+\beta)\sqrt{1-\delta} +\alpha+\beta}{1-\beta} \bigg)\epsilon_1+\frac{1-\alpha}{1-\beta}\epsilon_2\\
||\widetilde{\Delta}||^2&\leq (1+\lambda)\bigg(\frac{(1+\beta)\sqrt{1-\delta} +2\alpha}{1-\beta} \bigg)^2||\nabla F(w_t)||^2 + \widetilde{\epsilon}(\lambda)
\end{align*}
where $\widetilde{\epsilon}(\lambda) = 2(1+\frac{1}{\lambda})\bigg(\bigg(\frac{(1+\beta)\sqrt{1-\delta} +\alpha+\beta}{1-\beta} \bigg)^2\epsilon^2_1+(\frac{1-\alpha}{1-\beta})^2\epsilon_2^2 \bigg)$.
Hence, the lemma follows.

\section{Bound on Gradient Norm}
\label{sec:bounded}

\begin{proposition}\label{prop:bounded}
Consider the $i$-th worker machine has data label pair $(X,y)$ where $ X\in \mathbb{R}^{n\times d}$ and $y \in \mathbb{R}^n$, and $n \geq d$. Suppose each data point $j \in [n]$ is  generated by $y_j= x_j^Tw^* + \eta_j$ for some $d$ dimensional regressor $w^* \in \mathcal{W}$ and  noise $\eta_j \sim \mathcal{N}(0,1)$ drawn independently. Moreover, assume that the data points $x_j \sim \mathcal{N}(0,I_d)$ for all $j \in [n]$, and the loss function is given by $F_i(w;X,y)= \frac{1}{2n}\|y-Xw\|^2$. We obtain $\|\nabla F_i (w; X,y)\|^2 \lesssim D^2$ with probability exceeding $1-4\exp(-c n)$.
\end{proposition}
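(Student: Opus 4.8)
The plan is to compute the gradient in closed form, decompose it into a \emph{signal} term that is controlled by the diameter $D$ and a \emph{noise} term that is order-$1$, and then invoke standard concentration inequalities for Gaussian matrices and $\chi^2$ random variables. Since $F_i(w;X,y) = \frac{1}{2n}\|y - Xw\|^2$, the gradient is $\nabla F_i(w) = \frac{1}{n}X^\top(Xw - y)$. Substituting the generative model $y = Xw^* + \eta$ with $\eta = (\eta_1,\dots,\eta_n)^\top$ gives
\begin{align*}
\nabla F_i(w) = \frac{1}{n}X^\top X (w - w^*) - \frac{1}{n}X^\top \eta,
\end{align*}
and by the triangle inequality $\|\nabla F_i(w)\| \le \frac{1}{n}\|X^\top X(w - w^*)\| + \frac{1}{n}\|X^\top\eta\|$. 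For the signal term I would bound $\|X^\top X(w-w^*)\| \le \|X\|_{\mathrm{op}}^2\,\|w - w^*\| \le \|X\|_{\mathrm{op}}^2\, D$, using that both $w$ and $w^*$ lie in $\mathcal{W}$, which has diameter $D$.

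Next I would control $\|X\|_{\mathrm{op}}$ and $\|\eta\|$ by concentration. For a matrix $X \in \real^{n\times d}$ with i.i.d.\ $\NORMAL(0,1)$ entries, the non-asymptotic bound $\|X\|_{\mathrm{op}} \le \sqrt{n} + \sqrt{d} + t$ holds with probability at least $1 - 2e^{-t^2/2}$; choosing $t = \sqrt{n}$ and using $n \ge d$ yields $\|X\|_{\mathrm{op}} \le 3\sqrt{n}$, i.e.\ $\|X\|_{\mathrm{op}}^2 \le 9n$, with probability at least $1 - 2e^{-n/2}$. Hence the signal term is at most $\frac{1}{n}\cdot 9n\cdot D = 9D$. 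For the noise term I would write $\frac{1}{n}\|X^\top\eta\| \le \frac{1}{n}\|X\|_{\mathrm{op}}\|\eta\|$; since $\|\eta\|^2$ is a $\chi^2_n$ variable, the standard $\chi^2$ tail bound gives $\|\eta\|^2 \le 2n$ with probability at least $1 - 2e^{-cn}$, so on the intersection of the two events $\frac{1}{n}\|X^\top\eta\| \le \frac{1}{n}\cdot 3\sqrt{n}\cdot \sqrt{2n} = 3\sqrt{2}$, an absolute constant.

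Combining these, on the intersection of the two events (whose complement has probability at most $4e^{-cn}$ after a union bound, matching the stated factor of $4$), $\|\nabla F_i(w)\| \le 9D + 3\sqrt{2} \lesssim D$, so $\|\nabla F_i(w)\|^2 \lesssim D^2$, where I absorb the $\order(1)$ additive constant into $D^2$ (treating $D \gtrsim 1$, as is standard for a nontrivial parameter set). The argument is essentially routine; the only point requiring care is the bookkeeping of the two concentration events and confirming that the additive noise contribution is order-$1$ rather than scaling with $D$. This last fact is structural: the term $\frac{1}{n}X^\top\eta$ is independent of $w$, so it cannot be amplified by the diameter, and its expected squared norm is $\frac{1}{n^2}\,\E\|X^\top\eta\|^2 = \frac{d}{n}\le 1$, consistent with the high-probability bound. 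I expect no substantive obstacle beyond invoking the Gaussian operator-norm and $\chi^2$ tail inequalities with matching exponents.
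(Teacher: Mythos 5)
Your argument is essentially identical to the paper's: both compute $\nabla F_i(w) = \frac{1}{n}X^\top(X(w-w^*) - \eta)$, bound the signal part by $\|X\|_{\mathrm{op}}^2 D/n$ and the noise part via $\|X\|_{\mathrm{op}}\|\eta\|/n$, and invoke the standard Gaussian operator-norm bound together with $\chi^2$ concentration for $\|\eta\|^2$, absorbing the resulting $\order(1)$ additive term into $D^2$. If anything, your choice of deviation $t=\sqrt{n}$ in the operator-norm bound yields the claimed $e^{-cn}$ failure probability slightly more cleanly than the paper's $1-2\exp(-cd)$ statement, but this is a cosmetic difference, not a different route.
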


\begin{corollary}
Suppose we consider a fixed design setup, where the data matrix, $X$ is deterministic, with $\|X\|_{op} \leq R$. In this framework, with the same least squared loss, we have, $\|\nabla F_i (w; X,y)\|^2 \lesssim (\frac{R^2 D}{n} + \frac{R}{\sqrt{n}})^2$ with probability exceeding $1-4\exp(-c n)$.
\end{corollary}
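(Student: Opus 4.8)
The plan is to exploit the closed form of the least-squares gradient and split it into a deterministic ``bias'' term and a stochastic ``noise'' term driven by $\eta$. First I would use the generative model $y = Xw^* + \eta$ with $\eta \sim \mathcal{N}(0,I_n)$, so that
\begin{align*}
\nabla F_i(w; X, y) = \frac{1}{n}X^\top(Xw - y) = \frac{1}{n}X^\top X\,(w - w^*) - \frac{1}{n}X^\top \eta.
\end{align*}
The triangle inequality then gives $\|\nabla F_i(w; X, y)\| \le \frac{1}{n}\|X^\top X(w-w^*)\| + \frac{1}{n}\|X^\top\eta\|$, and it suffices to control the two pieces separately. The first (deterministic) term is immediate: since $w, w^* \in \mathcal{W}$ and $\mathcal{W}$ has diameter $D$, we have $\|w - w^*\| \le D$, so using $\|X\|_{op} \le R$,
\begin{align*}
\frac{1}{n}\|X^\top X(w - w^*)\| \le \frac{\|X\|_{op}^2}{n}\,\|w - w^*\| \le \frac{R^2 D}{n}.
\end{align*}

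The second (stochastic) term is the crux, and I would handle it via concentration of the Gaussian quadratic form $\|X^\top\eta\|^2 = \eta^\top M \eta$, where $M := XX^\top$ is positive semidefinite. The relevant moment quantities all follow from $\|X\|_{op}\le R$ together with $\rank(X) \le d$ (valid since $n \ge d$): writing $\sigma_i$ for the singular values of $X$, we get $\tr(M) = \sum_i \sigma_i^2 \le d R^2$, $\|M\|_{op} = \sigma_{\max}^2 \le R^2$, and $\|M\|_F^2 = \tr(X^\top X X^\top X) = \sum_i \sigma_i^4 \le d R^4$. Hanson--Wright then yields, for any $t>0$,
\begin{align*}
\Prob\left(\eta^\top M \eta > \tr(M) + t\right) \le 2\exp\left(-c\,\min\left\{\frac{t^2}{\|M\|_F^2}, \frac{t}{\|M\|_{op}}\right\}\right).
\end{align*}
Choosing $t = C n R^2$ and again invoking $n \ge d$ makes both arguments of the minimum of order $n$: indeed $t^2/\|M\|_F^2 \ge C^2 n^2/d \ge C^2 n$ and $t/\|M\|_{op} = Cn$, so the failure probability is at most $2\exp(-c'n)$. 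On the complementary event, $\|X^\top\eta\|^2 \le \tr(M) + CnR^2 \le dR^2 + CnR^2 \lesssim nR^2$, whence $\frac{1}{n}\|X^\top\eta\| \lesssim R/\sqrt{n}$.

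Combining the two bounds and squaring gives $\|\nabla F_i(w; X, y)\|^2 \lesssim \left(\frac{R^2 D}{n} + \frac{R}{\sqrt{n}}\right)^2$ on an event of probability at least $1 - 2\exp(-cn) \ge 1 - 4\exp(-cn)$, matching (conservatively) the claimed bound. The main obstacle is the quadratic-form concentration step: one must verify that $\eta^\top M \eta$ concentrates at scale $nR^2$ rather than the smaller mean scale $dR^2$, and it is precisely the hypothesis $n \ge d$ that keeps the chosen deviation $t \asymp nR^2$ large enough for the sub-exponential tail to deliver an $\exp(-cn)$ guarantee. Everything else reduces to deterministic bookkeeping with the operator-norm bound and the diameter $D$. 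I note that this fixed-design estimate is exactly the intermediate bound underlying Proposition~\ref{prop:bounded}: substituting the random-design operator-norm bound $\|X\|_{op} \lesssim \sqrt{n}$ (so $R \asymp \sqrt{n}$) into $\frac{R^2 D}{n} + \frac{R}{\sqrt{n}}$ recovers the order-$D$ bound there.
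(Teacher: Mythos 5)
Your proposal is correct, but it takes a heavier route for the stochastic term than the paper does. The paper's argument for the corollary is the same one-line chain as in Proposition~\ref{prop:bounded}: it factors the operator norm out first, $\|\nabla F_i(w;X,y)\| \le \frac{1}{n}\|X\|_{op}\bigl(\|X\|_{op}\|w-w^*\| + \|\eta\|\bigr)$, and then only needs the elementary chi-squared concentration $\|\eta\| \lesssim \sqrt{n}$ with probability $1-2\exp(-c_1 n)$, after which substituting $\|X\|_{op}\le R$ and $\|w-w^*\|\le D$ gives $\frac{R^2D}{n}+\frac{R}{\sqrt{n}}$ immediately. You instead split the gradient into the bias term $\frac{1}{n}X^\top X(w-w^*)$ and the noise term $\frac{1}{n}X^\top\eta$, and control the latter by applying Hanson--Wright to the quadratic form $\eta^\top XX^\top\eta$. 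Your bookkeeping is right: to force an $\exp(-cn)$ failure probability out of the sub-exponential tail you must take the deviation $t\asymp nR^2$, which lands you at $\|X^\top\eta\|\lesssim R\sqrt{n}$ and hence the same $R/\sqrt{n}$ contribution. The comparison is instructive: the paper's bound $\|X^\top\eta\|\le\|X\|_{op}\|\eta\|$ is already tight at the $R\sqrt{n}$ scale with the simplest possible tool, so Hanson--Wright buys nothing here; its only payoff is the observation you make yourself, namely that the mean scale of $\|X^\top\eta\|^2$ is $\tr(XX^\top)\le dR^2$, so a sharper $R\sqrt{d}/n$ bound would be available if one were content with confidence $1-\exp(-cd)$ rather than $1-\exp(-cn)$. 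Both arguments are valid; yours is strictly more machinery for the same conclusion, but it correctly identifies where the $n\ge d$ hypothesis and the target confidence level enter.
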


\begin{proof}
The gradient of the loss function is given by
\begin{align*}
   \| \nabla F_i(w;X,y)\| &= \| \frac{1}{n}X^T(Xw-y)\| \\
    & = \frac{1}{n} \| X^T (Xw - Xw^* + \eta) \| \\
    & \leq \frac{1}{n} \|X\|_{op} \|X(w-w^*) + \eta \| \\
    & \leq \frac{1}{n} \|X\|_{op} \left( \|X(w-w^*)\| + \|\eta\| \right) \\
    & \leq \frac{1}{n} \|X\|_{op} \left( \|X\|_{op} \|(w-w^*)\| + \|\eta\| \right),
\end{align*}
where the second line uses the definition of $y$, the third line uses the definition of $\ell_2$ operator norm, and the fourth line uses triangle inequality. Since $X \in \mathbb{R}^{n \times d}$ is a random matrix, with iid standard Gaussian entries, we have (\cite{vershynin2010introduction}), 
\begin{align*}
    \|X\|_{op} \lesssim \sqrt{n} + \sqrt{d} \lesssim \sqrt{n},
\end{align*}
with probability at least $1-2\exp(-c d)$.

Furthermore, since $\eta \in \mathbb{R}^n$ has iid Gaussian entries, from chi-squared concentration (\cite{wainwright}), we obtain
\begin{align*}
    \|\eta\| \lesssim \sqrt{n},
\end{align*}
with probability at least $1-2\exp(-c_1 n)$. 
Furthermore, we have, from definition, $\|w-w^*\| \leq D$, where $D$ is the diameter of $\mathcal{W}$. Putting everything together, we obtain
\begin{align*}
     \| \nabla F_i(w;X,y)\| \lesssim \frac{1}{n} \left( n D + n \right) \lesssim (1 + D),
\end{align*}
with probability at least $1-4\exp(-c n)$.

The proof of the corollary is immediate with the observation that $\|X\|_{op} \leq R$.

\end{proof}

\section{Proof of Theorem~\ref{thm:non_convex_err}}
\label{sec:proof}

We first define an auxiliary sequence defined as:
\begin{align*}
    \widetilde{w}_t = w_t - \frac{1}{|\cM|}\sum_{i \in \cM}\er
\end{align*}
Hence, we obtain
\begin{align*}
    \widetilde{w}_{t+1} = w_{t+1} - \frac{1}{|\cM|}\sum_{i \in \cM}\err.
\end{align*}
For notational simplicity, let us drop the subscript $t$ from $\cU_t$ and $\cT_t$ and denote them as $\cU$ and $\cT$.

Since (we will ensure that the iterates remain in the parameter space and hence we can ignore the projection step),
\begin{align*}
    w_{t+1} = w_t - \frac{1}{|\cU|}\sum_{i \in \cU}\pr,
\end{align*}
we get
\begin{align*}
    \widetilde{w}_{t+1} & =  w_t - \frac{1}{|\cU|}\sum_{i \in \cU}\C(\pr) - \frac{1}{|\cM|}\sum_{i \in \cM}\err \\
    & = w_t - \frac{1}{|\cU|}\left( \sum_{i \in \cM}\C(\pr) + \sum_{i \in \cB \cap \cU}\C(\pr) - \sum_{i \in \cM \cap \cT } \C(\pr) \right) - \frac{1}{|\cM|}\sum_{i \in \cM}\err \\
    & = w_t - \left( \frac{1-\alpha}{1-\beta}\right) \frac{1}{|\cM|}\sum_{i \in \cM} \C(\pr) - \frac{1}{|\cM|}\sum_{i \in \cM}\err - \frac{1}{|\cU|}\sum_{i \in \cB \cap \cU}\C(\pr) + \frac{1}{|\cU|}\sum_{i \in \cM \cap \cT}\C(\pr)
\end{align*}
Since $\C(\pr)+\err = \pr$ for all $i \in \cM$, we obtain
\begin{align*}
   \left( \frac{1-\alpha}{1-\beta}\right) \frac{1}{|\cM|}\sum_{i \in \cM} \C(\pr) + \frac{1}{|\cM|}\sum_{i \in \cM}\err = \frac{1}{|\cM|}\sum_{i \in \cM}\pr +\frac{\beta - \alpha}{1-\beta} \frac{1}{|\cM|} \sum_{i \in \cM}\C(\pr)
\end{align*}
Let us denote
$T_1 = \frac{1}{|\cU|}\sum_{i \in \cB \cap \cU}\C(\pr)$, $T_2 = \frac{1}{|\cU|}\sum_{i \in \cM \cap \cT}\C(\pr)$ and $T_3 = \frac{\beta - \alpha}{1-\beta} \frac{1}{|\cM|} \sum_{i \in \cM}\C(\pr)$. With this, we obtain
\begin{align*}
    \widetilde{w}_{t+1} &= w_t -\frac{1}{|\cM|}\pr -T_1 +T_2 -T_3 \\
    & = \widetilde{w}_t + \frac{1}{|\cM|}\sum_{i \in \cM}\er -\frac{1}{|\cM|}\sum_{i \in \cM} \pr - \tilde{T} \\
    &= \widetilde{w}_t - \gamma \frac{1}{|\cM|}\sum_{i \in \cM}\gr - \tilde{T}
\end{align*}
where $\tilde{T} = T_1 -T_2 + T_3$. Observe that the auxiliary sequence looks similar to a distributed gradient step with a presence of $\tilde{T}$. For the convergence analysis, we will use this relation along with an upper bound on $\|\tilde{T}\|$.
\vspace{6mm}

Using this auxiliary sequence, we first ensure that the iterates of our algorithm remains close to one another. To that end, we have
\begin{align*}
    w_{t+1} - w_t & = \widetilde{w}_{t+1} - \widetilde{w}_t + \frac{1}{|\cM|}\err - \frac{1}{|\cM|}\er \\
    & = - \gamma \frac{1}{|\cM|}\sum_{i \in \cM}\gr - \tilde{T} + \frac{1}{|\cM|}\err - \frac{1}{|\cM|}\er.
\end{align*}
Hence, we obtain
\begin{align*}
    \|w_{t+1} - w_t\| & \leq \|\gamma \frac{1}{|\cM|}\sum_{i \in \cM}\gr\| + \|\tilde{T}\| + \|\frac{1}{|\cM|}\err\| + \|\frac{1}{|\cM|}\er\| \\
    & \leq \gamma \| \frac{1}{|\cM|}\sum_{i \in \cM}\gr - \nabla F(w_t) \| + \gamma \|\nabla F(w_t)\| + \|\tilde{T}\| + \|\frac{1}{|\cM|}\err\| + \|\frac{1}{|\cM|}\er\| \\
    & \leq \gamma \epsilon_2 + \gamma \|\nabla F(w_t)\| + \|\tilde{T}\| + \|\frac{1}{|\cM|}\err\| + \|\frac{1}{|\cM|}\er\|.
\end{align*}
Now, using Lemma~\ref{lem:control_of_error} and Lemma~\ref{lem:control_of_T} in conjunction with Assumption~\ref{asm:size_para} ensures the iterates of Algorithm~\ref{alg:err_feed} stays in the parameter space $\mathcal{W}$.
\vspace{6mm}

\noindent We assume that the global loss function $F(.)$ is $L_F$ smooth. We get
\begin{align*}
    F(\widetilde{w}_{t+1}) \leq F(\widetilde{w}_t) + \langle \nabla F(\widetilde{w}_t), \widetilde{w}_{t+1}- \widetilde{w}_t \rangle + \frac{L_F}{2}\|\widetilde{w}_{t+1}- \widetilde{w}_t\|^2.
\end{align*}
Now, we use the above recursive equation
\begin{align*}
    \widetilde{w}_{t+1} = \widetilde{w}_t - \gamma \frac{1}{|\cM|}\sum_{i \in \cM}\gr - \tilde{T}.
\end{align*}
Substituting, we obtain
\begin{align}
    F(\widetilde{w}_{t+1}) &\leq F(\widetilde{w}_t) - \gamma \langle \nabla F(\widetilde{w}_t), \frac{1}{|\cM|}\sum_{i \in \cM} \gr \rangle - \langle \nabla F(\widetilde{w}_t),\tilde{T}\rangle + \frac{L_F}{2}\| \frac{\gamma}{|\cM|}\sum_{i \in \cM}\gr + \tilde{T} \|^2 \nonumber \\
    & \leq F(\widetilde{w}_t) - \gamma \langle \nabla F(\widetilde{w}_t), \frac{1}{|\cM|}\sum_{i \in \cM} \gr \rangle - \langle \nabla F(\widetilde{w}_t),\tilde{T}\rangle + L_F\gamma^2 \|\frac{1}{|\cM|}\sum_{i \in \cM} \gr\|^2 + L_F \|\tilde{T}\|^2 \label{eqn:smoothness}
\end{align}
In the subsequent calculation, we use the following definition of smoothness:
\begin{align*}
    \| \nabla F(y_1) - \nabla F(y_2)\| \leq L_F \|y_1 - y_2\|
\end{align*}
for all $y_1$ and $y_2 \in \real^d$.

Rewriting the right hand side (R.H.S) of equation~\eqref{eqn:smoothness}, we obtain
\begin{align*}
 R.H.S & =  \underbrace{ F(\widetilde{w}_t) - \gamma \langle \nabla F(\widetilde{w}_t),\nabla F(w_t) \rangle}_{Term-I} + \underbrace{ \gamma \langle \nabla F(\widetilde{w}_t),\nabla F(w_t) - \frac{1}{|\cM|}\sum_{i \in \cM} \gr \rangle}_{Term-II} \\
   & + \underbrace{  \langle \nabla F(w_t), -\tilde{T} \rangle + \langle \nabla F(\widetilde{w}_t) - \nabla F(w_t), -\tilde{T} \rangle}_{Term-III} \\
   & + \underbrace{2L_F\gamma^2 \| \frac{1}{|\cM|}\sum_{i \in \cM} \gr - \nabla F(w_t)\|^2 + 2L_F\gamma^2 \|\nabla F(w_t)\|^2 + L_F\|\tilde{T}\|^2}_{Term-IV}.
\end{align*}
We now control the $4$ terms separately. We start with Term-I.
\paragraph*{Control of Term-I:}
We obtain
\begin{align*}
   \mbox{Term-I}& =F(\widetilde{w}_t) - \gamma \langle \nabla F(w_t), \nabla F(w_t) \rangle - \gamma \langle \nabla F(\widetilde{w}_t) - \nabla F(w_t), \nabla F(w_t) \rangle \\
   & \leq F(\widetilde{w}_t) - \gamma \|\nabla F(w_t) \|^2 + 25 \gamma \| \nabla F(\widetilde{w}_t) - \nabla F(w_t)\|^2 + \frac{\gamma}{100} \|\nabla F(w_t)\|^2,
\end{align*}
where we use Young's inequality ($\langle a,b \rangle \leq \frac{\rho}{2}\|a\|^2 + \frac{1}{2\rho}\|b\|^2$ with $\rho = 50$) in the last inequality. Using the smoothness of $F(.)$, we obtain
\begin{align}
    \mbox{Term-I} \leq F(\widetilde{w}_t) - \gamma \|\nabla F(w_t)\|^2 + \frac{\gamma}{100}\|\nabla F(w_t)\|^2 + 25 \gamma L_F^2 \| \frac{1}{|\cM|}\sum_{i \in \cM}\er \|^2. \label{eqn:term_one}
\end{align}
\paragraph*{Control of Term-II:}
Similarly, for Term-II, we have
\begin{align}
   \mbox{Term-II} &= \gamma \langle \nabla F(\widetilde{w}_t),\nabla F(w_t) - \frac{1}{|\cM|}\sum_{i \in \cM} \gr \rangle \leq 50 \gamma \epsilon_2^2 + \frac{\gamma}{200}\| \nabla F(\widetilde{w}_t) \|^2 \nonumber \\
   & \leq  50 \gamma \epsilon_2^2 + \frac{\gamma}{100}\|\nabla F(w_t)\|^2 + \frac{\gamma L_F^2}{100}\|\frac{1}{|\cM|}\sum_{i \in \cM}\er\|^2 \label{eqn:term_two}.
\end{align}
\paragraph*{Control of Term-III:}
We obtain
\begin{align}
    \mbox{Term-III} & = \langle \nabla F(w_t), -\tilde{T} \rangle + \langle \nabla F(\widetilde{w}_t) - \nabla F(w_t), -\tilde{T} \rangle \nonumber \\
    & \leq \frac{\gamma}{2}\|\nabla F(w_t) \|^2 + \frac{1}{2\gamma}\|\tilde{T}\|^2 + \frac{L_F^2}{2} \|\frac{1}{|\cM|}\sum_{i \in \cM}\er\|^2 +\frac{1}{2}\|\tilde{T}\|^2 \label{eqn:term_three}.
    \end{align}
\paragraph*{Control of Term-IV:}
\begin{align}
    \mbox{Term-IV}& = 2L_F\gamma^2 \| \frac{1}{|\cM|}\sum_{i \in \cM} \gr - \nabla F(w_t)\|^2 + 2L_F\gamma^2 \|\nabla F(w_t)\|^2 + L_F\|\tilde{T}\|^2 \nonumber \\
    & \leq 2L_F \gamma^2 \epsilon_2^2 + 2L_F\gamma^2 \|\nabla F(w_t)\|^2 + L_F \|\tilde{T}\|^2 \label{eqn:term_four}
\end{align}

Combining all $4$ terms, we obtain
\begin{align}
    F(\widetilde{w}_{t+1}) &\leq F(\widetilde{w}_t) - \left(\frac{\gamma}{2} -\frac{\gamma}{50} -2L_F\gamma^2 \right) \|\nabla F(w_t) \|^2 + \left(25\gamma L_F^2 + \frac{\gamma L_F^2}{100} + \frac{L_F^2}{2} \right) \|\frac{1}{|\cM|}\sum_{i \in \cM}\er\|^2 \nonumber \\
    &+ 50\gamma \epsilon_2^2 + 2L_F\gamma^2 \epsilon_2^2 + \left( \frac{1}{2\gamma}+\frac{1}{2}+L_F\right) \|\tilde{T}\|^2 \label{eqn:term_all}
\end{align}
We now control the error sequence and $\|\tilde{T}\|^2$. These will be separate lemmas, but here we write is as a whole.
\paragraph*{Control of error sequence:} 
\begin{lemma}
\label{lem:control_of_error}
For all $i \in \cM$, we have
\begin{align*}
\| e_i(t)\|^2 \leq \frac{3(1-\delta)}{\delta}\gamma^2 \sigma^2
\end{align*}
for all $t \geq 0$.
\end{lemma}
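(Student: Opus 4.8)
The plan is to derive a one-step contraction for the error sequence directly from the compressor definition and then unroll it as a geometric recursion. First I would write out the update rule of Algorithm~\ref{alg:err_feed}: since $e_i(t+1) = p_i(w_t) - \mathcal{Q}(p_i(w_t))$ with $p_i(w_t) = \gamma\nabla F_i(w_t) + e_i(t)$, the defining inequality of a $\delta$-approximate compressor (Definition~\ref{asm:compress}) applied to $x = p_i(w_t)$ gives immediately
\[
\|e_i(t+1)\|^2 \;=\; \|p_i(w_t) - \mathcal{Q}(p_i(w_t))\|^2 \;\leq\; (1-\delta)\,\|\gamma\nabla F_i(w_t) + e_i(t)\|^2.
\]
This is the crucial structural fact; everything else amounts to controlling the cross term between the accumulated error $e_i(t)$ and the fresh scaled gradient $\gamma\nabla F_i(w_t)$.

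Next I would split the right-hand side with Young's inequality (Peter--Paul form) using a free parameter $\rho > 0$,
\[
\|\gamma\nabla F_i(w_t) + e_i(t)\|^2 \;\leq\; (1+\rho)\,\|e_i(t)\|^2 + \Big(1+\tfrac{1}{\rho}\Big)\gamma^2\,\|\nabla F_i(w_t)\|^2,
\]
and then invoke Assumption~\ref{asm:bounded_moment} (applicable precisely because each $w_t$ lies in the set $\{w_j\}_{j=0}^T$ on which the gradient bound is assumed) to replace $\|\nabla F_i(w_t)\|^2$ by $\sigma^2$. This produces a scalar affine recursion
\[
\|e_i(t+1)\|^2 \;\leq\; (1-\delta)(1+\rho)\,\|e_i(t)\|^2 + (1-\delta)\Big(1+\tfrac{1}{\rho}\Big)\gamma^2\sigma^2 .
\]
The key design choice is to take $\rho$ of order $\delta$ so that the multiplicative factor $a \defn (1-\delta)(1+\rho)$ stays strictly below $1$, turning the recursion into a genuine contraction while keeping the additive constant $b \defn (1-\delta)(1+\tfrac{1}{\rho})\gamma^2\sigma^2$ under control.

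Finally, using $a<1$ together with the initialization $e_i(0)=0$, I would unroll the recursion and bound the geometric series: $\|e_i(t)\|^2 \le b\sum_{s=0}^{t-1} a^s \le \frac{b}{1-a}$, which holds uniformly in $t$. Substituting the chosen $\rho$ and simplifying the ratio $\tfrac{b}{1-a}$ then yields the claimed $\tfrac{3(1-\delta)}{\delta}\gamma^2\sigma^2$ bound after absorbing numerical prefactors. I expect the main obstacle to be purely this final bookkeeping: tuning $\rho$ to guarantee contraction \emph{and} land on the advertised constant, and confirming that the finite partial sum is dominated by its infinite limit for every $t$ (this is exactly where $e_i(0)=0$ is essential, since a nonzero initial error would leave a transient $a^t\|e_i(0)\|^2$ term). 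Notably, no concentration or covering arguments enter here; the lemma is a deterministic, per-machine estimate driven entirely by the compressor contraction and the uniform gradient bound of Assumption~\ref{asm:bounded_moment}.
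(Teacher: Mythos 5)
Your strategy is exactly the one the paper uses: the paper's proof of Lemma~\ref{lem:control_of_error} writes the same one-step inequality $\|e_i(t+1)\|^2 \le (1-\delta)\|\gamma\nabla F_i(w_t)+e_i(t)\|^2$ and then simply defers to ``the technique of \cite[Lemma 3]{errorfeed}'', which is precisely the Young's-inequality split and geometric unrolling you spell out. So in terms of method there is nothing missing, and your observation that the lemma is deterministic and needs no concentration is correct.

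However, the final bookkeeping you flag as the expected obstacle is a genuine one, and it does not close the way you (or the paper) assert. The contraction requires $(1-\delta)(1+\rho)<1$, i.e.\ $\rho<\delta/(1-\delta)$, and with the standard choice $\rho=\delta/\bigl(2(1-\delta)\bigr)$ the geometric series gives
\begin{align*}
\|e_i(t)\|^2 \;\le\; \frac{(1-\delta)(1+1/\rho)}{1-(1-\delta)(1+\rho)}\,\gamma^2\sigma^2 \;=\; \frac{2(1-\delta)(2-\delta)}{\delta^2}\,\gamma^2\sigma^2 \;\le\; \frac{4(1-\delta)}{\delta^2}\,\gamma^2\sigma^2,
\end{align*}
which is the constant actually proved in \cite[Lemma 3]{errorfeed}. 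This differs from the advertised $\tfrac{3(1-\delta)}{\delta}\gamma^2\sigma^2$ by a factor of order $1/\delta$, which cannot be absorbed into a numerical prefactor; for small $\delta$ no admissible $\rho$ makes the ratio $b/(1-a)$ scale like $1/\delta$ rather than $1/\delta^2$. The paper's own write-up reaches the stated constant by plugging $\eta=2$ into an intermediate expression, but with $\eta=2$ the recursion only contracts when $3(1-\delta)<1$, i.e.\ $\delta>2/3$, and even then the true steady state is $\tfrac{(3/2)(1-\delta)}{3\delta-2}\gamma^2\sigma^2$, not $\tfrac{3(1-\delta)}{\delta}\gamma^2\sigma^2$. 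So your plan is sound but will deliver the bound with $\delta^2$ in the denominator; to defend the lemma as literally stated you would need either to restrict $\delta$ away from $2/3$ or to correct the constant (which then propagates into $\Delta_1,\Delta_2,\Delta_3$ and Assumption~\ref{asm:size_para_err}).
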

\begin{proof}
For machine $i \in \cM$, we have
\begin{align*}
    \|\err\|^2 = \|\C(\pr) -\pr\|^2 \leq (1-\delta)\|\pr\|^2=(1-\delta)\| \gamma \gr + \er\|^2
\end{align*}
Using technique similar to the proof of \cite[Lemma 3]{errorfeed} and using $\| \gr\|^2 \leq \sigma^2$, we obtain
\begin{align*}
    \|\err\|^2 \leq \frac{2(1-\delta)(1+1/\eta)}{\delta}\gamma^2 \sigma^2
\end{align*}
where $\eta > 0$. Substituting $\eta = 2$ implies
\begin{align}
    \|\err\|^2 \leq \frac{3(1-\delta)}{\delta}\gamma^2 \sigma^2
    \label{eqn:error}
\end{align}
for all $i \in \cM$. This also implies
\begin{align*}
    \max_{i \in \cM}\|\err\|^2 \leq \frac{3(1-\delta)}{\delta}\gamma^2 \sigma^2.
\end{align*}
\end{proof}
\paragraph*{Control of $\|\tilde{T}\|^2$:}

\begin{lemma}
\label{lem:control_of_T}
We obtain
\begin{align*}
\|\tilde{T}\|^2 \leq \frac{9(1+\sqrt{1-\delta})^2 \gamma^2}{(1-\beta)^2} \left[\alpha^2 + \beta^2 + (\beta - \alpha)^2 \right]\left(  \epsilon_1^2 +  \|\nabla F(w_t)\|^2 + \frac{3(1-\delta)}{\delta} \sigma^2 \right)
\end{align*}
with probability exceeding $1-\frac{2(1-\alpha)md}{(1+n \hat{L} D)^d}$.
\end{lemma}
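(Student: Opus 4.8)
The plan is to bound $\|\tilde T\|$ through the decomposition $\tilde T = T_1 - T_2 + T_3$ and then square. By the triangle inequality followed by the elementary inequality $(a+b+c)^2 \le 3(a^2+b^2+c^2)$, it suffices to control $\|T_1\|$, $\|T_2\|$ and $\|T_3\|$ separately. The three cardinality factors $\alpha$, $\beta$ and $\beta-\alpha$ that arise naturally from these three sums are precisely what will assemble into the $[\alpha^2+\beta^2+(\beta-\alpha)^2]$ structure once the squares are taken.

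The key step is to reduce every summand $\mathcal{Q}(p_i(w_t))$ appearing in $\tilde T$ to the quantity $M_t := \max_{j\in\cM}\|\mathcal{Q}(p_j(w_t))\|$ over good machines. For $T_2$ and $T_3$ this is immediate, since those sums run only over non-Byzantine indices. The delicate term is $T_1$, which sums over Byzantine machines $i \in \cB\cap\cU$ that survived trimming. Here I would invoke the sorting rule of Algorithm~\ref{alg:err_feed}: the central machine itself computes $\|\mathcal{Q}(p_i(w_t))\|$ and retains only the $(1-\beta)$ fraction with smallest values. Since $\beta>\alpha$, the trimmed set $\cT$ contains at least $(\beta-\alpha)m>0$ good machines, so some $j^*\in\cM\cap\cT$ exists; because every retained index has norm no larger than any trimmed index, $\|\mathcal{Q}(p_i(w_t))\| \le \|\mathcal{Q}(p_{j^*}(w_t))\| \le M_t$ for each surviving Byzantine $i$. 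Combined with $|\cB\cap\cU|\le\alpha m$, $|\cM\cap\cT|\le\beta m$ and $|\cU|=(1-\beta)m$, this gives $\|T_1\|\le\frac{\alpha}{1-\beta}M_t$, $\|T_2\|\le\frac{\beta}{1-\beta}M_t$ and $\|T_3\|\le\frac{\beta-\alpha}{1-\beta}M_t$.

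It then remains to bound $M_t$. For a good machine the $\delta$-compressor property yields $\|\mathcal{Q}(p_j(w_t))\|\le(1+\sqrt{1-\delta})\|p_j(w_t)\|$, which produces the $(1+\sqrt{1-\delta})$ factor. Writing $p_j(w_t)=\gamma\nabla F_j(w_t)+e_j(t)$ and applying the triangle inequality, I would bound $\|\nabla F_j(w_t)\|\le\|\nabla F(w_t)\|+\epsilon_1$ via Lemma~\ref{lem:ind_control} (the sole source of the stated failure probability $\frac{2(1-\alpha)md}{(1+n\hat L D)^d}$) and $\|e_j(t)\|\le\gamma\sqrt{3(1-\delta)/\delta}\,\sigma$ via Lemma~\ref{lem:control_of_error}. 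A second use of $(a+b+c)^2\le3(a^2+b^2+c^2)$ then gives $M_t^2\le 3(1+\sqrt{1-\delta})^2\gamma^2(\epsilon_1^2+\|\nabla F(w_t)\|^2+\frac{3(1-\delta)}{\delta}\sigma^2)$. The two factors of $3$ from the two quadratic-expansion steps multiply to the constant $9$, and collecting terms yields the claimed bound.

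The main obstacle is the trimming argument for $T_1$: one must argue that an adversarial machine which escapes trimming cannot carry an arbitrarily large compressed-gradient norm. This is exactly why the algorithm sorts by $\|\mathcal{Q}(p_i(w_t))\|$ as computed at the center (rather than a value a worker reports), and why the condition $\beta>\alpha$ is essential, since it forces a good machine into the trimmed set to serve as the dominating reference $j^*$. The remaining manipulations are routine applications of the two auxiliary lemmas and the compressor inequality.
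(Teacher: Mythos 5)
Your proposal is correct and follows essentially the same route as the paper's proof: the same decomposition into $T_1,T_2,T_3$, the same reduction of each piece to $\max_{j\in\cM}\|\mathcal{Q}(p_j(w_t))\|$ via the sorting rule (with $\beta>\alpha$ guaranteeing a trimmed good machine as the dominating reference), the same use of the compressor bound $\|\mathcal{Q}(x)\|\le(1+\sqrt{1-\delta})\|x\|$ together with Lemmas~\ref{lem:ind_control} and~\ref{lem:control_of_error}, and the same two quadratic-expansion steps producing the constant $9$. If anything, your justification of the $T_1$ bound is more explicit than the paper's one-line appeal to the sorting order.
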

\begin{proof}
We have
\begin{align*}
    \|\tilde{T}\| = \|T_1 -T_2 +T_3\| \leq \|T_1\| + \|T_2\| + \|T_3\|.
\end{align*}
We control these $3$ terms separately. We obtain
\begin{align*}
    \|T_1\| = \|\frac{1}{|\cU|}\sum_{i \in \cB \cap \cU} \C(\pr)\| \leq \frac{1}{(1-\beta)m} \sum_{i \in \cB \cap \cU} \|\C(\pr)\|.
\end{align*}
Since the worker machines are sorted according to $\|\C(\pr)\|$ (the central machine only gets to see $\C(\pr)$, and so the most natural metric to sort is $\|\C(\pr)\|$), we obtain
\begin{align*}
    \|T_1\| &\leq \frac{\alpha m}{(1-\beta)m} \max_{i \in \cM}\|\C(\pr)\| \\
    & \leq (1+\sqrt{1-\delta}) \frac{\alpha m}{(1-\beta)m} \max_{i \in \cM}\|\pr\| \\
    & \leq (1+\sqrt{1-\delta}) \frac{\alpha m}{(1-\beta)m} \max_{i \in \cM}\|\gamma \gr + \er\| \\
    & \leq (1+\sqrt{1-\delta}) \frac{\alpha }{(1-\beta)} \gamma \max_{i \in \cM}\|\gr - \nabla F(w_t)\| + (1+\sqrt{1-\delta}) \frac{\alpha }{(1-\beta)} \gamma \|\nabla F(w_t)\| \\
    & + (1+\sqrt{1-\delta}) \frac{\alpha }{(1-\beta)}  \max_{i \in \cM}\|\er \| \\
    & \leq (1+\sqrt{1-\delta}) \frac{\alpha \gamma \epsilon_1 }{(1-\beta)}  + (1+\sqrt{1-\delta}) \frac{\alpha \gamma}{(1-\beta)}  \|\nabla F(w_t)\| \\
    & + (1+\sqrt{1-\delta}) \frac{\alpha \gamma \sigma }{(1-\beta)}  \sqrt{\frac{3(1-\delta)}{\delta}}.
\end{align*}
Hence,
\begin{align*}
    \|T_1\|^2 \leq 3 \frac{(1+\sqrt{1-\delta})^2}{(1-\beta)^2} \alpha^2 \gamma^2 \left(  \epsilon_1^2 +  \|\nabla F(w_t)\|^2 + \frac{3(1-\delta)}{\delta} \sigma^2 \right).
\end{align*}
Similarly, we obtain,
\begin{align*}
    \|T_2\|^2 \leq 3 \frac{(1+\sqrt{1-\delta})^2}{(1-\beta)^2} \beta^2 \gamma^2 \left(  \epsilon_1^2 +  \|\nabla F(w_t)\|^2 + \frac{3(1-\delta)}{\delta} \sigma^2 \right).
\end{align*}
For $T_3$, we have
\begin{align*}
    \|T_3\| & = \frac{\beta - \alpha}{1-\beta} \| \frac{1}{|\cM|} \sum_{i \in \cM}\C(\pr)\| \leq \frac{\beta - \alpha}{1-\beta} \frac{1}{|\cM|}\sum_{i \in \cM} (1+\sqrt{1-\delta}) \|\pr\| \\
    & \leq (1+\sqrt{1-\delta})\frac{\beta - \alpha}{1-\beta}\max_{i \in \cM}\|\pr\|
\end{align*}
Using the previous calculation, we obtain
\begin{align*}
    \|T_3\|  & \leq (1+\sqrt{1-\delta}) \frac{(\beta -\alpha) \gamma \epsilon_1 }{(1-\beta)}  + (1+\sqrt{1-\delta}) \frac{(\beta -\alpha) \gamma}{(1-\beta)}  \|\nabla F(w_t)\| \\
    & + (1+\sqrt{1-\delta}) \frac{(\beta -\alpha) \gamma \sigma }{(1-\beta)}  \sqrt{\frac{3(1-\delta)}{\delta}}, 
\end{align*}
and as a result,
\begin{align*}
    \|T_3\|^2 \leq 3 \frac{(1+\sqrt{1-\delta})^2}{(1-\beta)^2} (\beta - \alpha)^2 \gamma^2 \left(  \epsilon_1^2 +  \|\nabla F(w_t)\|^2 + \frac{3(1-\delta)}{\delta} \sigma^2 \right).
\end{align*}
Combining the above $3$ terms, we obtain
\begin{align*}
    \|\tilde{T}\|^2 &\leq 3\|T_1\|^2 + 3\|T_2\|^2 + 3\|T_3\|^2 \\
    & \leq \frac{9(1+\sqrt{1-\delta})^2 \gamma^2}{(1-\beta)^2} \left[\alpha^2 + \beta^2 + (\beta - \alpha)^2 \right]\left(  \epsilon_1^2 +  \|\nabla F(w_t)\|^2 + \frac{3(1-\delta)}{\delta} \sigma^2 \right).
\end{align*}
\end{proof}

\paragraph*{Back to the convergence of $F(.)$:} We use the above bound on $\|\tilde{T}\|^2$ and Lemma~\ref{lem:control_of_error} to conclude the proof of the main convergence result. Recall equation~\eqref{eqn:term_all}:
\begin{align}
    F(\widetilde{w}_{t+1}) &\leq F(\widetilde{w}_t) - \left(\frac{\gamma}{2} -\frac{\gamma}{50} -2L_F\gamma^2 \right) \|\nabla F(w_t) \|^2 + \left(25\gamma L_F^2 + \frac{\gamma L_F^2}{100} + \frac{L_F^2}{2} \right) \|\frac{1}{|\cM|}\sum_{i \in \cM}\er\|^2 \nonumber \\
    &+ 50\gamma \epsilon_2^2 + 2L_F\gamma^2 \epsilon_2^2 + \left( \frac{1}{2\gamma}+\frac{1}{2}+L_F\right) \|\tilde{T}\|^2 \nonumber
\end{align}
First, let us compute the term associated with the error sequence. Note that (from Cauchy-Schwartz inequality)
\begin{align*}
    \|\frac{1}{|\cM|}\sum_{i \in \cM}\er\|^2 \leq \frac{1}{|\cM|}\sum_{i \in \cM}\|\er\|^2,
\end{align*}
and from equation~\eqref{eqn:error}, we obtain
\begin{align*}
  \|\frac{1}{|\cM|}\sum_{i \in \cM}\er\|^2 \leq  \frac{3(1-\delta)}{\delta}\gamma^2 \sigma^2,
\end{align*}
and so the error term is upper bounded by
\begin{align*}
    \left(\frac{\gamma^2 L_F^2 }{2} + \frac{\gamma^3 L_F^2}{100} + 25 \gamma^3 L_F^2 \right) \frac{3(1-\delta)\sigma^2}{\delta}.
\end{align*}
We now substitute the expression for $\|\tilde{T}\|^2$. We obtain
\begin{align*}
    \left(\frac{1}{2\gamma} +\frac{1}{2} + L_F \right) \|\tilde{T}\|^2 = \frac{1}{2\gamma}\|\tilde{T}\|^2 + \left(\frac{1}{2}+L_F\right) \|\tilde{T}\|^2.
\end{align*}
The first term in the above equation is
\begin{align*}
    \frac{1}{2\gamma}\|\tilde{T}\|^2 & \leq \frac{9\gamma (1+\sqrt{1-\delta})^2 }{2(1-\beta)^2} \left[\alpha^2 + \beta^2 + (\beta - \alpha)^2 \right]\left(  \epsilon_1^2 +  \|\nabla F(w_t)\|^2 + \frac{3(1-\delta)}{\delta} \sigma^2 \right) \\
    & \leq \frac{9\gamma (1+\sqrt{1-\delta})^2 }{2(1-\beta)^2} \left[\alpha^2 + \beta^2 + (\beta - \alpha)^2 \right] \|\nabla F(w_t)\|^2 \\
    &+ \frac{9\gamma (1+\sqrt{1-\delta})^2 }{2(1-\beta)^2} \left[\alpha^2 + \beta^2 + (\beta - \alpha)^2 \right]\left(  \epsilon_1^2 + \frac{3(1-\delta)}{\delta} \sigma^2 \right),
\end{align*}
and the second term is
\begin{align*}
    \left(\frac{1}{2}+L_F\right)\|\tilde{T}\|^2 & \leq \left(\frac{1}{2}+L_F\right) \frac{9\gamma^2 (1+\sqrt{1-\delta})^2 }{(1-\beta)^2} \left[\alpha^2 + \beta^2 + (\beta - \alpha)^2 \right]\left(  \epsilon_1^2 +  \|\nabla F(w_t)\|^2 + \frac{3(1-\delta)}{\delta} \sigma^2 \right) \\
    & \leq \left(\frac{1}{2}+L_F\right) \frac{9\gamma^2 (1+\sqrt{1-\delta})^2 }{(1-\beta)^2} \left[\alpha^2 + \beta^2 + (\beta - \alpha)^2 \right] \|\nabla F(w_t)\|^2 \\
    & + \left(\frac{1}{2}+L_F\right) \frac{9\gamma^2 (1+\sqrt{1-\delta})^2 }{(1-\beta)^2} \left[\alpha^2 + \beta^2 + (\beta - \alpha)^2 \right] \left( \epsilon_1^2 + \frac{3(1-\delta)}{\delta} \sigma^2 \right)
\end{align*}
Collecting all the above terms, the coefficient of $- \gamma \|\nabla F(w_t)\|^2$ is given by
\begin{align*}
    \frac{1}{2}-\frac{1}{50}- 2L_F\gamma - \frac{9 (1+\sqrt{1-\delta})^2 }{2(1-\beta)^2} \left[\alpha^2 + \beta^2 + (\beta - \alpha)^2 \right] - (\frac{1}{2}+L_F) \frac{9\gamma (1+\sqrt{1-\delta})^2 }{(1-\beta)^2} \left[\alpha^2 + \beta^2 + (\beta - \alpha)^2 \right].
\end{align*}
Provided we select a sufficiently small $\gamma$, a little algebra shows that if
\begin{align*}
    \frac{9 (1+\sqrt{1-\delta})^2 }{2(1-\beta)^2} \left[\alpha^2 + \beta^2 + (\beta - \alpha)^2 \right] < \left(\frac{1}{2}-\frac{1}{50}\right),
\end{align*}
the coefficient of $\|\nabla F(w_t)\|^2$ becomes $-c \gamma$, where $c > 0$ is a universal constant. Considering the other terms and rewriting equation~\eqref{eqn:term_all}, we obtain
\begin{align*}
     F(\widetilde{w}_{t+1}) &\leq F(\widetilde{w}_t) - c \gamma \|\nabla F(w_t) \|^2 +  \left(\frac{\gamma^2 L_F^2 }{2} + \frac{\gamma^3 L_F^2}{100} + 25 \gamma^3 L_F^2 \right) \frac{3(1-\delta)\sigma^2}{\delta}+ 50\gamma \epsilon_2^2 \\
     & + 2L_F\gamma^2 \epsilon_2^2  +  \frac{9\gamma (1+\sqrt{1-\delta})^2 }{2(1-\beta)^2} \left[\alpha^2 + \beta^2 + (\beta - \alpha)^2 \right]\left(  \epsilon_1^2 + \frac{3(1-\delta)}{\delta} \sigma^2 \right)  \\
    & + \left(\frac{1}{2}+L_F\right) \frac{9\gamma^2 (1+\sqrt{1-\delta})^2 }{(1-\beta)^2} \left[\alpha^2 + \beta^2 + (\beta - \alpha)^2 \right] \left( \epsilon_1^2 + \frac{3(1-\delta)}{\delta} \sigma^2 \right). 
\end{align*}
Continuing, we get
\begin{align*}
    \frac{1}{T+1}\sum_{t=0}^T \|\nabla F(w_t)\|^2 &\leq \frac{1}{c \gamma(T+1)}\sum_{t=0}^T (F(\widetilde{w}_t) - F(\widetilde{w}_{t+1}))+  \left(\frac{\gamma L_F^2 }{2} + \frac{\gamma^2 L_F^2}{100} + 25 \gamma^2 L_F^2 \right) \frac{3(1-\delta)\sigma^2}{c\delta}+ \frac{50}{c} \epsilon_2^2 \\
    & + \frac{2L_F\gamma \epsilon_2^2}{c}  +  \frac{9 (1+\sqrt{1-\delta})^2 }{2c(1-\beta)^2} \left[\alpha^2 + \beta^2 + (\beta - \alpha)^2 \right]\left(  \epsilon_1^2 + \frac{3(1-\delta)}{\delta} \sigma^2 \right)  \\
    & + \left(\frac{1}{2}+L_F\right) \frac{9\gamma (1+\sqrt{1-\delta})^2 }{c(1-\beta)^2} \left[\alpha^2 + \beta^2 + (\beta - \alpha)^2 \right] \left( \epsilon_1^2 + \frac{3(1-\delta)}{\delta} \sigma^2 \right).
\end{align*}
Using the telescoping sum, we obtain
\begin{align*}
    &\min_{t=0,\ldots,T}\|\nabla F(w_t)\|^2 \leq \frac{F(w_0)-F^*}{c\gamma(T+1)} + \left[ \frac{9 (1+\sqrt{1-\delta})^2 }{2c(1-\beta)^2} \left[\alpha^2 + \beta^2 + (\beta - \alpha)^2 \right]\left(  \epsilon_1^2 + \frac{3(1-\delta)}{\delta} \sigma^2 \right) + \frac{50}{c}\epsilon_2^2 \right] \\
    & + \gamma \left[ \frac{ L_F^2 }{2}  \frac{3(1-\delta)\sigma^2}{c\delta} + \frac{2L_F \epsilon_2^2}{c} + \left(\frac{1}{2}+L_F\right) \frac{9 (1+\sqrt{1-\delta})^2 }{c(1-\beta)^2} \left[\alpha^2 + \beta^2 + (\beta - \alpha)^2 \right] \left( \epsilon_1^2 + \frac{3(1-\delta)}{\delta} \sigma^2 \right) \right] \\
    &+ \gamma^2 \left[ (\frac{ L_F^2}{100} + 25  L_F^2 ) \frac{3(1-\delta)\sigma^2}{c\delta} \right]
\end{align*}
Simplifying the above expression, we write
\begin{align*}
    \min_{t=0,\ldots,T}\|\nabla F(w_t)\|^2 &\leq \frac{F(w_0)-F^*}{c\gamma(T+1)} + \Delta_1 + \gamma \Delta_2 + \gamma^2 \Delta_3,
\end{align*}
where the definition of $\Delta_1,\Delta_2$ and $\Delta_3$ are immediate from the above expression.

\end{document}